\documentclass{article}
\usepackage{natbib}
\usepackage{amsmath,amsfonts,amsthm,amssymb}
\usepackage{xspace}
\usepackage{dsfont}
\usepackage{enumitem}

\usepackage{algorithmicx}
\usepackage{algorithm}
\usepackage{algpseudocode}

\newcommand{\bilinear}{{\sc Bilinear}\xspace}
\newcommand{\X}{\ensuremath{\mathcal{X}}\xspace}
\newcommand{\Y}{\ensuremath{\mathcal{Y}}\xspace}
\newcommand{\pdcoea}{PDCoEA\xspace}

\newcommand{\onemax}{\text{\sc OneMax}\xspace}

\usepackage{mathrsfs}
\newcommand{\filtuc}[1]        {\mathscr{F}_{#1}}
\newcommand{\filt}[1]        {\filtuc{#1}}

\newcommand{\prob}[1]{\Pr\left(#1\right)}

\newcommand{\expect}[1]{\mathbb{E}\left[#1\right]}

\newcommand{\expectt}[1]{\mathbb{E}_t\left[#1\right]}
\newcommand{\probt}[1]{\Pr{}_t\left(#1\right)}

\newtheorem{theorem}             {Theorem}
\newtheorem{lemma}      [theorem]{Lemma}
\newtheorem{corollary}  [theorem]{Corollary}

\newtheorem{definition} {Definition}

\newcommand{\indfe}[1]{\ensuremath{\mathds{1}_{#1}}} \newcommand{\indf}[1]{\indfe{\{#1\}}}

\DeclareMathOperator{\poly}{poly}
\DeclareMathOperator{\bin}{Bin}
\DeclareMathOperator{\unif}{Unif}

\newcommand{\Psel}[1]{\ensuremath{p_{\text{sel}}({#1})}}
\newcommand{\Punif}[1]{\ensuremath{p(#1)}}
\newcommand{\D}{\ensuremath{\mathcal{D}}}

\newcommand{\select}{\text{\tt select}\xspace}

\newcommand{\pmut}{\ensuremath{p_\mathrm{mut}}\xspace}

\newcommand{\WLOG}{w.l.o.g.\xspace}

\newcommand{\ab}{\hspace{0.125em}}                        \newcommand{\ie}{\hbox{i.\ab e.}\xspace}                  \newcommand{\eg}{\hbox{e.\ab g.}\xspace}

\usepackage{hyperref}
\usepackage{pdfpages}
\usepackage{bbm}
\usepackage{balance}

\title{Runtime
  Analysis of Competitive co-Evolutionary Algorithms for
  Maximin Optimisation of a Bilinear Function\footnote{This paper has been
    published as \cite{lehre_bilinear_algorithmica}, apart from the
    additional Theorem~\ref{thm:level-based-precise-lambda}.}}
\author{Per Kristian Lehre\\
  School of Computer Science\\
  University of Birmingham\\
  United Kingdom \\
  \texttt{p.k.lehre@bham.ac.uk}
}

\begin{document}
\maketitle

\begin{abstract}
Co-evolutionary algorithms have a wide range of applications, such
as in hardware design, evolution of strategies for board games, and
patching software bugs. However, these algorithms are poorly
understood and applications are often limited by pathological
behaviour, such as loss of gradient, relative over-generalisation,
and mediocre objective stasis. It is an open challenge to develop a
theory that can predict when co-evolutionary algorithms find
solutions efficiently and reliable.

This paper provides a first step in developing runtime analysis for
population-based competitive co-evolutionary algorithms. We provide
a mathematical framework for describing and reasoning about the
performance of co-evolutionary processes. To illustrate the framework,
we introduce a population-based co-evolutionary algorithm called \pdcoea, 
and prove that it obtains a solution to a bilinear maximin
optimisation problem in expected polynomial time. Finally,
we describe settings where \pdcoea needs
exponential time with overwhelmingly high probability to obtain a
solution.
\end{abstract}

\maketitle

\section{Introduction}

Many real-world optimisation problems feature a strategic aspect,
where the solution quality depends on the actions of other --
potentially adversarial -- players. There is a need for adversarial
optimisation algorithms that operate under realistic assumptions.
Departing from a traditional game theoretic setting, we assume two
classes of players, choosing strategies from ``strategy spaces''
$\mathcal{X}$ and $\mathcal{Y}$ respectively. The objectives of the
players are to maximise their individual ``payoffs'' as given by
payoff functions
$f,g:\mathcal{X}\times \mathcal{Y}\rightarrow \mathbb{R}$.

A fundamental algorithmic assumption is that there is insufficient
computational resources available to exhaustively explore the strategy
spaces $\mathcal{X}$ and $\mathcal{Y}$. In a typical real world
scenario, a strategy could consist of making $n$ binary
decisions. This leads to exponentially large and discrete strategy
spaces $\mathcal{X}=\mathcal{Y}=\{0,1\}^n$.  Furthermore, we can
assume that the players do not have access to or the capability to
understand the payoff functions. However, it is reasonable to assume
that players can make repeated queries to the payoff function
\cite{Fearnley2016}.  Together, these assumptions render many existing
approaches impractical, e.g., Lemke-Howson, best response dynamics,
mathematical programming, or gradient descent-ascent.

Co-evolutionary algorithms (CoEAs) (see \citep{popovici2012} for a survey)
could have a potential in adversarial optimisation, partly because
they make less strict assumptions than the classical methods. Two
populations are co-evolved (say one in $\X$, the other in $\Y$), where
individuals are selected for reproduction if they interact
successfully with individuals in the opposite population (e.g. as
determined by the payoff functions $f,g$). The hoped for outcome is
that an artificial ``arms race'' emerges between the populations,
leading to increasingly sophisticated solutions.  In fact, the
literature describe several successful applications, including design
of sorting networks \cite{hillis_co-evolving_1990}, software patching
\cite{arcuri_novel_2008}, and problems arising in cyber security
\cite{oreilly_adversarial_2020}.

It is common to separate co-evolution into co-operative and
competitive co-evolution. Co-operative co-evolution is attractive when
the problem domain allows a natural division into sub-components. For
example, the design of a robot can be separated into its morphology
and its control \cite{pollack_three_2001}.  A cooperative
co-evolutionary algorithm works by evolving separate ``species'',
where each species is responsible for optimising one sub-component of
the overall solution. To evaluate the fitness of a sub-component, it
is combined with sub-components from the other species to form a
complete solution. Ideally, there will be a selective pressure for the
species to cooperate, so that they together produce good overall
designs \cite{potter_cooperative_2000}.

The behaviour of CoEAs can be abstruse, where pathological population
behaviour such as loss of gradient, focusing on the wrong things, and
relativism \cite{watson_coevolutionary_2001} prevent effective
applications. It has been a long-standing open problem to develop a
theory that can explain and predict the performance of co-evolutionary
algorithms (see e.g. Section 4.2.2 in \cite{popovici2012}), notably
runtime analysis. Runtime analysis of EAs
\cite{RuntimeAnalysis2020Book} has provided mathematically rigorous
statements about the runtime distribution of evolutionary algorithms,
notably how the distribution depends on characteristics of the fitness
landscape and the parameter settings of the algorithm. Following from
the publication of the conference version of this paper, several other
results on the runtime of competitive co-evolutionary algorithms have
appeared considering variants of the \bilinear game introduced in
Section \ref{sec:probl-maxim-optim}. Hevia, Lehre and Lin analysed the runtime of Randomised
Local Search CoEA (RLS-PD) on \bilinear \cite{RLSPD}. Hevia and Lehre
analysed the runtime of $(1,\lambda)$ CoEA on a lattice variant of
\bilinear \cite{LatticeBilinear}.

The only rigorous runtime analysis of co-evolution the author is aware
of focuses on co-operative co-evolution. In a pioneer study, Jansen
and Wiegand considered the common assumption that co-operative
co-evolution allows a speedup for \emph{separable} problems
\cite{jansen_cooperative_2004}. They compared rigorously the runtime
of the co-operative co-evolutionary (1+1) Evolutionary Algorithm (CC
(1+1) EA) with the classical (1+1) EA.
Both algorithms follow the same template: They keep the single best
solution seen so far, and iteratively produce new candidate solution
by ``mutating'' the best solution. However, the algorithms use
different mutation operators. The CC (1+1) EA restricts mutation to
the bit-positions within one out of $k$ blocks in each iteration. The
choice of the current block alternates deterministically in each
iteration, such that in $k$ iterations, every block has been active
once.
The main conclusion from their analysis is that problem separability
is not a sufficient criterion to determine whether the CC (1+1) EA
performs better than the (1+1) EA. In particular, there are separable
problems where the (1+1) EA outperforms the CC (1+1) EA, and there are
inseparable problems where the converse holds. What the authors find
is that CC (1+1) EA is advantageous when the problem separability
matches the partitioning in the algorithm, and there is a benefit from
increased mutation rates allowed by the CC (1+1) EA.

Much work remains to develop runtime analysis of co-evolution.
Co-operative co-evolution can be seen as a particular approach to
traditional optimisation, where the goal is to maximise a given
objective function. In contrast, competitive co-evolutionary
algorithms are employed for a wide range of solution concepts
\cite{Ficici2004PhD}. It is unclear to what degree results about
co-operative CoEAs can provide insights about competitive
CoEAs. Finally, the existing runtime analysis considers the CC (1+1)
EA which does not have a population. However, it is particularly
important to study co-evolutionary population dynamics to understand
the pathologies of existing CoEAs.

This paper makes the following contributions:
Section~\ref{sec:co-evolutionary-framework} introduces a generic
mathematical framework to describe a large class of co-evolutionary
processes and defines a notion of ``runtime'' in the
context of generic co-evolutionary processes. We then discuss how the population-dynamics of these
processes can be described by a stochastic process.
Section~\ref{sec:level-based-theorem} presents an
analytical tool (a co-evolutionary level-based theorem) which can be
used to derive upper bounds on the expected runtime of co-evolutionary
algorithms.
Section~\ref{sec:probl-maxim-optim} specialises the problem setting to
maximin-optimisation, and introduces a theoretical benchmark problem
\bilinear. Section~\ref{sec:co-evol-algor}
introduces the algorithm \pdcoea which is a particular co-evolutionary
process tailored to maximin-optimisation. We then analyse the runtime
of \pdcoea on \bilinear using the level-based theorem, showing that
there are settings where the algorithm obtains a solution in
polynomial time. Since the publication of the conference version of
this paper, the \pdcoea has been applied to a cyber-security domain \cite{PDCoEADefendIT}.
In Section~\ref{sec:co-evol-error}, we
demonstrate that the \pdcoea possesses an ``error threshold'', i.e., a
mutation rate above which the runtime is exponential for any problem.
Finally, the appendix contains some technical results which have been
relocated from the main text to increase readability.

\subsection{Preliminaries}

For any natural number $n\in\mathbb{N}$, we define
$[n]:=\{1,2,\ldots, n\}$ and $[0..n]:=\{0\}\cup [n]$. For a filtration
$(\filt{t})_{t\in\mathbb{N}}$ and a random variable $X$ we use the
shorthand notation $\expectt{X} := \expect{X\mid\filt{t}}$. A random
variable $X$ is said to \emph{stochastically dominate} a random variable $Y$,
denoted $X\succeq Y$,
if and only if $\prob{Y\leq z}\geq \prob{X\leq z}$ for all $z\in\mathbb{R}$.
The
Hamming distance between two bitstrings $x$ and $y$ is denoted $H(x,y)$.
For any bitstring $z\in\{0,1\}^n$, $\|z\|:=\sum_{i=1}^n z_i$,
denotes the number of 1-bits in $z$.

\section{Co-Evolutionary Algorithms}\label{sec:co-evolutionary-framework}

This section describes in mathematical terms a broad class of
co-evolutionary processes (Algorithm~\ref{algo:coea-process}),
along with a definition of their runtime for a given solution concept.
The definition takes inspiration from level-processes (see Algorithm~1 in
\cite{levelbasedanalysis2018}) used to describe non-elitist
evolutionary algorithms.

\begin{algorithm}
  \caption{Co-evolutionary Process}
  \begin{algorithmic}[1]
    \Require Population size $\lambda\in\mathbb{N}$ and strategy spaces $\mathcal{X}$ and $\mathcal{Y}$.
    \Require Initial populations $P_0\in\mathcal{X}^\lambda$
    and $Q_0\in\mathcal{Y}^\lambda$.
    \For{each generation number $t\in\mathbb{N}_0$}
    \For{each interaction number $i\in[\lambda]$}\label{alg:coev:evaluate1}
    \State Sample an interaction $(x,y)\sim \mathcal{D}(P_t,Q_t).$ \label{alg:coev:evaluate2}
    \State Set $P_{t+1}(i) := x$ and $Q_{t+1}(i) := y$.
    \EndFor\label{alg:coev:evaluate3}
    \EndFor
    \end{algorithmic}
  \label{algo:coea-process}
\end{algorithm}

We assume that in each generation, the algorithm has two\footnote{The
  framework can be generalised to more populations.}  populations
$P\in\X^\lambda$ and $Q\in\Y^\lambda$ which we sometimes will refer to
as the ``predators'' and the ``prey''. Note that these terms are
adopted only to connect the algorithm with their biological
inspiration without imposing further conditions. In particular, we do
not assume that predators or prey have particular roles, such as one
population taking an active role and the other population taking a
passive role. We posit that in each generation, the populations
interact $\lambda$ times, where each interaction produces in a
stochastic fashion one new predator $x\in\X$ and one new prey
$y\in\Y$. The interaction is modelled as a probability distribution
$\mathcal{D}(P,Q)$ over $\X\times \Y$ that depends on the current
populations. For a given instance of the framework, the operator
$\mathcal{D}$ encapsulates all aspects that take place in producing
new offspring, such as pairing of individuals, selection, mutation,
crossover, etc. (See Section~\ref{sec:co-evol-algor} for a particular
instance of $\mathcal{D}$).

As is customary in the theory of evolutionary computation, the definition
of the algorithm does not state any termination criterion. The
justification for this omission is that the choice of termination
criterion does not impact the definition of runtime we will use.

Notice that the predator and the prey produced through one interaction
are not necessarily independent random variables. However, each of the
$\lambda$ interactions in one generation are independent and
identically distributed random variables.

We will restrict ourselves to solution concepts that can be
characterised as finding a given target subset
$\mathcal{S}\subseteq \X\times \Y$. This captures for example maximin
optimisation or finding pure Nash equilibria. Within this context, the
goal of Algorithm~\ref{algo:coea-process} is now to obtain populations
$P_t$ and $Q_t$ such that their product intersects with the target set
$\mathcal{S}$. We then define the runtime of an algorithm $A$ as the
number of interactions before the target subset
has been found.
\begin{definition}[Runtime]
  For any instance $\mathcal{A}$ of Algorithm~\ref{algo:coea-process} and 
  subset $\mathcal{S}\subseteq \X\times \Y$, define
  $T_{\mathcal{A},\mathcal{S}} := \min \{t\lambda \in\mathbb{N}\mid (P_t\times Q_t)\cap\mathcal{S}\neq\emptyset\}.
  $\end{definition}
We follow the convention in analysis of population-based EAs that the
granularity of the runtime is in generations, i.e., multiples of
$\lambda$. The definition overestimates the number of interactions
before a solution is found by at most $\lambda-1$.

\subsection{Tracking the algorithm state}
\label{section:distr-to-product}

We will now discuss how the state of Algorithm~\ref{algo:coea-process}
can be captured with a stochastic process. To determine the trajectory
of a co-evolutionary algorithm, it is insufficient, naturally, to
track only one of the populations, as the dynamics of the algorithm
is determined by the relationship between the two populations.

Given the definition of runtime, it will be natural to describe the
state of the algorithm via the Cartesian product $P_t\times Q_t$. In
particular, for subsets $A\subset\X$ and $B\subset Y$, we will study
the drift of the stochastic process $Z_t:= \vert (P_t\times Q_t)\cap(A\times B) \vert $.

Naturally, the predator $x$ and the prey $y$ sampled in line
\ref{alg:coev:evaluate2} of Algorithm~\ref{algo:coea-process} are not
necessarily independent random variables. However, a predator $x$
sampled in interaction $i_1$ is probabilistically independent of any prey
sampled in an interaction $i_2\neq i_1$. In order to not
have to explicitly take these dependencies into account later in the
paper, we now characterise properties of the distribution of $Z_t$ in
Lemma~\ref{lemma:marg-prob-to-product-prob}.

\begin{lemma}\label{lemma:marg-prob-to-product-prob}
  Given subsets $A\subset\mathcal{X}, B\subset\mathcal{Y}$, 
  assume that for any $\delta>0$ and $\gamma\in(0,1)$, the sample
  $(x,y)\sim \mathcal{D}(P_t,Q_t)$ satisfies
  $$\prob{x\in A}\prob{y\in B}\geq (1+\delta)\gamma,$$ and $P_t$ and
  $Q_t$ are adapted to a filtration $(\filt{t})_{t\in\mathbb{N}}$.
  Then the random variable $Z_{t+1}:=\vert(P_{t+1}\times Q_{t+1}) \cap
  (A\times B)\vert$ satisfies 
  \begin{description}
   \item[1)] $\expect{Z_{t+1}\mid\filt{t}} \geq
     \lambda(\lambda-1)(1+\delta)\gamma$.
   \item[2)] $\expect{e^{-\eta Z_{t+1}}\mid \filt{t}}\leq
     e^{-\eta\lambda(\gamma\lambda-1)}$ for  $0<\eta\leq
     (1-(1+\delta)^{-1/2})/\lambda$
   \item[3)] $\prob{Z_{t+1}<  \lambda(\gamma\lambda-1)\mid\filt{t} } \leq e^{-\delta_1\gamma\lambda\left(1-\sqrt{\frac{1+\delta_1}{1+\delta}}\right)}$ for $\delta_1\in(0,\delta)$.
   \end{description}
\end{lemma}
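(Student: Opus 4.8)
The plan is to argue conditionally on $\filt{t}$, using that, given $\filt{t}$, the $\lambda$ interactions of generation $t$ are i.i.d.\ copies $(x_1,y_1),\dots,(x_\lambda,y_\lambda)$ of $(x,y)\sim\D(P_t,Q_t)$. Write $a_i:=\indf{x_i\in A}$, $b_i:=\indf{y_i\in B}$, and $p:=\Pr(x\in A)$, $q:=\Pr(y\in B)$, $m:=\Pr(x\in A,\,y\in B)$, so that $pq\ge(1+\delta)\gamma$ by hypothesis and $\expectt{a_i}=p$, $\expectt{b_i}=q$, $\expectt{a_ib_i}=m$. The structural fact I would lean on throughout is that $(a_i,b_i)$ and $(a_j,b_j)$ are independent for $i\neq j$, even though $a_i,b_i$ need not be. Counting pairs with multiplicity, $Z_{t+1}=\sum_{i,j}a_ib_j=(\sum_i a_i)(\sum_j b_j)$, and separating the diagonal, $Z_{t+1}=D+R$ with $D:=\sum_i a_ib_i\ge0$ and $R:=\sum_{i\neq j}a_ib_j$. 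Part 1) is then immediate: $\expectt{Z_{t+1}}=\lambda m+\lambda(\lambda-1)pq\ge\lambda(\lambda-1)(1+\delta)\gamma$, using $\expectt{a_ib_j}=pq$ for $i\neq j$ by pairwise independence and $m\ge0$.

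For Part 2) the obstacle is that $R$ is not a sum of independent summands: $a_ib_j$ and $a_ib_k$ share $a_i$, and $a_ib_j$ and $a_jb_i$ are coupled through the pairs; and naive linearisation of the product (e.g.\ $(\sum a_i)(\sum b_i)\ge\lambda(\sum a_i+\sum b_i-\lambda)$) destroys the $\Theta(\lambda^2)$ ``effective sample size'' and is far too lossy when $p\approx q\approx\sqrt{\gamma}$. I would instead use Hoeffding's averaging reduction for $U$-statistics. Set $\xi_i:=(x_i,y_i)$ and the symmetric kernel $g(\xi_i,\xi_j):=a_ib_j+a_jb_i$, so $R=\sum_{i<j}g(\xi_i,\xi_j)=\binom{\lambda}{2}U$ with $U$ the normalised order-$2$ $U$-statistic. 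Hoeffding's identity writes $U$ as the average over all permutations $\pi$ of $[\lambda]$ of the block average $\tfrac1k\sum_{l=1}^k g(\xi_{\pi(2l-1)},\xi_{\pi(2l)})$, where $k:=\lfloor\lambda/2\rfloor$. Applying Jensen to the convex map $u\mapsto e^{-\eta\binom{\lambda}{2}u}$, then using that a permutation of i.i.d.\ variables is again i.i.d., together with $Z_{t+1}\ge R$, yields
$$\expectt{e^{-\eta Z_{t+1}}}\le\expectt{e^{-\eta R}}\le\expectt{\exp\Big(-\eta c\sum_{l=1}^k g_l\Big)}=\big(\expectt{e^{-\eta cg_1}}\big)^{k},$$
where $c:=\binom{\lambda}{2}/k\in\{\lambda-1,\lambda\}$ and $g_l:=a_{2l-1}b_{2l}+a_{2l}b_{2l-1}$ are i.i.d., each a function of the disjoint block $(\xi_{2l-1},\xi_{2l})$.

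It remains to estimate the single block and unwind. Since $a_1b_2,a_2b_1\in\{0,1\}$, write $e^{-\eta cg_1}=(1-(1-e^{-\eta c})a_1b_2)(1-(1-e^{-\eta c})a_2b_1)$; taking expectations with $\expectt{a_1b_2}=\expectt{a_2b_1}=pq$ and $\expectt{a_1b_2a_2b_1}=\expectt{(a_1b_1)(a_2b_2)}=m^2$ gives $\expectt{e^{-\eta cg_1}}=1-2(1-e^{-\eta c})pq+(1-e^{-\eta c})^2m^2$. Since the last term is positive, I would bound $m^2\le pq$ (from $m\le\min(p,q)\le\sqrt{pq}$), obtaining $\expectt{e^{-\eta cg_1}}\le1-pq(1-e^{-2\eta c})\le e^{-pq(1-e^{-2\eta c})}$, hence $\expectt{e^{-\eta Z_{t+1}}}\le e^{-k\,pq(1-e^{-2\eta c})}$. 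Now I would use $1-e^{-x}\ge x(1-x/2)$ for $x\ge0$, the identity $kc=\binom{\lambda}{2}=\lambda(\lambda-1)/2$, the bound $\eta c\le\eta\lambda\le1-(1+\delta)^{-1/2}$ (so $1-\eta c\ge(1+\delta)^{-1/2}$), and $pq\ge(1+\delta)\gamma$, to conclude
$$\expectt{e^{-\eta Z_{t+1}}}\le e^{-2\eta kc\,pq(1-\eta c)}\le e^{-\eta\lambda(\lambda-1)\gamma\sqrt{1+\delta}}\le e^{-\eta\lambda(\gamma\lambda-1)},$$
the final step because $\sqrt{1+\delta}\ge1$ and $\gamma\le1$ give $(\lambda-1)\gamma\sqrt{1+\delta}\ge\gamma\lambda-\gamma\ge\gamma\lambda-1$; this proves 2). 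The main obstacle is exactly this reduction: it is what lets us treat $R$ as a sum of $\Theta(\lambda)$ independent pieces without losing the quadratic growth of $Z_{t+1}$, and the constraint on $\eta$ is calibrated precisely so that $1-\eta c\ge(1+\delta)^{-1/2}$; a secondary subtlety is to keep the $m^2$ term in the single-block MGF and bound it by $pq$ rather than discard it, since it enters with a positive sign.

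Finally, Part 3) I would deduce from Part 2) by Markov's inequality after a parameter shift: $pq\ge(1+\delta)\gamma$ is equivalent to $pq\ge(1+\delta_1)\gamma_1$ with $\gamma_1:=\gamma(1+\delta)/(1+\delta_1)>\gamma$, so Part 2) applied with $(\gamma_1,\delta_1)$ gives $\expectt{e^{-\eta Z_{t+1}}}\le e^{-\eta\lambda(\gamma_1\lambda-1)}$ for $0<\eta\le(1-(1+\delta_1)^{-1/2})/\lambda$, whence
$$\probt{Z_{t+1}<\lambda(\gamma\lambda-1)}\le e^{\eta\lambda(\gamma\lambda-1)}\expectt{e^{-\eta Z_{t+1}}}\le e^{-\eta\lambda^2(\gamma_1-\gamma)},$$
and taking $\eta$ at the top of its range and simplifying $\gamma_1-\gamma=\gamma(\delta-\delta_1)/(1+\delta_1)$ gives a bound of the stated shape. (Alternatively, on $\{\sum_i a_i\ge\kappa p\lambda\}\cap\{\sum_i b_i\ge\kappa q\lambda\}$ with $\kappa:=\sqrt{(1+\delta_1)/(1+\delta)}<1$ one has $Z_{t+1}\ge\kappa^2pq\lambda^2\ge(1+\delta_1)\gamma\lambda^2>\lambda(\gamma\lambda-1)$, so a union bound over two multiplicative-Chernoff lower-tail estimates for the i.i.d.\ sums $\sum_i a_i,\sum_i b_i$ --- each with mean at least $(1+\delta)\gamma\lambda$ since $\min(p,q)\ge pq$ --- also finishes.)
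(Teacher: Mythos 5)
Your parts 1) and 2) are correct, and part 2) takes a genuinely different route from the paper. The paper couples $Z_{t+1}=X'Y'$ to a product $XY$ of \emph{independent} binomials via a stochastic-dominance relation $Z_{t+1}\succeq XY-\sum_i X_iY_i$, pays a crude factor $e^{\eta\lambda}$ for the diagonal term, and then bounds $\expect{e^{-\eta XY}}$ by a two-stage application of a binomial MGF lemma (Lemma~\ref{lemma:product-mgf}, conditioning on $Y$ first). You instead drop the (nonnegative) diagonal outright, treat the off-diagonal part $R=\sum_{i\neq j}a_ib_j$ as an order-$2$ U-statistic, and use Hoeffding's permutation-averaging identity plus Jensen to reduce to $k=\lfloor\lambda/2\rfloor$ i.i.d.\ blocks, whose MGF you compute exactly and bound via $m^2\le pq$. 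This avoids the coupling step entirely and works directly with the dependent indicators; the paper's route avoids the U-statistic machinery at the cost of the dominance argument and the $e^{\eta\lambda}$ slack. Both land on $e^{-\eta\lambda(\gamma\lambda-1)}$ under the same constraint on $\eta$; your single-block computation (keeping the positively-signed $m^2$ term and bounding it by $pq$) and the check $c\in\{\lambda-1,\lambda\}$ are correct.

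Part 3) is where there is a genuine, though small and fixable, gap. Your reparameterisation $\gamma_1:=\gamma(1+\delta)/(1+\delta_1)$ with $\eta=(1-(1+\delta_1)^{-1/2})/\lambda$ yields the exponent $\gamma\lambda\,\frac{\delta-\delta_1}{1+\delta_1}\bigl(1-(1+\delta_1)^{-1/2}\bigr)$, which is not the stated exponent $\delta_1\gamma\lambda\bigl(1-\sqrt{(1+\delta_1)/(1+\delta)}\bigr)$ and is in general strictly smaller (e.g.\ for $\delta=1$, $\delta_1=1/2$ you get $\approx 0.0612\,\gamma\lambda$ versus the required $\approx 0.0670\,\gamma\lambda$), so your inequality does not imply the one claimed; ``a bound of the stated shape'' is not enough here, since the exact formula is consumed downstream via Lemma~\ref{lemma:sqrt-bound}. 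The fix is to split the other way, as the paper does: define $\delta_2$ by $(1+\delta_1)(1+\delta_2)=1+\delta$ and apply your part 2) with the pair $\bigl(\gamma(1+\delta_1),\delta_2\bigr)$ and $\eta=\frac1\lambda\bigl(1-(1+\delta_2)^{-1/2}\bigr)=\frac1\lambda\bigl(1-\sqrt{(1+\delta_1)/(1+\delta)}\bigr)$; then Markov's inequality with $a=\lambda(\gamma\lambda-1)$ telescopes to $e^{-\eta\gamma\delta_1\lambda^2}$, which is exactly the stated bound. Your alternative union-bound route has the same defect more severely: it produces $2e^{-(1-\kappa)^2(1+\delta)\gamma\lambda/2}$, whose exponent degenerates as $\delta_1\to\delta$ while the target exponent does not, so it cannot recover statement 3) as written.
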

\begin{proof}
  In generation $t+1$, the algorithm samples independently and
  identically $\lambda$ pairs
  $(P_{t+1}(i),Q_{t+1}(i))_{i\in[\lambda]}$ from distribution $\mathcal{D}(P_t,Q_t)$.
  For all $i\in[\lambda]$, define the random variables
  $X'_{i} := \indf{P_{t+1}(i)\in A}$ and 
  $Y'_{i} := \indf{Q_{t+1}(i)\in B}$.
  Then since the algorithm samples each pair $(P_{t+1}(i),Q_{t+1}(i))$
  independently, and by the assumption of the lemma, there exists
  $p,q\in(0,1]$ such that
  $X':=\sum_{i=1}^\lambda X'_i\sim \bin(\lambda,p)$, and
  $Y':=\sum_{i=1}^\lambda Y'_i\sim \bin(\lambda,q)$, where $pq\geq \gamma(1+\delta)$.
  By these definitions, it follows that $Z_{t+1}=X'Y'$.
  
  Note that $X'$ and $Y'$ are not necessarily independent random
  variables because $X'_i$ and $Y'_i$ are not necessarily
  independent. However, by defining two independent 
  binomial random variables $X\sim\bin(\lambda,p)$, and
  $Y\sim\bin(\lambda,q)$, we readily have the stochastic dominance
  relation
  \begin{align}
    Z_{t+1} = X'Y' & = \left(\sum_{i=1}^\lambda X'_i\right)\left(\sum_{j=1}^\lambda Y'_j\right)\\
                   & = \left(\sum_{i=1}^\lambda X_i\right)\left(\sum_{j\neq i} Y_j\right) + \sum_{i=1}^\lambda X'_iY'_i\\
                   & = XY - \sum_{i=1}^\lambda X_iY_i + \sum_{i=1}^\lambda X'_iY'_i \\
                   & \succeq XY - \sum_{i=1}^\lambda X_iY_i. \label{eq:z-stoch-dom}
  \end{align}
  
  The first statement of the lemma is now obtained by exploiting
  (\ref{eq:z-stoch-dom}),
  Lemma~\ref{lemma:stoch-dom-implies-expectation} in the appendix,
  and the independence between $X$ and $Y$
  \begin{align*}
    \expectt{Z_{t+1}}
    & \geq \expect{XY-\sum_{i=1}^\lambda X_iY_i}
     = \expect{X}\expect{Y}-\sum_{i=1}^\lambda \expect{X_i}\expect{Y_i}\\
    & = p\lambda q\lambda - \lambda p q
      = pq\lambda(\lambda-1)
     \geq (1+\delta)\gamma\lambda(\lambda-1).
  \end{align*}

  For the second statement, we apply Lemma~\ref{lemma:product-mgf}
  wrt $X$, $Y$, and the parameters $\sigma:=\sqrt{1+\delta}-1$ and $z:=\gamma$.
  By the assumption on $p$ and $q$, we have
  $pq \geq (1+\delta)\gamma = (1+\sigma)^2z,
  $ 
  furthermore the constraint on parameter $\eta$ gives
  \begin{align*}
    \eta & \leq \frac{1}{\lambda}\left(1-\frac{1}{\sqrt{1+\delta}}\right)
           = \frac{\sqrt{1+\delta}-1}{\lambda\sqrt{1+\delta}}
           = \frac{\sigma}{(1+\sigma)\lambda}.
  \end{align*}
  The assumptions of Lemma~\ref{lemma:product-mgf} are satisfied, and
  we obtain from (\ref{eq:z-stoch-dom})
  \begin{align*}
    \expectt{e^{-\eta Z_{t+1}}}
    & \leq \expect{\exp\left(-\eta XY+\eta\sum_{i=1}^\lambda X_iY_i\right)}\\
    & < e^{\eta\lambda}\cdot \expect{e^{-\eta XY}}
     < e^{\eta\lambda}\cdot e^{-\eta\gamma\lambda^2}
     = e^{-\eta\lambda(\gamma\lambda-1)}.
  \end{align*}

  Given the second statement, the third statement will be proved by a
  standard Chernoff-type argument. Define $\delta_2>0$ such that
  $(1+\delta_1)(1+\delta_2)=1+\delta$. For
  \begin{align*}
    \eta := \frac{1}{\lambda}\left(1-\frac{1}{\sqrt{1+\delta_2}}\right)
          = \frac{1}{\lambda}\left(1-\sqrt{\frac{1+\delta_1}{1+\delta}}\right)
  \end{align*}
  and $a:=\lambda(\gamma\lambda-1)$, it
  follows by Markov's inequality
  \begin{align*}
    \probt{Z_{t+1}\leq a}
    & = \probt{e^{-\eta Z_{t+1}}\geq e^{-\eta a}}
      \leq e^{\eta a}\cdot \expectt{e^{-\eta Z_{t+1}}}\\
    & \leq e^{\eta a}\cdot \exp\left(-\eta\lambda(\gamma(1+\delta_1)\lambda-1)\right)\\
    & =    e^{\eta a - \eta a-\eta \gamma\lambda^2\delta_1}
      =    e^{-\eta \gamma\lambda^2\delta_1}\\
    & =    \exp\left(-\delta_1\left(1-\sqrt{\frac{1+\delta_1}{1+\delta}}\right)\gamma\lambda\right),
  \end{align*}
  where the last inequality applies statement 2.
\end{proof}

The next lemma is a variant of
Lemma~\ref{lemma:marg-prob-to-product-prob}, and will be used to
compute the probability of producing individuals in ``new'' parts of
the product space $\X\times\Y$ (see condition (G1) of Theorem
\ref{thm:level-based}).

\begin{lemma}\label{lemma:population-upgrade}
  For $A\subset \mathcal{X}$ and $B\subset\mathcal{Y}$
  define $$r:=\prob{(P_{t+1}\times Q_{t+1})\cap (A\times B)\neq\emptyset}.$$
  If for $(x,y)\sim \D(P_t,Q_t)$, it holds
  $\prob{x\in A}\prob{y\in B} \geq z,
  $ then
  \begin{align*}
    \frac{1}{r} < \frac{3}{z(\lambda-1)}+1.
  \end{align*}
\end{lemma}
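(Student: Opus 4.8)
The plan is to reduce $r$ to a lower bound of the form $1-(1-z)^{m}$ with $m=\lfloor\lambda/2\rfloor$, and then convert this into the claimed reciprocal bound via two elementary scalar inequalities. The starting observation is that $(P_{t+1}\times Q_{t+1})\cap(A\times B)\neq\emptyset$ holds precisely when \emph{some} predator $P_{t+1}(i)$ lies in $A$ \emph{and} \emph{some} prey $Q_{t+1}(j)$ lies in $B$ (the indices $i,j$ need not be distinct). The one subtlety is that the predator and prey produced in the \emph{same} interaction may be dependent, so one cannot directly multiply marginal probabilities; but, as noted just before Lemma~\ref{lemma:marg-prob-to-product-prob}, a predator from interaction $i$ is independent of the prey from any interaction $j\neq i$, and the $\lambda$ interactions of generation $t+1$ are i.i.d.\ (given the current populations). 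This is exactly what makes a product-type estimate available.

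First I would split the $\lambda$ interactions of generation $t+1$ into $m:=\lfloor\lambda/2\rfloor$ pairwise-disjoint index pairs $\{2k-1,2k\}$, $k\in[m]$, and set $E_k:=\{P_{t+1}(2k-1)\in A\}\cap\{Q_{t+1}(2k)\in B\}$. Since $2k-1\neq 2k$, the two events defining $E_k$ are independent, so $\prob{E_k}=\prob{x\in A}\prob{y\in B}\geq z$; and since distinct $k$ use disjoint interactions, $E_1,\dots,E_m$ are mutually independent. The occurrence of any single $E_k$ already forces $(P_{t+1}\times Q_{t+1})\cap(A\times B)\neq\emptyset$, so
\[
 r \;\geq\; \prob{\textstyle\bigcup_{k\in[m]}E_k} \;=\; 1-\prod_{k=1}^{m}\bigl(1-\prob{E_k}\bigr) \;\geq\; 1-(1-z)^{m}.
\]

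It remains to turn $1-(1-z)^m$ into the stated form. Using $1-z\leq e^{-z}$ gives $r\geq 1-e^{-zm}$; the inequality $1-e^{-x}\geq x/(1+x)$ for $x\geq 0$ (which is just $e^{x}\geq 1+x$ rearranged) gives $r\geq zm/(1+zm)$, hence $1/r\leq 1+1/(zm)$; and $m=\lfloor\lambda/2\rfloor\geq(\lambda-1)/2$ for every integer $\lambda\geq 1$, so $1/(zm)\leq 2/(z(\lambda-1))<3/(z(\lambda-1))$, which yields $1/r<3/(z(\lambda-1))+1$, in fact with constant $2$ to spare. The only real obstacle in the whole argument is the within-interaction dependence of $x$ and $y$, and the disjoint-pairing step is precisely what sidesteps it; everything afterwards is routine. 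One should note the implicit assumption $\lambda\geq 2$ (equivalently $z(\lambda-1)>0$), without which the right-hand side is $+\infty$ and the bound is vacuous.
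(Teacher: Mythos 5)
Your proof is correct, and it takes a genuinely different route from the paper's. The paper keeps the two marginals separate: writing $p:=\prob{x\in A}$, $q:=\prob{y\in B}$ and $\lambda':=\lambda-1$, it lower-bounds $r$ by the probability that \emph{some} of the $\lambda$ predators lands in $A$ and \emph{some} prey from a different interaction lands in $B$, giving $r\geq(1-(1-p)^{\lambda})(1-(1-q)^{\lambda'})$, and then converts this via Lemma~\ref{lemma:q-lower-bound} into $\lambda'^2 pq/((1+\lambda'p)(1+\lambda'q))$ and uses $pq\geq z$ together with $p+q\leq 2$ to eliminate the individual marginals. You instead pair up disjoint interactions $\{2k-1,2k\}$ so that each event $E_k$ directly has probability $pq\geq z$, which lets you work with the product hypothesis from the start and never mention $p$ and $q$ separately; the price is that you only exploit $m=\lfloor\lambda/2\rfloor$ independent trials rather than (roughly) all $\lambda(\lambda-1)$ ordered cross-pairs, but since both estimates scale as $1+\Theta(1/(z\lambda))$ this costs nothing here --- indeed you end up with the constant $2$ where the paper settles for $3$. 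Both arguments handle the within-interaction dependence the same way in spirit (only ever multiplying probabilities across distinct interactions), and your closing remarks about the implicit assumptions $\lambda\geq 2$ and $z>0$ (without which the stated bound is vacuous) are accurate and apply equally to the paper's version.
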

\begin{proof}Define $p:= \prob{x\in A}, q:=\prob{y\in B}$ and
  $\lambda':=\lambda-1$. Then by the definition of $r$ and Lemma~\ref{lemma:q-lower-bound}
  \begin{align*}
    r & \geq \prob{\exists k\neq \ell : P_{t+1}(k)=u\wedge Q_{t+1}(\ell)=v }\\
        &   \geq (1-(1-p)^\lambda)(1-(1-q)^{\lambda'})
           >  \left(\frac{\lambda'p}{1+\lambda'p}\right)\left(\frac{\lambda'q}{1+\lambda'q}\right)\\
        &   \geq \frac{\lambda'^2z}{1+\lambda'(p+q)+\lambda'^2z}
           \geq \frac{\lambda'^2z}{1+2\lambda'+\lambda'^2z}.
  \end{align*}
  Finally,
  $$\frac{1}{r} \leq \frac{2}{z\lambda'}+\frac{1}{z\lambda'^2}+1 < \frac{3}{z\lambda'}+1=\frac{3}{z(\lambda-1)}+1.$$
\end{proof}

\section{A Level-based Theorem for Co-Evolutionary Processes} \label{sec:level-based-theorem}

This section provides a generic tool (Theorem~\ref{thm:level-based}),
a level-based theorem for co-evolution, for deriving upper bounds on
the expected runtime of Algorithm~\ref{algo:coea-process}. Since this
theorem can be seen as a generalisation of the original level-based
theorem for classical evolutionary algorithms introduced in
\cite{levelbasedanalysis2018}, we will start by briefly discussing the
original theorem. Informally, it assumes a population-based process
where the next population $P_{t+1}\in\mathcal{X}^\lambda$ is obtained by sampling independently
$\lambda$ times from a distribution $\mathcal{D}(P_t)$ that depends on
the current population $P_t\in\mathcal{X}^\lambda$. The theorem
provides an upper bound on the expected number of generations until
the current population contains an individual in a target set
$A_{\geq m}\subset \mathcal{X}$, given that the following three informally-described conditions
hold. Condition (G1): If a fraction $\gamma_0$ of the population belongs to a
``current level'' (i.e., a subset) $A_{\geq j} \subset \mathcal{X}$, then the distribution
$\mathcal{D}(P_t)$ should assign a non-zero probability $z_j>0$ of
sampling individuals in the ``next level'' $A_{\geq j+1}$. Condition
(G2): If already a $\gamma$-fraction of the population belongs to the
next level $A_{\geq j+1}$ for $\gamma\in(0,\gamma_0)$, then the distribution
$\mathcal{D}(P_t)$ should assign a probability at least
$\gamma(1+\delta)$ to the next level. Condition (G3) is a requirement
on the population size $\lambda$. Together, conditions (G1) and
(G2) ensure that the process ``discovers'' and multiplies on next
levels, thus evolving towards the target set. Due to its generality,
the classical level-based theorem and variations of it have found
numerous applications, e.g., in runtime analysis of genetic algorithms
\cite{corus_theory_2018}, estimation of distribution
algorithms~\cite{lehre_improved_2017}, evolutionary algorithms applied
to uncertain optimisation \cite{dang_runtime_2016}, and evolutionary
algorithms in multi-modal optimisation \cite{dang_escaping_2021,dang_non-elitist_2021}.

We now present the new theorem, a level-based theorem for
co-evolution, which is one of the main contributions of this
paper. The theorem states four conditions (G1), (G2a), (G2b), and (G3)
which when satisfied imply an upper bound on the runtime of the
algorithm. To apply the theorem, it is necessary to provide a sequence
$(A_j\times B_j)_{j\in [m]}$ of subsets of $\X\times\Y$ called levels,
where $A_1\times B_1=\X\times\Y,$ and where $A_m\times B_m$ is the
target set. It is recommended that this sequence overlaps to some
degree with the trajectory of the algorithm. The ``current level'' $j$
corresponds to the latest level occupied by at least a
$\gamma_0$-fraction of the pairs in $P_t\times Q_t$. Condition (G1)
states that the probability of producing a pair in the next level is
strictly positive. Condition (G2a) states that the proportion of pairs
in the next level should increase at least by a multiplicative factor
$1+\delta$. The theorem applies for any positive parameter $\delta$,
and does not assume that $\delta$ is a constant with respect to
$m$. Condition (G2a) implies that the fraction of pairs in the current
level should not decrease below $\gamma_0$. Finally, Condition (G3)
states a requirement in terms of the population size.

In order to make the ``current level'' of the populations well
defined, we need to ensure that for all populations
$P\in\mathcal{X}^\lambda$ and $Q\in\mathcal{Y}^\lambda$, there exists
at least one level $j\in[m]$ such that
$\vert(P\times Q)\cap (A_j\times B_j)\vert\geq \gamma_0\lambda^2$.  This is
ensured by defining the initial level 
$A_1\times B_1:=\mathcal{X}\times\mathcal{Y}$.

Notice that the notion of ``level'' here is more general than in the
classical level-based theorem \cite{levelbasedanalysis2018}, in that
they do not need to form a partition of the search space.

\begin{theorem}\label{thm:level-based}
Given subsets $A_j\subseteq \mathcal{X}$, $B_j\subseteq\mathcal{Y}$ for
$j\in[m]$ where $A_1:=\mathcal{X}$ and $B_1:=\mathcal{Y}$, define
    $T := \min\{t\lambda \mid (P_t\times Q_t)\cap (A_{m}\times B_m)\neq \emptyset\}$, where for all
    $t\in\mathbb{N}$, $P_t\in\mathcal{X}^\lambda$ and
    $Q_t\in\mathcal{Y}^\lambda$ are the 
    populations of Algorithm~\ref{algo:coea-process} in generation $t$.
    If there
    exist $z_1,\dots,z_{m-1},\delta \in(0,1]$,
    and $\gamma_0 \in (0,1)$
    such that
    for any populations $P\in\mathcal{X}^\lambda$ and
    $Q\in\mathcal{Y}^\lambda$ with so-called ``current level'' $j:=\max\{i\in[m]\mid
    \vert(P\times Q)\cap (A_i\times B_i)\vert\geq \gamma_0\lambda^2\}$
    
\begin{description}[noitemsep,leftmargin=3em]
  \item[(G1)]
  if $j\in[m-1]$ and $(x,y)\sim \mathcal{D}(P, Q)$ then
\[
    \displaystyle \prob{x\in A_{j+1}}\prob{y\in B_{j+1}} \geq z_j,
\]
\item[(G2a)]
  for all $\gamma\in(0,\gamma_0)$,
  if $j\in[m-2]$ and
      $\vert(P\times Q) \cap (A_{j+1}\times B_{j+1})\vert  \geq \gamma\lambda^2$, then
for $(x,y)\sim \mathcal{D}(P, Q)$,
      \[
        \prob{x\in A_{j+1}}\prob{y\in B_{j+1}} \geq (1+\delta)\gamma,\]
  \item[(G2b)]
    if $j\in[m-1]$ and $(x,y)\sim \mathcal{D}(P, Q)$, then
      \[
        \prob{x\in A_{j}}\prob{y\in B_{j}} \geq (1+\delta)\gamma_0,\]
  \item[(G3)] and the population size $\lambda\in\mathbb{N}$ satisfies
   for $z_*:=\min_{i\in[m-1]} z_i$ and any constant $\upsilon>0$
\[
\lambda \geq 2\left(\frac{1}{\gamma_0\delta^2}\right)^{1+\upsilon}\ln\left(\frac{m}{z_*}\right),
\]
  \end{description}
  then for any constant $c''>1$, and sufficiently large\footnote{See
    Theorem \ref{thm:level-based-precise-lambda} in the appendix for a precise constant.} $\lambda$,
\begin{align}
\expect{T} \leq \frac{c''\lambda}{\delta}\left(m\lambda^2+ 16\sum_{i=1}^{m-1}\frac{1}{z_i}\right).
\end{align}
\end{theorem}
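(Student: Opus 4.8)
\section*{Proof proposal}

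The plan is to mimic the proof of the classical level-based theorem, reading the co-evolutionary process $(P_t\times Q_t)$ as a ``population'' of $\lambda^2$ pairs and using Lemmas~\ref{lemma:marg-prob-to-product-prob} and~\ref{lemma:population-upgrade} in place of the elementary binomial facts available in the classical setting. Throughout I work in generations (dividing the final interaction count by $\lambda$) and track the \emph{current level} $Y_t := \max\{i\in[m] : \vert(P_t\times Q_t)\cap(A_i\times B_i)\vert \geq \gamma_0\lambda^2\}$, which is well defined because $A_1\times B_1=\mathcal{X}\times\mathcal{Y}$. The first step is to show that $(Y_t)$ is non-decreasing except with negligible probability: when $Y_t=j$, condition (G2b) gives $\prob{x\in A_j}\prob{y\in B_j}\geq(1+\delta)\gamma_0$ for $(x,y)\sim\mathcal{D}(P_t,Q_t)$, so Lemma~\ref{lemma:marg-prob-to-product-prob}(3), applied with a $\delta_1$ equal to a constant fraction of $\delta$, bounds the probability that $\vert(P_{t+1}\times Q_{t+1})\cap(A_j\times B_j)\vert$ drops below (essentially) $\gamma_0\lambda^2$ by $\exp(-\Omega(\delta^2\gamma_0\lambda))$. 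Condition (G3) (together with ``sufficiently large $\lambda$'' if needed) makes this smaller than the reciprocal of the target generation count, so a union bound over that many generations, combined with the usual self-referential trick (``if $Y_t$ ever drops, restart and re-incur the bound''), lets us condition on $(Y_t)$ being monotone at the price of a constant factor strictly below $c''$. A minor technicality is that Lemma~\ref{lemma:marg-prob-to-product-prob}(3) yields the threshold $\lambda(\gamma_0\lambda-1)$ rather than $\gamma_0\lambda^2$; the off-by-$\lambda$ slack is absorbed by slightly decreasing $\gamma_0$ or into the ``sufficiently large $\lambda$'' clause.

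Given monotonicity, it remains to bound $\sum_{j=1}^{m-1}\expect{N_j}$, where $N_j$ counts the generations with $Y_t=j$. Fix such a $j$ and set $Z_t := \vert(P_t\times Q_t)\cap(A_{j+1}\times B_{j+1})\vert$; the task is to drive $Z_t$ from $0$ up to $\gamma_0\lambda^2$. There are two regimes. \emph{Discovery}: while $Z_t=0$, condition (G1) with Lemma~\ref{lemma:population-upgrade} gives $\prob{Z_{t+1}\geq1\mid\filt{t}}\geq\bigl(3/(z_j(\lambda-1))+1\bigr)^{-1}$, so each re-entry into $\{Z_t=0\}$ lasts $O(1+1/(z_j\lambda))$ generations in expectation. \emph{Growth}: once $Z_t\geq1$, condition (G2a) gives $\prob{x\in A_{j+1}}\prob{y\in B_{j+1}}\geq(1+\delta)Z_t/\lambda^2$, so Lemma~\ref{lemma:marg-prob-to-product-prob}(1) yields the multiplicative drift $\expect{Z_{t+1}\mid\filt{t}}\geq(1+\delta)(1-1/\lambda)Z_t$, and, once $Z_t$ surpasses a threshold of order $\lambda/\delta^2$ so that $Z_t/\lambda$ is large, Lemma~\ref{lemma:marg-prob-to-product-prob}(3) makes $Z_t$ concentrated, so it climbs the remaining way to $\gamma_0\lambda^2$ in $O(\delta^{-1}\log\lambda)$ further generations with no risk of collapse. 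I would package the whole argument as a drift theorem applied to a potential of the shape $\phi_t = \sum_{i>Y_t}(\lambda^2+16/z_i) - \psi_{Y_t}(Z_t)$, where $\psi_j$ is a bounded, increasing, collapse-robust ``progress'' function that vanishes at $Z_t=0$ and saturates at $\lambda^2+16/z_j$ near $Z_t=\gamma_0\lambda^2$, so that $\expect{\phi_t-\phi_{t+1}\mid\filt{t}}=\Omega(\delta)$ whenever $Y_t<m$; additive drift then gives $\expect{\sum_j N_j}=O\bigl(\delta^{-1}(m\lambda^2+\sum_i 1/z_i)\bigr)$ generations, which is the claimed interaction bound up to the constant $c''$.

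The technically delicate part is the \emph{critical window} of small $Z_t$ (values up to order $\lambda/\delta^2$), where Lemma~\ref{lemma:marg-prob-to-product-prob}(3) gives no usable concentration: here an adversarial $\mathcal{D}$ can place nearly all of the mass $(1+\delta)Z_t/\lambda^2$ on $\prob{x\in A_{j+1}}$ alone while keeping $\prob{y\in B_{j+1}}$ close to $1$, so that $Z_{t+1}$ is $0$ with probability about $1-(1+\delta)/\lambda$ and otherwise leaps to order $\lambda$ --- a freshly discovered ``seed'' thus survives this window only with probability of order $\delta/\lambda$, entailing on the order of $\lambda/\delta$ discover-then-grow cycles per level, each costing $O(1)$ generations plus $O(1/(z_j\lambda))$ to re-discover, which is why a per-level budget of $O(\lambda^2/\delta)$ interactions is needed (and comfortably suffices) for the growth phase. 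The crux is to choose the progress function $\psi_j$ so that its expected one-step change stays $\Omega(\delta)$ uniformly over this window while the rare but large downward jumps of $Z_t$ cost only $o(\delta)$ in expectation --- the naive linear or logarithmic choices fail precisely here --- and to do so while keeping the leading constant below any prescribed $c''>1$. A secondary obstacle, already flagged, is carrying out the monotonicity argument without knowing the run length a priori, which the truncation-plus-recursion device resolves.
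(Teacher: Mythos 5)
Your architecture is the same as the paper's (a composite level/progress potential driven by additive drift, with Lemma~\ref{lemma:population-upgrade} handling discovery and Lemma~\ref{lemma:marg-prob-to-product-prob} handling growth and collapse), and you have correctly located the hard part. But you have not closed it: the entire proof hinges on exhibiting a progress function $\psi_j$ whose expected one-step decrease is $\Omega(\delta)$ \emph{uniformly} over the small-$Z_t$ window, and you explicitly leave this as an unresolved ``crux'' after (correctly) observing that the linear and logarithmic choices fail. That is the load-bearing step, so as written the argument is incomplete. The missing ingredient is statement~2 of Lemma~\ref{lemma:marg-prob-to-product-prob} -- the moment-generating-function bound $\expectt{e^{-\eta Z_{t+1}}}\leq e^{-\eta\lambda(\gamma\lambda-1)}$ for $\eta\leq(1-(1+\delta)^{-1/2})/\lambda$ -- which you never invoke. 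The paper's potential is $g(k,j)=\frac{\eta}{1+\eta}((m-j)\lambda^2-k)+\varphi\bigl(e^{-\eta k}/q_j+\sum_{i>j}1/q_i\bigr)$ with $\eta=\Theta(\delta/\lambda)$ and $q_j=\lambda z_j/(4+\lambda z_j)$: the exponential term is exactly the ``collapse-robust'' function you are looking for, because (i) at $k=0$ a single discovered pair already yields a drop of $(\varphi/q_j)(1-e^{-\eta})\geq\eta\varphi/(1+\eta)$ via (G1) and Lemma~\ref{lemma:population-upgrade}, (ii) for $k\geq1$ the MGF bound makes its expected change nonnegative regardless of how $\mathcal D$ splits the product $\prob{x\in A_{j+1}}\prob{y\in B_{j+1}}$ between the two marginals (your adversarial split is precisely what statement~2 is designed to neutralise), and (iii) the linear term then supplies the $\Omega(\delta)$ drift via statement~1. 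Without this construction your heuristic ``$\lambda/\delta$ discover-then-grow cycles'' accounting cannot be turned into a theorem.

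A secondary, fixable divergence: you condition on $(Y_t)$ being monotone via a union bound over the (a priori unknown) run length plus a restart device. The paper avoids this entirely by folding the rare event $Y_{t+1}<Y_t$ into a single drift computation through the law of total probability, charging it the pessimistic loss $-g(0,1)<3\eta\lambda^2 m/z_*$ and showing via statement~3 of Lemma~\ref{lemma:marg-prob-to-product-prob} and (G3) that $\probt{Y_{t+1}<Y_t}\cdot g(0,1)=o(\eta\varphi)$. This is cleaner, sidesteps the truncation-plus-recursion machinery you flag as an obstacle, and also quietly handles the off-by-$\lambda$ threshold issue you mention (the paper applies statement~3 with $\gamma:=\gamma_0+1/\lambda$ so that $\lambda(\gamma\lambda-1)=\gamma_0\lambda^2$). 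One further detail you omit: (G2a) is only assumed for $j\leq m-2$, so the paper first modifies $\mathcal D$ to an absorbing $\mathcal D'$ at level $m$ to make the drift argument valid at $j=m-1$; some such device is needed in your version too.
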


The proof of Theorem~\ref{thm:level-based} uses drift
analysis, and follows closely
the proof of the original level-based theorem
\cite{levelbasedanalysis2018}, however there are some notable
differences, particularly in the assumptions about the underlying
stochastic process and the choice of the ``level functions''. For ease
of comparison, we have kept the proof identical to the classical proof
where possible. We first recall the notion of a level-function which
is used to glue together two distance functions in the drift analysis.

\begin{definition}[\cite{levelbasedanalysis2018}]\label{def:property}
  For any $\lambda,m\in\mathbb{N}\setminus\{0\}$,
  a function $g:[0..\lambda^2]\times [m]\rightarrow\mathbb{R}$ is called
  a \emph{level function} if the following three conditions hold
  \begin{enumerate}
  \item $\forall x\in[0..\lambda^2], \forall y\in[m-1],
         g(x,y) \geq g(x,y+1)$,
  \item $\forall x\in\cup[0..\lambda^2-1], \forall y\in[m],
         g(x,y)\geq g(x+1,y)$, and
  \item $\forall y\in[m-1],
         g(\lambda^2,y)\geq g(0,y+1)$.
  \end{enumerate}
\end{definition}

It follows directly from the definition that the set of level
functions is closed under addition. More precisely, for any pair of
level functions $g,h:[0..\lambda^2]\times [m]\rightarrow\mathbb{R}$,
the function $f(x,y):=g(x,y)+h(x,y)$ is also a level function.
The proof of Theorem \ref{thm:level-based} defines one process 
$(Y_t)_{t\in\mathbb{N}}\in[m]$ which informally corresponds to the ``current level''
of the process in generation $t$, and a sequence of $m$ processes $(X^{(1)}_t)_{t\in\mathbb{N}},\ldots,(X^{(m)}_t)_{t\in\mathbb{N}}$,
$j\in[m]$, where informally $X^{(j)}_t$ refers to the number of individuals above
level $j$ in generation $t$. Thus, $X^{(Y_{t})}_t$ corresponds to the
number of individuals above the current level in generation $t$. 
A level-function $g$ and the following lemma will be used to define a
global distance function used in the drift analysis.

\begin{lemma}[\cite{levelbasedanalysis2018}]\label{lemma:increase}
  If $Y_{t+1}\geq Y_t,$ then for any level function $g$
  \begin{align*}
         g\left(X^{(Y_{t+1}+1)}_{t+1},Y_{t+1}\right)
    \leq g\left(X^{(Y_t+1)}_{t+1},Y_t\right).
  \end{align*}
\end{lemma}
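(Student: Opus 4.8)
The statement of Lemma~\ref{lemma:increase} is, up to the cosmetic substitution of the product size $\lambda^2$ for the population size $\lambda$, identical to the corresponding lemma in \cite{levelbasedanalysis2018}, and the plan is simply to reproduce that proof. The argument is purely order-theoretic: it invokes only the three defining properties of a level function from Definition~\ref{def:property}, together with the trivial fact that each occupation count $X^{(i)}_{t+1}$ lies in $[0..\lambda^2]$; no probabilistic input and none of the co-evolutionary modifications to the theorem are needed.

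First I would reduce to the one-step case. Set $a:=Y_t$ and $b:=Y_{t+1}$, so $a\le b$. It suffices to prove, for every $j\in[m-1]$, the single-step bound
\[
g\!\left(X^{(j+2)}_{t+1},\,j+1\right)\;\le\;g\!\left(X^{(j+1)}_{t+1},\,j\right),
\]
since chaining this inequality for $j=b-1,b-2,\ldots,a$ telescopes the left-hand side $g\bigl(X^{(b+1)}_{t+1},b\bigr)$ down to $g\bigl(X^{(a+1)}_{t+1},a\bigr)$, and when $a=b$ the asserted inequality is an equality. (Equivalently one can phrase this as an induction on $Y_{t+1}-Y_t$.)

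For the single-step bound I would route through the extreme values of the first coordinate: monotonicity of $g$ in its first argument (property~2 of Definition~\ref{def:property}) and $X^{(j+2)}_{t+1}\ge 0$ give $g\bigl(X^{(j+2)}_{t+1},j+1\bigr)\le g(0,j+1)$; property~3 gives $g(0,j+1)\le g(\lambda^2,j)$; and property~2 again, using $X^{(j+1)}_{t+1}\le\lambda^2$, gives $g(\lambda^2,j)\le g\bigl(X^{(j+1)}_{t+1},j\bigr)$. Composing these three bounds proves the single-step inequality, and hence the lemma. I expect no genuine obstacle here; the only points requiring attention are the boundary conventions ($X^{(m+1)}_{t+1}$ is taken to be $0$, and the case $Y_t=m$ is vacuous since in Theorem~\ref{thm:level-based} the run has then already reached the target) and keeping the direction of the inequalities straight — in particular one \emph{cannot} compare $g$ at $X^{(b+1)}_{t+1}$ with $g$ at $X^{(a+1)}_{t+1}$ directly via property~2, because those occupation counts typically \emph{decrease} as the level index grows, which is exactly why property~3, which crosses a single level at the empty/full extremes, is indispensable.
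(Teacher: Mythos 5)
Your proof is correct and essentially the same as the paper's: both route through the corner values $g(0,\cdot)$ and $g(\lambda^2,\cdot)$ using properties 2 and 3 of Definition~\ref{def:property}. The only (cosmetic) difference is that you telescope the single-step inequality across each intermediate level, whereas the paper handles the multi-level jump in one go by applying property 1 once at the point $(0,\cdot)$ to pass from $Y_{t+1}$ down to $Y_t+1$.
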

\begin{proof}
  The statement is trivially true when $Y_t=Y_{t+1}$. On the other hand,
  if $Y_{t+1}\geq Y_t+1$, then the conditions in Definition~\ref{def:property} imply
  \begin{align*}
      g\left(X^{(Y_{t+1}+1)}_{t+1},Y_{t+1}\right)
      & \leq g\left(0,Y_{t+1}\right)
        \leq g\left(0,Y_{t}+1\right)\\
      & \leq g\left(\lambda^2,Y_{t}\right)
        \leq g\left(X^{(Y_t+1)}_{t+1},Y_t\right). 
  \end{align*}
\end{proof}

We now proceed with the proof of the level-based theorem for
co-evolutionary processes.

\begin{proof}[Proof of Theorem \ref{thm:level-based}]
We apply
Theorem~\ref{thm:pol-drift} (the additive drift theorem)
with respect to the parameter $a=0$ and the
process
$
  Z_t := g\left(X_{t}^{(Y_t+1)},Y_t\right),
$
where $g$ is a level-function, and
$(Y_t)_{t\in\mathbb{N}}$ and $(X^{(j)}_t)_{t\in\mathbb{N}}$ for
$j\in[m]$ are stochastic processes, which will be defined later.
$(\mathscr{F}_t)_{t\in\mathbb{N}}$ is the filtration induced by
the populations $(P_t)_{t\in\mathbb{N}}$ and $(Q_t)_{t\in\mathbb{N}}$.

We will assume \WLOG that condition (G2a) is also satisfied for
$j=m-1$, for the following reason.
Given Algorithm~\ref{algo:coea-process} with a certain mapping~$\D$, consider
Algorithm~1 with a modified mapping~$\D'(P,Q)$:
If $(P\times Q)\cap (A_{m}\times B_m)=\emptyset$, then $\D'(P,Q)=\D(P,Q)$; otherwise $\D'(P,Q)$
assigns probability mass $1$ to some pair $(x,y)$ of~$P\times Q$ that is in
$A_{m}$, \eg, to the first one among such elements.
Note that $\D'$ meets conditions~(G1), (G2a), and (G2b). Moreover, (G2a) 
hold for $j=m-1$.
For the sequence of populations $P'_0,P'_1,\dots$ and $Q'_0,Q'_1,\dots$ of Algorithm~\ref{algo:coea-process}
with mapping~$\D'$, we can put ${T' := \min\{\lambda t \mid
(P'_t\times Q'_t)\cap (A_{m}\times B_m) \neq \emptyset\}}$. Executions of the original algorithm and
the modified one before generation $T'/\lambda$ are identical. On
generation~$T'/\lambda$ both algorithms place elements of~$A_{m}$
into the populations for the first time. Thus, $T'$  and $T$ are
equal in every realisation and their expectations are equal.

  For any level $j\in[m]$ and time $t\geq 0$, let the random variable
$
    X_t^{(j)} := \vert (P_t\times Q_t) \cap (A_j\times B_j) \vert
$
  denote the number of pairs in level $A_{j}\times B_j$ at time $t$.
  As mentioned above, the current level $Y_t$ of the algorithm at time $t$
  is defined as
  \begin{align*}
    Y_t & := \max \left\{ j\in[m] \;\mid\; X_t^{(j)} \geq  \gamma_0\lambda^2 \right\}.
    \end{align*}
  Note that  $(X^{(j)}_t)_{t\in\mathbb{N}}$ and
  $(Y_t)_{t\in\mathbb{N}}$ are adapted to the filtration
  $(\mathscr{F}_t)_{t\in\mathbb{N}}$ because they are defined in terms
  of the populations $(P_t)_{t\in\mathbb{N}}$ and $(Q_t)_{t\in\mathbb{N}}$.

  When $Y_t <m$, there exists a unique $\gamma\in[0,\gamma_0)$ such that
  \begin{align}
    X_t^{(Y_t+1)} & = \vert(P_t\times Q_t)\cap (A_{Y_t+1}\times
                    B_{Y_t+1})\vert =  \gamma \lambda^2, \text{ and}\label{eq:config-3}\\
    X_t^{(Y_t)}   & = \vert(P_t\times Q_t)\cap (A_{Y_t}\times B_{Y_t})\vert \geq \gamma_0\lambda^2. \label{eq:config-2}
  \end{align}

  Finally, we define the process $(Z_t)_{t\in\mathbb{N}}$ as
  $Z_t:=0$ if $Y_t=m$, and otherwise, if $Y_t<m$, we let
  $$
    Z_t:=g\left(X_{t}^{(Y_t+1)},Y_t\right),
  $$
  where for all $k\in[\lambda^2]$, and for all $j\in[m-1]$,
  $g(k,j):=g_1(k,j)+g_2(k,j)$ and
  \begin{align*}
    g_1(k,j) &:= \frac{\eta}{1+\eta}\cdot ((m-j)\lambda^2-k)\\
    g_2(k,j) &:= \varphi\cdot\left(\frac{e^{-\eta k}}{q_{j} } + \sum^{m-1}_{i=j+1} \frac{1}{q_i}\right),
  \end{align*}
  where $\eta\in(3\delta/(11\lambda),\delta/(2\lambda))$
  and $\varphi\in(0,1)$ are parameters which will be specified later, and 
  for $j\in[m-1]$, 
  $q_j :=  \lambda z_j/(4+\lambda z_j).
  $

  Both functions have partial derivatives $\frac{\partial g_i}{\partial k}<0$ and
  $\frac{\partial g_i}{\partial j}<0$, hence they satisfy properties 1
  and 2 of Definition~\ref{def:property}. They also satisfy property 3 because
  for all $j\in[m-1]$
  \begin{align*}
    g_1(\lambda^2,j) & = \frac{\eta}{1+\eta}((m-j)\lambda^2-\lambda^2) = g_1(0,j+1)\\
    g_2(\lambda^2,j) & > \sum^{m-1}_{i=j+1} \frac{\varphi}{q_i} = g_2(0,j+1).
  \end{align*}
  Therefore $g_1$ and $g_2$ are level functions, and thus also their
  linear combination $g$ is a level function.
  
  Due to properties 1 and 2 of level functions (see Definition~\ref{def:property}),
  it holds for all $k\in [0..\lambda^2]$ and $j\in [m-1]$
  \begin{align}
    0\leq g(k,j)\leq g(0,1) & = \frac{\eta(m-1)\lambda^2}{1+\eta} +\varphi\cdot\left(\frac{1}{q_{1} } + \sum^{m-1}_{i=2} \frac{1}{q_i}\right)\\
                             & <  \frac{\eta m\lambda^2}{1+\eta} +\sum_{i=1}^{m-1} \frac{\varphi}{q_i}\\
                            & < \frac{\eta m\lambda^2}{1+\eta}+
                               \varphi\sum_{i=1}^{m-1}\frac{4+\lambda z_i}{\lambda z_i}    \label{eq:distance-bound-0}\\
\intertext{using $\eta>0$}
                            & < m\left(\eta\lambda^2+ \frac{4\varphi}{\lambda z_*}+\varphi\right)\\
                             \intertext{using $\varphi,z_*\in(0,1)$ and  $\lambda>11\eta/(3\delta)$}
                             & < \frac{m}{z_*}\left(2\eta\lambda^2+ \frac{44\eta}{3\delta}\right)
                               \intertext{assuming 
                               $\lambda>44/3$ and  using $\lambda^2>\lambda
                               \delta^{-2(1+\upsilon)}>44/(3\delta)$                               
                               }
                             & < \frac{3\eta\lambda^2m}{z_*}.
    \label{eq:distance-bound-1}
  \end{align}
  Hence, we have $0\leq
  Z_t<g(0,1)<\infty$ for all $t\in\mathbb{N}$ which implies that
  condition 2 of the drift theorem is satisfied.

  The drift of the process at time $t$ is $\expectt{\Delta_{t+1}}$, where
  \begin{align*}
    \Delta_{t+1}  & := g\left(X_t^{(Y_t+1)},Y_t\right)-g\left(X_{t+1}^{(Y_{t+1}+1)},Y_{t+1}\right).
  \end{align*}
We bound the drift by the law of total probability as
  \begin{align}
  \expectt{\Delta_{t+1}}
         &  =  (1-\probt{Y_{t+1}<Y_t})\expectt{\Delta_{t+1} \mid Y_{t+1}\geq Y_t} \nonumber \\
         &\quad\; + \probt{Y_{t+1}<Y_t}\expectt{\Delta_{t+1} \mid Y_{t+1}< Y_t}.\label{eq:law-tot-prob}
  \end{align}
  The event $Y_{t+1}< Y_t$ holds if and only if
  $X_{t+1}^{(Y_t)}<\gamma_0\lambda^2$, which by
      Lemma~\ref{lemma:marg-prob-to-product-prob} statement~3 for
      $\gamma:=\gamma_0+1/\lambda$ and a parameter
      $\delta_1\in(0,\delta)$ to be chosen later, and
      conditions~(G2b) and (G3), is upper bounded by
  \begin{align}
    \probt{Y_{t+1}<Y_t}
    &= \probt{X_{t+1}^{(Y_t)}<\gamma_0\lambda^2}  \\
    &= \probt{X_{t+1}^{(Y_t)}<\lambda(\gamma\lambda-1)}  \\
    & < \exp\left(-\delta_1\gamma\lambda\left(1-\sqrt{\frac{1+\delta_1}{1+\delta}}\right)\right)\\
    \intertext{by Lemma~\ref{lemma:sqrt-bound} and $\gamma<\gamma_0$}
    & < \exp\left(-\delta_1\gamma_0\lambda\left(\frac{3\delta-4\delta_1}{11}\right)\right)\\
    \intertext{to minimise the expression, we choose $\delta_1:=(3/8)\delta$}
    & = \exp\left(-\frac{9}{16}\delta^2\gamma_0\lambda\right).
\label{eq:prob-fall}
  \end{align}
Given the low probability of the event $Y_{t+1}<Y_t$, it suffices
  to use the pessimistic bound~(\ref{eq:distance-bound-1}) 
  \begin{align}
    \expectt{\Delta_{t+1} \mid Y_{t+1}<Y_t}
     & \geq -g(0,1) 
       \label{eq:drift-fall}\end{align}

  If $Y_{t+1}\geq Y_t$, we can apply Lemma~\ref{lemma:increase}
  \begin{align*}
    &\expectt{\Delta_{t+1} \mid Y_{t+1}\geq Y_t} \geq
      \expectt{g\left(X^{(Y_t+1)}_{t},Y_t\right) - g\left(X^{(Y_t+1)}_{t+1},Y_{t}\right)
               \mid Y_{t+1}\geq Y_t}.
  \end{align*}

  If $X_{t}^{(Y_t+1)}=0$, then $X_{t}^{(Y_t+1)}\leq X_{t+1}^{(Y_t+1)}$ and
  \begin{align*}
    \expectt{g_1\left(X_t^{(Y_t+1)},Y_t\right) - g_1\left(X_{t+1}^{(Y_t+1)},Y_t\right)
             \mid Y_{t+1}\geq Y_t}\geq 0,
  \end{align*}
  because the function $g_1$ satisfies property~2 in Definition~\ref{def:property}.
  Furthermore, we have the lower bound
  \begin{multline*}
    \expectt{g_2\left(X_t^{(Y_t+1)},Y_t\right)-g_2\left(X_{t+1}^{(Y_t+1)},Y_t\right)
             \mid Y_{t+1}\geq Y_t} \\
      >    \probt{X_{t+1}^{(Y_t+1)}\geq 1}
           \left(g_2\left(0,Y_t\right)-g_2\left(1,Y_t\right)\right)
      \geq \frac{\eta\varphi}{1+\eta}.
  \end{multline*}
  where the last inequality follows because 
  \begin{align*}
  \probt{X_{t+1}^{(Y_t+1)}\geq 1}
    &= \probt{(P_{t+1}\times Q_{t+1})\cap (A_{Y_t+1}\times B_{Y_t+1})\neq \emptyset}\\
    & \geq q_{Y_t},
  \end{align*}
  due to condition (G1) and Lemma~\ref{lemma:population-upgrade},
  and
  $$ g_2\left(0,Y_t\right)-g_2\left(1,Y_t\right)
  = (\varphi/q_{Y_t})(1-e^{-\eta})
  \geq \frac{\varphi\eta}{(1+\eta)q_{Y_t}}
  $$

      In the other case, where $X_t^{(Y_t+1)}=\gamma\lambda^2\geq 1$,      
  Lemma~\ref{lemma:marg-prob-to-product-prob} and condition (G2a)
  imply for $\varphi:=\delta(1-\delta')$ for an arbitrary constant $\delta'\in(0,1)$,
  \begin{multline}
    \expectt{g_1\left(X_t^{(Y_t+1)},Y_t\right)-g_1\left(X_{t+1}^{(Y_t+1)},Y_t\right)
              \mid Y_{t+1} \geq Y_{t}} \\
      = \frac{\eta}{1+\eta}\expectt{X_{t+1}^{(Y_t+1)}\mid Y_{t+1} \geq Y_{t}}-\frac{\eta}{1+\eta} X_t^{(Y_t+1)}\\
      \geq \frac{\eta}{1+\eta}(\lambda(\lambda-1)(1+\delta)\gamma-\gamma\lambda^2) > \frac{\eta}{1+\eta}\delta(1-\delta')=\frac{\eta\varphi}{1+\eta},\label{eq:def-phi}
  \end{multline}
  where the last inequality is obtained by choosing the minimal value $\gamma=1/\lambda^2$.
  For the function $g_2$, we get
  \begin{multline*}
    \expectt{g_2\left(X_t^{(Y_t+1)},Y_t\right)-g_2\left(X_{t+1}^{(Y_t+1)},Y_t\right)
             \mid Y_{t+1} \geq Y_{t}}
      = \\
    \frac{\varphi}{q_{Y_t}}
    \left( e^{-\eta X_t^{(Y_t+1)}} - \expectt{e^{-\eta X_{t+1}^{(Y_t+1)}}} \right)>0,
  \end{multline*}
  where the last inequality is due to statement 2 of Lemma~\ref{lemma:marg-prob-to-product-prob}
      for the parameter
      \begin{align*}
        \eta := \frac{1}{\lambda}\left(1-\frac{1}{\sqrt{1+\delta}}\right).
      \end{align*}
      By Lemma \ref{lemma:sqrt-bound} for $\delta_1=0$, this parameter satisfies
      \begin{align}
         \frac{3\delta}{11\lambda}<\eta < \frac{\delta}{2\lambda} < \frac{1}{\lambda}.
        \label{eq:eta-bound}
      \end{align}
  
  Taking into account all cases, we have
  \begin{align}
    \expectt{\Delta_{t+1} \mid Y_{t+1}\geq Y_t }
      \geq \frac{\eta\varphi}{1+\eta}.\label{eq:drift-forward}
  \end{align}

  We now have bounds for all the quantities in~\eqref{eq:law-tot-prob}
  with \eqref{eq:prob-fall}, \eqref{eq:drift-fall}, and \eqref{eq:drift-forward}.
  Before bounding the overall drift $\expectt{\Delta_{t+1}}$, we
  remark that the requirement on the
  population size imposed by condition (G3) implies that for any constants
$\upsilon>0$ and $C>0$, and sufficiently large $\lambda$,
  \begin{align*}
    \frac{\lambda}{16C\ln\lambda} > \lambda^{\frac{1}{1+\upsilon}}> \frac{1}{\delta^2\gamma_0},
  \end{align*}
  which implies that 
\begin{align}
  C\ln\lambda < \frac{\lambda\delta^2\gamma_0}{16}. \label{eq:loglambda-bound}
\end{align}
  The overall drift is now bounded by
  \begin{align}
  \expectt{\Delta_{t+1}}
    &=       (1 - \probt{Y_{t+1}<Y_t})\expectt{\Delta_{t+1} \mid Y_{t+1}\geq Y_t} \\
    &\quad\;    + \probt{Y_{t+1}<Y_t}\expectt{\Delta_{t+1} \mid Y_{t+1}< Y_t} \\
    & \geq \frac{\eta\varphi}{1+\eta} - \exp\left(-\frac{9}{16}\delta^2\gamma_0\lambda\right)\left(\frac{3m\eta\lambda^2}{z_*}+\frac{\eta\varphi}{1+\eta}\right)\\
    & =
      \frac{\eta\varphi}{1+\eta} -
      \exp\left(-\frac{9}{16}\delta^2\gamma_0\lambda+C\ln\lambda\right)
      \left(\frac{3m\eta\lambda^{2-C}}{z_*} +\frac{\eta\varphi}{(1+\eta)\lambda^C}\right)\\
    \intertext{by (\ref{eq:loglambda-bound})}
    & >
      \frac{\eta\varphi}{1+\eta} -
      \exp\left(-\frac{1}{2}\delta^2\gamma_0\lambda\right)
      \left(\frac{3m\eta\lambda^{2-C}}{z_*} +\frac{\eta\varphi}{(1+\eta)\lambda^C}\right)\\
    \intertext{by condition (G3)}
    & >
      \frac{\eta\varphi}{1+\eta} -(\frac{z_*}{m})      
      \left(\frac{3m\eta\lambda^{2-C}}{z_*} +\frac{\eta\varphi}{(1+\eta)\lambda^C}\right)\\
    \intertext{choosing $C=3$}
    & =
      \frac{\eta\varphi}{1+\eta} - \frac{3\eta}{\lambda} -\frac{\eta\varphi}{(1+\eta)\lambda^3m}\\
    \intertext{by }
    & >
      \frac{\eta\varphi}{1+\eta} - \frac{3\eta\delta}{\sqrt{\lambda}} -\frac{\eta\varphi}{(1+\eta)\lambda^3m}\\
    \intertext{finally, by noting that $1+\eta<1+1/\lambda$ from
    Eq. (\ref{eq:eta-bound}) and that $\varphi=\delta(1-\delta')$ for
    a constant $\delta'\in(0,1)$ mean that for any constant
    $\rho\in(0,1)$, for sufficiently large $\lambda$}
    & > \frac{\eta\varphi(1-\rho) }{1+\eta}.\label{eq:overall-drift}
  \end{align}

  We now verify condition 3 of Theorem~\ref{thm:pol-drift}, \ie,
  that $T$ has
  finite expectation. Let $p_*:=\min\{(1+\delta)(1/\lambda^2), z_*\}>0$, and
  note by conditions (G1) and (G2a) that the current level increases by at
  least one with probability
  $\probt{Y_{t+1}>Y_t}\geq (p_*)^{\gamma_0\lambda}$.
  Due to the definition of the modified process $D'$, if $Y_t=m$, then $Y_{t+1}=m$.
  Hence, the probability of reaching $Y_t=m$ is lower bounded by the
  probability of the event that the current level increases in all of
  at most $m$ consecutive
  generations, \ie, $\probt{Y_{t+m} =m} \geq (p_*)^{\gamma_0\lambda m}>0$.
  It follows that $\expect{T}<\infty$.

  By Theorem~\ref{thm:pol-drift}, the upper bound on $g(0,1)$ in \eqref{eq:distance-bound-0} 
  and the lower bound on the drift in Eq. (\ref{eq:overall-drift}) and the definition of $T$,
  \begin{align*}
    \expect{T}
    &\leq \frac{\lambda(1+\eta)g(0,1)}{\eta\varphi(1-\rho)}\\
    & < \frac{\lambda(1+\eta)}{\eta\varphi(1-\rho)}\left(\frac{\eta m\lambda^2}{1+\eta}+\varphi\sum_{i=1}^{m-1}\frac{4+\lambda z_i}{\lambda z_i}\right)\\
    & < \frac{\lambda}{(1-\rho)}\left(\frac{m\lambda^2}{\varphi}+\frac{1+\eta}{\eta}\sum_{i=1}^{m-1}\left(\frac{4}{\lambda z_i}+1\right)\right)\\
    \intertext{using Eq. (\ref{eq:eta-bound}) and $\varphi:=\delta(1-\delta')$}
    & < \frac{\lambda}{(1-\rho)}\left(\frac{m\lambda^2}{\delta(1-\delta')}+\left(\frac{11\lambda}{3\delta}+1\right)\sum_{i=1}^{m-1}\left(\frac{4}{\lambda z_i}+1\right)\right)\\    
    \intertext{noting that $1<1/\delta\leq\lambda/(3\delta)$ for  $\lambda\geq 3$}
    & < \frac{\lambda}{(1-\rho)}\left(\frac{m\lambda^2}{\delta(1-\delta')}+\left(\frac{4\lambda}{\delta}\right)\sum_{i=1}^{m-1}\left(\frac{4}{\lambda z_i}+1\right)\right)\\    
    & = \frac{\lambda}{(1-\rho)}\left(\frac{m\lambda^2}{\delta(1-\delta')}+\frac{4\lambda(m-1)}{\delta}+\left(\frac{16}{\delta}\right)\sum_{i=1}^{m-1}\frac{1}{z_i}\right)\\        
    \intertext{since $\delta'$ is a constant with respect to
    $\lambda$, for large $\lambda$, this is upper bounded
    by}
    & < \frac{\lambda}{(1-\rho)\delta}\left(\frac{m\lambda^2}{(1-\delta')^2}+16\sum_{i=1}^{m-1}\frac{1}{z_i}\right)\\    
    \intertext{for any constant $c''>1$, we can choose the
    constants $\rho$
    and $\delta'$ such that $c''>(1-\rho)^{-1}(1-\delta')^{-2}$}
    & < \frac{c''\lambda}{\delta}\left(m\lambda^2+16\sum_{i=1}^{m-1}\frac{1}{z_i}\right).
  \end{align*}
\end{proof}

\section{Maximin Optimisation of Bilinear Functions}\label{sec:probl-maxim-optim}
\subsection{Maximin Optimisation Problems}

This section introduces maximin-optimisation problems which is an
important domain for competitive co-evolutionary algorithms \cite{jensen_new_2004,al-dujaili_application_2019,miyagi_adaptive_2021}. We will
then describe a class of maximin-optimisation problems called \bilinear.

It is a common scenario in real-world optimisation that the quality of
candidate solutions depend on the actions taken by some adversary.
Formally, we can assume that there exists a function
\begin{align*}
  g:\mathcal{X}\times\mathcal{Y}\rightarrow\mathbb{R},
\end{align*}
where $g(x,y)$ represents the ``quality'' of solution $x$
when the adversary takes action $y$.

A cautious approach to such a scenario is to search for the candidate
solution which maximises the objective, assuming that the adversary
takes the least favourable action for that solution. Formally, this
corresponds to the \emph{maximin optimisation problem}, i.e., to
maximise the function
\begin{align}
  f(x) := \min_{y\in\mathcal{Y}}  g(x,y).\label{eq:maximin-problem}  
\end{align}
It is desirable to design good algorithms for such problems because
they have important applications in economics, computer science,
machine learning (GANs), and other disciplines.

However, maximin-optimisation problems are computationally challenging
because to accurately evaluate the function $f(x)$, it is necessary to
solve a minimisation problem. Rather than evaluating $f$ directly, the
common approach is to simultaneously maximise $g(x,y)$ with respect to
$x$, while minimising $g(x,y)$ with respect to $y$. For example, if
the gradient of $g$ is available, it is popular to do gradient
ascent-gradient descent.

Following conventions in theory of evolutionary computation
\cite{droste_upper_2006}, we will assume that an algorithm has
\emph{oracle access} to the function $g$. This means that the
algorithm can evaluate the function $g(x,y)$ for any selected pair of
arguments $(x,y)\in\X\times\Y$, however it does not have access to any
other information about $g$, including its definition or the
derivative.  Furthermore, we will assume that $\X=\Y=\{0,1\}^n$, i.e.,
the set of bitstrings of length $n$. While other spaces could be
considered, this choice aligns well with existing runtime analyses of
evolutionary algorithms in discrete domains \cite{wegener_methods_2000,RuntimeAnalysis2020Book}.
To develop a co-evolutionary algorithm for maximin-optimisation, we will
rely on the following dominance relation on the set of pairs
$\mathcal{X}\times\mathcal{Y}$.

\begin{definition}\label{def:domination}
  Given a function
  $g:\mathcal{X}\times\mathcal{Y}\rightarrow\mathbb{R}$ and two pairs
  $(x_1,y_1),(x_2,y_2)\in\mathcal{X}\times\mathcal{Y}$, we say that
  $(x_1,y_1)$ dominates $(x_2,y_2)$ wrt $g$, denoted $(x_1,y_1)\succeq_g
  (x_2,y_2),$ if and only if
  $$g(x_1,y_2) \geq g(x_1,y_1) \geq g(x_2,y_1).$$
\end{definition}

\subsection{The \bilinear Problem}

In order to develop appropriate analytical tools to analyse the
runtime of evolutionary algorithms, it is necessary to start the
analysis with simple and well-understood problems \cite{wegener_methods_2000}.
We therefore define a simple class of a maximin-optimisation
problems that has a particular clear structure. The maximin function 
is defined for two parameters $\alpha,\beta\in[0,1]$ by
\begin{align}
  \text{\bilinear}(x,y) & :=  \|y\|(\|x\|-\beta n)-\alpha n \|x\| \label{eq:simple-bilinear},
\end{align}
where we recall that for any bitstring $z\in\{0,1\}^n$, $\|z\|:=\sum_{i=1}^n z_i$
denotes the number of 1-bits in $z$. The function is illustrated in
Figure~\ref{fig:simple-bilinear1} (left). Extended to the real domain, it
is clear that the function is concave-convex, because $f(x)=g(x,y)$ is
concave (linear) for all $y$, and $h(y)=g(x,y)$ is convex (linear) for
all $x$. The gradient of the function is
$\nabla g = (\|y\| - \alpha n,\|x\| - \beta n)$.
Clearly, we have $\nabla g=0$ when $\|x\|=\beta n$ and $\|y\|=\alpha n$.

\begin{figure*}
  \centering
  \includegraphics[width=0.45\textwidth]{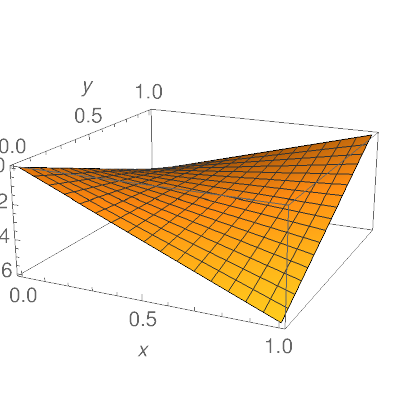}
\includegraphics[width=0.5\textwidth]{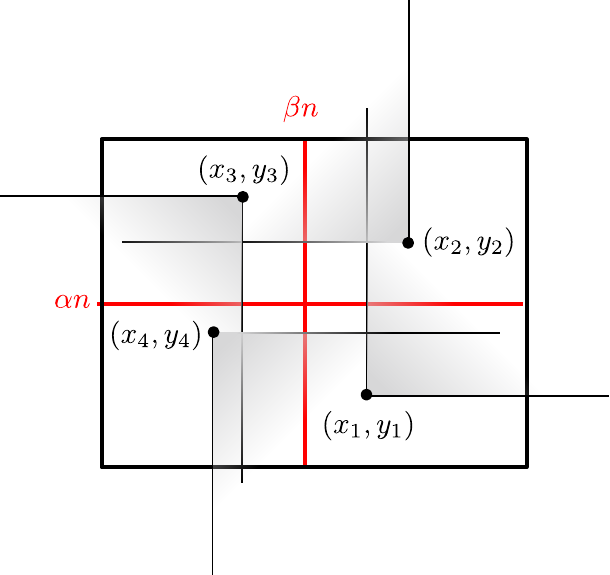}
  \caption{Left: \bilinear for $\alpha=0.4$ and
    $\beta=0.6$.\label{fig:simple-bilinear1} Right: Dominance relationships in \bilinear.}
\end{figure*}

Assuming that the prey (in \Y) always responds with an optimal decision
for every $x\in X$, the predator is faced with the unimodal function $f$ 
below which has maximum when $\|x\|=\beta n$.
\begin{align*}
  f(x) := \min_{y\in\{0,1\}^n} g(x,y) =
  \begin{cases}
    \|x\|(1-\alpha n)-\beta n&
    \text{if } \|x\|\leq\beta n\\
    -\alpha n\|x\|  & \text{if } \|x\|> \beta n.
  \end{cases}  
\end{align*}
The special case where $\alpha=0$ and $\beta=1$ gives
$f(x) = \onemax(x)-n$, i.e., the function $f$ is essentially
equivalent to \onemax, one of the most studied objective functions in
runtime analysis of evolutionary algorithms
\cite{RuntimeAnalysis2020Book}.

We now characterise the dominated solutions wrt \bilinear.

\begin{lemma}\label{lemma:dom-charact}
  Let $g:= $\bilinear. 
  For all pairs $(x_1,y_1),(x_2,y_2)\in \mathcal{X}\times\mathcal{Y}$,
  $(x_1,y_1)\succeq_g (x_2,y_2)$
  if and only if
  \begin{align*}
    \|y_2\|(\|x_1\|-\beta n)\; \geq\;  \|y_1\|(\|x_1\|-\beta n) &\quad \wedge\\
    \|x_1\|(\|y_1\|-\alpha n)\; \geq\;  \|x_2\|(\|y_1\|-\alpha n)&.
  \end{align*}  
\end{lemma}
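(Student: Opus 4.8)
The statement is an ``if and only if'' between a compound inequality (the definition of $\succeq_g$) and the conjunction of two inequalities, so the natural route is to expand each of the three relevant evaluations of $g=\text{\bilinear}$ and show that, after cancellation, the first defining inequality collapses to the first displayed condition and the second defining inequality collapses to the second. Concretely, I would start from Definition~\ref{def:domination}, which says $(x_1,y_1)\succeq_g(x_2,y_2)$ holds exactly when $g(x_1,y_2)\ge g(x_1,y_1)$ and $g(x_1,y_1)\ge g(x_2,y_1)$, and treat the two conjuncts separately.

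For the first conjunct, I would substitute $g(x_1,y)=\|y\|(\|x_1\|-\beta n)-\alpha n\|x_1\|$ for $y=y_2$ and $y=y_1$. The term $-\alpha n\|x_1\|$ is common to both sides and cancels, leaving $\|y_2\|(\|x_1\|-\beta n)\ge\|y_1\|(\|x_1\|-\beta n)$, which is precisely the first condition in the lemma, and each step is an equivalence so no information is lost. For the second conjunct, I would write out $g(x_1,y_1)=\|y_1\|\|x_1\|-\beta n\|y_1\|-\alpha n\|x_1\|$ and $g(x_2,y_1)=\|y_1\|\|x_2\|-\beta n\|y_1\|-\alpha n\|x_2\|$; here the common term is $-\beta n\|y_1\|$, and after cancelling it the inequality $g(x_1,y_1)\ge g(x_2,y_1)$ becomes $\|x_1\|\|y_1\|-\alpha n\|x_1\|\ge\|x_2\|\|y_1\|-\alpha n\|x_2\|$, which factors as $\|x_1\|(\|y_1\|-\alpha n)\ge\|x_2\|(\|y_1\|-\alpha n)$, the second condition. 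Combining the two equivalences gives the claim.

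There is essentially no obstacle here: the whole argument is elementary algebra relying only on the bilinear form of $g$, where the $x$-variable enters the first defining inequality only through the fixed $\|x_1\|$ and the $y$-variable enters the second only through the fixed $\|y_1\|$, so each inequality is effectively linear in the one quantity that varies and the separation into the two stated conditions is forced. The only thing to be careful about is bookkeeping — making sure the correct constant term ($-\alpha n\|x_1\|$ in the first case, $-\beta n\|y_1\|$ in the second) is the one being cancelled — and noting explicitly that every manipulation is reversible so that the equivalence, not just one implication, is obtained.
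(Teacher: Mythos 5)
Your proposal is correct and follows essentially the same route as the paper: expand $g$ from its definition in each of the two defining inequalities of $\succeq_g$, cancel the common term ($-\alpha n\|x_1\|$ in the first, $-\beta n\|y_1\|$ in the second), and factor to obtain the two stated conditions as equivalences. The paper writes out exactly this chain of equivalences for the first conjunct and notes the second follows analogously.
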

\begin{proof}
  The proof follows from the definition of $\succeq_g$ and $g$:
  \begin{align*}
                        & g(x_1,y_2)\geq g(x_1,y_1)\\
    \Longleftrightarrow\quad & \|x_1\|\|y_2\|-\alpha n\|x_1\|-\beta n\|y_2\|\geq \|x_1\|\|y_1\|-\alpha n\|x_1\|-\beta n\|y_1\|\\
    \Longleftrightarrow\quad & \|y_2\|(\|x_1\|-\beta n)\geq \|y_1\|(\|x_1\|-\beta n).
  \end{align*}
  The second part follows analogously from $g(x_1,y_1)\geq g(x_2,y_1)$.
\end{proof}

Figure~\ref{fig:simple-bilinear1} (right) illustrates
Lemma~\ref{lemma:dom-charact}, where the $x$-axis and $y$-axis
correspond to the number of 1-bits in the predator $x$, respectively
the number of 1-bits in the prey $y$. The figure contains four pairs,
where the shaded area corresponds to the parts dominated
by that pair: The pair $(x_1,y_1)$ dominates $(x_2,y_2)$,
the pair $(x_2,y_2)$ dominates $(x_3,y_3)$, the pair $(x_3,y_3)$
dominates $(x_4,y_4)$, and the pair $(x_4,y_4)$ dominates
$(x_1,y_1)$. This illustrates that the dominance-relation is intransitive.
Lemma~\ref{lemma:intransitive} states this and other properties of $\succeq_g$.

\begin{lemma}\label{lemma:intransitive}
  The relation $\succeq_g$ is reflexive, antisymmetric, and
  intransitive for $g=$ \bilinear.
\end{lemma}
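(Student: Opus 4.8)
The plan is to verify the three properties in turn, arguing straight from Definition~\ref{def:domination} for reflexivity and antisymmetry and using the weight-characterisation of Lemma~\ref{lemma:dom-charact} to build an explicit failure of transitivity. Reflexivity is immediate: $(x,y)\succeq_g(x,y)$ only asks for $g(x,y)\ge g(x,y)\ge g(x,y)$, which always holds. For antisymmetry I would suppose both $(x_1,y_1)\succeq_g(x_2,y_2)$ and $(x_2,y_2)\succeq_g(x_1,y_1)$; unfolding the definition gives the chains $g(x_1,y_2)\ge g(x_1,y_1)\ge g(x_2,y_1)$ and $g(x_2,y_1)\ge g(x_2,y_2)\ge g(x_1,y_2)$, and concatenating them forces $g(x_1,y_2)=g(x_1,y_1)=g(x_2,y_1)=g(x_2,y_2)$. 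Hence mutual domination can only occur when the two pairs have the same $g$-profile (in particular the same payoff) — which is the sense in which $\succeq_g$ is antisymmetric. It is worth flagging that this cannot be strengthened to $(x_1,y_1)=(x_2,y_2)$, since two distinct bitstrings of equal weight yield pairs with identical $g$-profile that dominate each other; so antisymmetry here is understood up to $g$-indistinguishability, and via Lemma~\ref{lemma:dom-charact} it amounts to saying mutual domination forces $(\|x_1\|-\beta n)(\|y_1\|-\|y_2\|)=0$ and $(\|y_1\|-\alpha n)(\|x_1\|-\|x_2\|)=0$ together with the index-swapped identities.

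For intransitivity it suffices to produce $(x_1,y_1)\succeq_g(x_2,y_2)$, $(x_2,y_2)\succeq_g(x_3,y_3)$ with $(x_1,y_1)\not\succeq_g(x_3,y_3)$, and I would in fact exhibit the full $4$-cycle of Figure~\ref{fig:simple-bilinear1} (right). By Lemma~\ref{lemma:dom-charact}, whether $(x,y)$ dominates $(x',y')$ depends on the Hamming weights only through the two sign conditions $(\|y'\|-\|y\|)(\|x\|-\beta n)\ge 0$ and $(\|x\|-\|x'\|)(\|y\|-\alpha n)\ge 0$. Assuming $n$ is large enough that $\alpha n,\beta n$ are integers with $0<\alpha n,\beta n<n$, take four pairs whose weight vectors $(\|x_i\|,\|y_i\|)$ are the corners $(\beta n+1,\alpha n+1)$, $(\beta n-1,\alpha n+1)$, $(\beta n-1,\alpha n-1)$, $(\beta n+1,\alpha n-1)$, with pairwise distinct underlying bitstrings. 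Going around this rectangle, consecutive pairs agree in one weight coordinate, so one sign condition is a trivial equality and the other is a quick strict-positivity check; this gives $(x_1,y_1)\succeq_g(x_2,y_2)\succeq_g(x_3,y_3)\succeq_g(x_4,y_4)\succeq_g(x_1,y_1)$. Finally $(x_1,y_1)\not\succeq_g(x_3,y_3)$ because the first condition fails: $(\|y_3\|-\|y_1\|)(\|x_1\|-\beta n)=(-2)(+1)<0$. So $\succeq_g$ is not transitive.

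The routine part is the sign bookkeeping for the four dominations. The only place genuine care is needed is placing the four weight vectors one per region cut out by the lines $\|x\|=\beta n$ and $\|y\|=\alpha n$, so that the associated dominated regions — which rotate as one crosses these lines, by Lemma~\ref{lemma:dom-charact} — chain into a cycle; this is precisely the geometry drawn in Figure~\ref{fig:simple-bilinear1} (right). The secondary subtlety, already noted, is that "antisymmetric" must be read up to equality of the $g$-profile rather than as literal equality of pairs.
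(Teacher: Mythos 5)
Your proposal is correct and follows essentially the same route as the paper: reflexivity and antisymmetry are read off Definition~\ref{def:domination}, and intransitivity is witnessed by an explicit $4$-cycle of weight vectors around the saddle point $(\beta n,\alpha n)$ (the paper uses the corners $(\beta\pm\varepsilon,\alpha\pm 2\varepsilon)$-type points; your integer corners $(\beta n\pm 1,\alpha n\pm 1)$ serve the same purpose, and your sign checks via Lemma~\ref{lemma:dom-charact} go through). The one substantive divergence is on antisymmetry, and there your version is the more careful one: the paper asserts that $(x_1,y_1)\succeq_g(x_2,y_2)$ with $(x_1,y_1)\neq(x_2,y_2)$ forces a strict inequality in the defining chain, which is false when the two pairs are distinct bitstrings with identical Hamming weights (then all inequalities are equalities and the pairs dominate each other). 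You correctly conclude only that mutual domination forces equality of the $g$-profile, i.e.\ antisymmetry holds up to $g$-indistinguishability rather than literally; that is the strongest true statement here, and flagging it is a genuine improvement over the paper's argument. Both proofs implicitly assume $\alpha n$ and $\beta n$ sit strictly inside $[0,n]$ so that the four corner weights are realisable, which is a shared, harmless restriction.
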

\begin{proof}
  Reflexivity follows directly from the definition. Assume that
  $(x_1,y_1)\succeq_g (x_2,y_2)$ and $(x_1,y_1)\neq (x_2,y_2)$. Then,
  either $g(x_1,y_2)>g(x_1,y_2),$ or $g(x_1,y_1)>g(x_2,y_1)$, or
  both. Hence, $(x_2,y_2)\not\succeq_g (x_1,y_1)$, which proves that
  the relation is antisymmetric.
  
  To prove intransitivity, it can be shown for any $\varepsilon>0,$
  that $p_1\succeq_g p_2\succeq_g p_3\succeq_g p_2 \succeq_g p_1$
  where
  \begin{align*}
    p_1 & = (\beta+\varepsilon, \alpha-2\varepsilon) &
    p_2 & = (\beta-2\varepsilon, \alpha-\varepsilon) \\
    p_3 & = (\beta-\varepsilon, \alpha+2\varepsilon) &    
    p_4 & = (\beta+2\varepsilon, \alpha+\varepsilon).
  \end{align*}  
\end{proof}

We will frequently use the following simple lemma, which follows 
from the dominance relation and the definition of \bilinear. 
\begin{lemma}\label{lemma:half-prob}
  For \bilinear, and any pairs of populations
  $P\in\mathcal{X}^\lambda, Q\in\mathcal{Y}^\lambda$, consider two samples
  $(x_1,y_1),(x_2,y_2)\sim\unif(P\times Q)$ . Then the following
  conditional probabilities hold.
  \begin{align*}
    \prob{(x_1,y_1)\succeq (x_2,y_2)\mid y_1\leq y_2\wedge x_1>\beta n\wedge x_2>\beta n} & \geq 1/2\\
    \prob{(x_1,y_1)\succeq (x_2,y_2)\mid y_1\geq y_2\wedge x_1<\beta n\wedge x_2<\beta n} & \geq 1/2\\
    \prob{(x_1,y_1)\succeq (x_2,y_2)\mid x_1\geq x_2\wedge y_1>\alpha n\wedge y_2>\alpha n} & \geq 1/2\\
    \prob{(x_1,y_1)\succeq (x_2,y_2)\mid x_1\leq x_2\wedge y_1<\alpha n\wedge y_2<\alpha n} & \geq 1/2.
  \end{align*}
\end{lemma}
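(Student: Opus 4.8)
The plan is to apply Lemma~\ref{lemma:dom-charact}, which characterises $(x_1,y_1)\succeq_g(x_2,y_2)$ for $g=\bilinear$ as the conjunction of the two product-inequalities
\[
  (\|y_2\|-\|y_1\|)(\|x_1\|-\beta n)\geq 0
  \qquad\text{and}\qquad
  (\|x_1\|-\|x_2\|)(\|y_1\|-\alpha n)\geq 0 .
\]
The observation I would build on is that in each of the four conditioning events, one of these two inequalities is \emph{forced}: the conditioning pins down the sign of one of the two factors in that product, and the sign of the other factor then makes the product non-negative. (For instance, in the first case $\|y_1\|\leq\|y_2\|$ gives $\|y_2\|-\|y_1\|\geq 0$ and $\|x_1\|>\beta n$ gives $\|x_1\|-\beta n>0$, so the first inequality holds surely; in the third case $\|x_1\|\geq\|x_2\|$ and $\|y_1\|>\alpha n$ force the second inequality; and similarly for the other two.) So in each case it suffices to lower-bound the conditional probability that the \emph{remaining} inequality holds, and I claim that probability is at least $1/2$ by a symmetry argument.

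Take the first case, conditioning on $E:=\{\|y_1\|\leq\|y_2\|,\ \|x_1\|>\beta n,\ \|x_2\|>\beta n\}$ (assuming $\prob{E}>0$, else the statement is vacuous). As noted, the first inequality holds surely on $E$, so
\[
  \prob{(x_1,y_1)\succeq(x_2,y_2)\mid E}
  = \prob{(\|x_1\|-\|x_2\|)(\|y_1\|-\alpha n)\geq 0\mid E}.
\]
Because $(x_1,y_1)$ and $(x_2,y_2)$ are independent draws from $\unif(P\times Q)$, the four random objects $x_1,x_2\sim\unif(P)$ and $y_1,y_2\sim\unif(Q)$ are mutually independent; and $E$ is the intersection of an event on $(y_1,y_2)$ with an event on $x_1$ alone and an event on $x_2$ alone. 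Hence conditioning on $E$ leaves $x_1$ and $x_2$ i.i.d.\ (each a uniform draw from the slots of $P$ with $\|\cdot\|>\beta n$) and independent of $y_1$. Now condition additionally on $y_1$: if $\|y_1\|=\alpha n$ the product is $0$; otherwise the inequality reads $\|x_1\|\geq\|x_2\|$ or $\|x_1\|\leq\|x_2\|$, and by exchangeability of $x_1,x_2$ (so $\prob{\|x_1\|>\|x_2\|}=\prob{\|x_1\|<\|x_2\|}$, with a non-negative tie probability) each of these has probability at least $1/2$. So the conditional probability given $(E,y_1)$ is always at least $1/2$, and averaging over $y_1$ gives the bound. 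The remaining three inequalities follow identically after interchanging the roles of $x$ and $y$, of $\alpha$ and $\beta$, and of $\leq$ and $\geq$ as appropriate — in each one the conditioning forces one of the two product-inequalities of Lemma~\ref{lemma:dom-charact}, and the other is disposed of by the same exchangeability argument (e.g.\ for the third inequality one conditions on the sign of $\|x_1\|-\beta n$ and uses the exchangeability of $y_1,y_2$).

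The only point that genuinely needs care is the conditional-independence bookkeeping in the middle step: verifying that conditioning on each of the four events keeps the relevant pair of samples i.i.d.\ and independent of the third relevant variable. This is immediate once each conditioning event is written as a product of events over the independent coordinates $x_1,x_2,y_1,y_2$, so I do not anticipate a real obstacle — after that reduction everything is the routine sign analysis above.
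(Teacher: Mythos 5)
Your proof is correct and follows essentially the same route as the paper's: reduce via Lemma~\ref{lemma:dom-charact} to the two product-inequalities, observe that the conditioning forces one of them, and dispose of the other by exchangeability of the two independent samples. If anything you are slightly more careful than the paper, which asserts that domination is equivalent to $x_1\leq x_2$ in the first case (true only when $\|y_1\|<\alpha n$), whereas your case split on the sign of $\|y_1\|-\alpha n$ handles all subcases cleanly without changing the conclusion.
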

\begin{proof}
  All the statements can be proved analogously, so we only show the
  first statement. If $y_1\leq y_2$ and $x_1>\beta n$, $x_2>\beta n$,
  then by Lemma~\ref{lemma:dom-charact}, $(x_1,y_1)\succeq (x_2,y_2)$ if
  and only if $x_1\leq x_2$.

  Since $x_1$ and $x_2$ are independent samples from the same
  (conditional) distribution, it follows that
  \begin{align}
    1 & \geq \prob{x_1>x_2} + \prob{x_1<x_2} = 2\prob{x_1>x_2}
  \end{align}
  Hence, we get
    $\prob{x_1\leq x_2} = 1-\prob{x_1>x_2} \geq 1-1/2 = 1/2$.
\end{proof}

\section{A co-Evolutionary Algorithm for Maximin Optimisation}\label{sec:co-evol-algor}

We now introduce a co-evolutionary algorithm for maximin
optimisation (see Algorithm~\ref{alg:pdcoea}).

The predator and prey populations of size $\lambda$ each are
initialised uniformly at random in lines \ref{alg:pdcoea:init1}--\ref{alg:pdcoea:init2}.
Lines~\ref{alg:pdcoea:evaluate1}--\ref{alg:pdcoea:evaluate3} describe
how each pair of predator and prey are produced, first by selecting a
predator-prey pair from the population, then applying mutation.  In
particular, the algorithm selects uniformly at random two predators
$x_1,x_2$ and two prey $y_1,y_2$ in lines
\ref{label:alg:pdcoea:selection1}--\ref{label:alg:pdcoea:selection2}.
The first pair $(x_1,y_1)$ is \emph{selected} if it dominates the
second pair $(x_2,y_2)$, otherwise the second pair is selected. The
selected predator and prey are mutated by standard bitwise
mutation in lines
\ref{label:alg:pdcoea:mutation1}--\ref{label:alg:pdcoea:mutation2},
i.e., each bit flips independently with probability $\chi/n$ (see
Section C3.2.1 in \cite{ECHandbook}).
The algorithm is a special case of the co-evolutionary
framework in Section~\ref{sec:co-evolutionary-framework}, where
line \ref{alg:coev:evaluate2} in Algorithm~\ref{algo:coea-process}
corresponds to lines \ref{alg:pdcoea:evaluate1}--\ref{alg:pdcoea:evaluate3}
in Algorithm~\ref{alg:pdcoea}.

\begin{algorithm}
  \caption{Pairwise Dominance CoEA (\pdcoea)\label{alg:pdcoea}}
  \begin{algorithmic}[1]
    \Require Min-max-objective function $g:\{0,1\}^n\times\{0,1\}^n\rightarrow\mathbb{R}$.
    \Require Population size $\lambda\in\mathbb{N}$ and mutation rate $\chi\in (0,n]$
    \For{$i\in [\lambda]$}\label{alg:pdcoea:init1}
    \State Sample $P_0(i)\sim\unif(\{0,1\}^n)$
    \State Sample $Q_0(i)\sim\unif(\{0,1\}^n)$\label{alg:pdcoea:init2}
    \EndFor\label{alg:pdcoea:init3}
    \For{$t\in\mathbb{N}$ until termination criterion met}
    \For{$i\in[\lambda]$}\label{alg:pdcoea:evaluate1}
    \State Sample $(x_1,y_1)\sim \unif(P_t\times Q_t)$\label{label:alg:pdcoea:selection1}
    \State Sample $(x_2,y_2)\sim \unif(P_t\times Q_t)$\label{label:alg:pdcoea:selection2}
    \If{$(x_1,y_1)\succeq_g (x_2,y_2)$}
    \State $(x,y):= (x_1,y_1)$
    \Else
    \State $(x,y) := (x_2,y_2)$\label{label:alg:pdcoea:selection3}
    \EndIf
    \State Obtain $x'$ by flipping each bit in $x$ with probability $\chi/n$.\label{label:alg:pdcoea:mutation1}
    \State Obtain $y'$ by flipping each bit in $y$ with probability $\chi/n$.\label{label:alg:pdcoea:mutation2}    
    \State Set $P_{t+1}(i):=x'$ and $Q_{t+1}(i):=y'$.
    \EndFor\label{alg:pdcoea:evaluate3}
    \EndFor
    \end{algorithmic}
\end{algorithm}

\begin{figure}
  \centering
  \includegraphics[width=6.5cm]{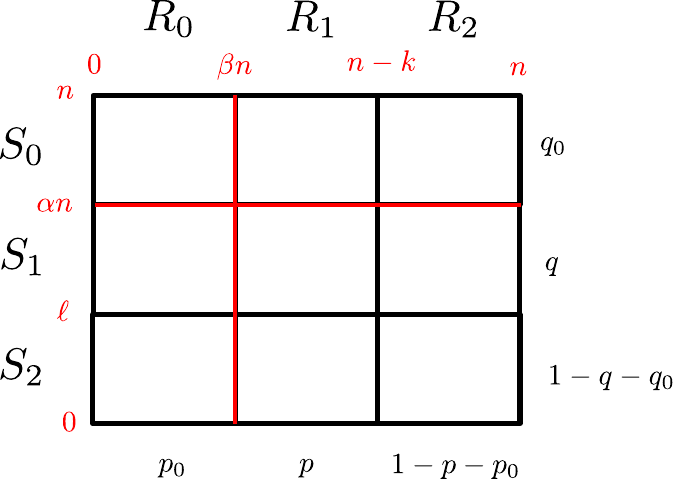}
  \caption{Partitioning of search space $\X\times\Y$ of \bilinear.}
  \label{fig:level-partition}
\end{figure}

Next, we will analyse the runtime of \pdcoea on \bilinear using
Theorem~\ref{thm:level-based}. For an arbitrary $\varepsilon\geq 1/n$
(not necessarily constant), we will restrict the analysis to the case
where $\alpha - \varepsilon> 4/5$, and $\beta < \varepsilon$. Our goal
is to estimate the time until the algorithm reaches within an
$\varepsilon$-factor of the maximin-optimal point
$(\beta n,\alpha n)$. We note that our analysis does not extend to the
general case of arbitrary $\alpha$ and $\beta$, or $\varepsilon=0$
(exact optimisation). This is a limitation of the analysis, and not of
the algorithm. Our own empirical investigations show that \pdcoea with
appropriate parameters finds the exact maximin-optimal point of
\bilinear for any value of $\alpha$ and $\beta$.

In this setting, the behaviour of the algorithm can be described
intuitively as follows. The population dynamics will have two distinct
phases. In Phase~1, most prey have less than $\alpha n$
1-bits, while most predators have more than $\beta n$
1-bits. During this phase, predators and prey will decrease the number
of 1-bits. In Phase~2, a sufficient number of predators have less than
$\beta n$ 1-bits, and the number of 1-bits in the prey-population will
start to increase. The population will then reach the
$\varepsilon$-approximation described above.

From this intuition, we will now define a suitable
sequence of levels. We will start by dividing the space $\X\times\Y$
into different regions, as shown in Figure
\ref{fig:level-partition}. Again, the $x$-axis corresponds to the
number of 1-bits in the predator, while the $y$-axis corresponds to
the number of 1-bits in the prey.

For any $k\in[0,(1-\beta)n]$, we partition \X into three sets
\begin{align}
  R_0       & := \left\{x\in\X\mid 0 \leq \|x\| < \beta n\right\}\label{eq:S0-def}\\
  R_1(k)    & := \left\{x\in\X\mid \beta n \leq \|x\| < n - k \right\}, \text{ and }\label{eq:S1-def}\\
  R_2(k)    & := \left\{x\in\X\mid n-k \leq \|x\| \leq n \right\}.\label{eq:S2-def}
\end{align}
Similarly, for any $\ell\in[0,\alpha n)$, we partition \Y into three sets
\begin{align}
  S_0       & := \left\{y\in\Y\mid \alpha n \leq \|y\|\leq n\right\}\label{eq:R0-def}\\
  S_1(\ell) & := \left\{y\in\Y\mid \ell \leq \|y\| < \alpha
              n \right\}, \text{ and}\label{eq:R1-def}\\
  S_2(\ell) & := \left\{y\in\Y\mid  0 \leq \|y\| < \ell \right\}.\label{eq:R2-def}
\end{align}
For ease of notation, when the parameters $k$ and $\ell$ are clear 
from the context, we will simply refer to these sets as $R_0,R_1,R_2,
S_0,S_1$, and $S_2$. 
Given two populations $P$ and $Q$, and $C\subseteq \X\times \Y$, define
\begin{align*}
  \Punif{C} & := \Pr_{(x,y)\sim\unif(P\times Q)}\left((x,y)\in C \right)\\
  \Psel{C} & := \Pr_{(x,y)\sim\select(P\times Q)}\left((x,y)\in C \right).
\end{align*}
In the context of subsets of $\X\times \Y$, the set $R_i$ refers to
$R_i\times \Y$, and $S_i$ refers to $\X\times S_i$.  With the above
definitions, we will introduce the following quantities which depend
on $k$ and $\ell$:
\begin{align*}
  p_0 & := \Punif{R_0} &   p(k)   & := \Punif{R_1(k)} &
  q_0 & := \Punif{S_0} &   q(\ell)   & := \Punif{S_1(\ell)}
\end{align*}

During Phase~1, the typical behaviour is that only a small minority of
the individuals in the $Q$-population belong to region $S_0$. In this
phase, the algorithm ``progresses'' by decreasing the number of 1-bits
in the $P$-population. In this phase, the number of 1-bits will
decrease in the $Q$-population, however it will not be necessary to
analyse this in detail. To capture this, we define the levels for
Phase~1 for $j\in[0..(1-\beta)n]$ as
$A^{(1)}_j := R_0\cup R_1(j)$ and
$B^{(1)}_j  := S_2((\alpha-\varepsilon) n).$

During Phase~2, the typical behaviour is that there is a sufficiently
large number of $P$-individuals in region $R_0$, and the algorithm
progresses by increasing the number of 1-bits in the
$Q$-population. The number of 1-bits in the $P$-population will
decrease or stay at 0. To capture this, we define the levels for
Phase~2 for $j\in[0,(\alpha -\varepsilon)n]$
$A^{(2)}_j := R_0$ and
$B^{(2)}_j := S_1(j).$

The overall sequence of levels used for Theorem~\ref{thm:level-based}
becomes 
\[
  (A_0^{(1)}\times B_0^{(1)}), \ldots, (A^{(1)}_{(1-\beta)n},B^{(1)}_{(1-\beta)n}),
  (A_0^{(2)}\times B_0^{(2)}), \ldots, (A^{(2)}_{(\alpha-\varepsilon)n},B^{(2)}_{(\alpha-\varepsilon)n}),
\]

The notion of ``current level'' from Theorem~\ref{thm:level-based}
together with the level-structure can be exploited to infer properties
about the populations, as the following lemma demonstrates.

\begin{lemma}\label{lemma:phase1-condition}
  If the current level is $A^{(1)}_j\times B^{(1)}_j$, then $p_0<\gamma_0/(1-q_0)$.
\end{lemma}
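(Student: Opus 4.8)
The plan is to extract the claim almost immediately from the definition of ``current level'' in Theorem~\ref{thm:level-based}, using that the level sequence places all Phase~1 levels before all Phase~2 levels. First I would observe that if the current level is the Phase~1 level $A^{(1)}_j\times B^{(1)}_j$, then by maximality in the definition of the current level, \emph{every} level appearing later in the sequence --- in particular the first Phase~2 level $A^{(2)}_0\times B^{(2)}_0$ --- contains strictly fewer than $\gamma_0\lambda^2$ pairs of $P\times Q$.

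Next I would unfold this Phase~2 level. By construction $A^{(2)}_0=R_0$ and $B^{(2)}_0=S_1(0)$, and by \eqref{eq:R0-def} and \eqref{eq:R1-def} we have $S_1(0)=\{y\in\Y\mid\|y\|<\alpha n\}=\Y\setminus S_0$, so that $A^{(2)}_0\times B^{(2)}_0=R_0\times(\Y\setminus S_0)$. Since $(P\times Q)\cap(C\times D)$ is counted with multiplicity over index pairs, the count over a product set factorises:
\[
  \bigl|(P\times Q)\cap\bigl(R_0\times(\Y\setminus S_0)\bigr)\bigr|
   = \bigl|\{i\in[\lambda]\mid P(i)\in R_0\}\bigr|\cdot\bigl|\{k\in[\lambda]\mid Q(k)\notin S_0\}\bigr|.
\]
By the definitions $p_0=\Punif{R_0}$ and $q_0=\Punif{S_0}$, which are just the population frequencies of $R_0$ in $P$ and of $S_0$ in $Q$, the two factors are $p_0\lambda$ and $(1-q_0)\lambda$, so the count equals $p_0(1-q_0)\lambda^2$.

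Combining the two steps gives $p_0(1-q_0)\lambda^2<\gamma_0\lambda^2$, i.e.\ $p_0(1-q_0)<\gamma_0$. Before dividing I would record that $1-q_0>0$: the current level is occupied, so it contains at least one pair $(x,y)$ with $y\in B^{(1)}_j=S_2((\alpha-\varepsilon)n)$, and such a prey has $\|y\|<(\alpha-\varepsilon)n<\alpha n$, hence $y\notin S_0$; thus $(1-q_0)\lambda\ge1$. Dividing the inequality by $1-q_0$ then yields $p_0<\gamma_0/(1-q_0)$.

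The argument involves no hard step; the only two places needing care are (i) reading off from ``the current level is a Phase~1 level'' that every Phase~2 level --- in particular $A^{(2)}_0\times B^{(2)}_0$ --- is below the threshold $\gamma_0\lambda^2$, which is exactly the maximality in the definition of the current level together with the ordering of the level sequence, and (ii) verifying $1-q_0>0$ so that the final division is legitimate, which follows from the current level being non-empty.
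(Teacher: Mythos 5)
Your proposal is correct and is essentially the paper's argument in contrapositive form: the paper assumes $p_0(1-q_0)\geq\gamma_0$, computes $\vert(P\times Q)\cap(A^{(2)}_0\times B^{(2)}_0)\vert=p_0q(0)\lambda^2=p_0(1-q_0)\lambda^2\geq\gamma_0\lambda^2$ using $S_2(0)=\emptyset$, and derives a contradiction with the current level being in Phase~1, which is exactly your maximality observation read in the other direction. Your additional check that $1-q_0>0$ (so the division is legitimate) is a small point the paper leaves implicit, but it does not change the substance.
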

\begin{proof}
  Assume by contradiction that $p_0(1-q_0)\geq \gamma_0$. Note that by (\ref{eq:R2-def}),
  it holds $S_2(0) = \emptyset.$ Therefore, $1-q(0)-q_0=0$ and $q(0)=1-q_0$.
  By the definitions of the levels in Phase~2 and (\ref{eq:R1-def}),
  \begin{align*}
    \left\vert(P\times Q)\cap (A^{(2)}_0\times B_0^{(2)})\right\vert & = 
    \left\vert(P\times Q)\cap (R_0\times S_1(0))\right\vert\\
           & = p_0 q(0) \lambda^2
            = p_0 (1-q_0)\lambda^2
            \geq \gamma_0\lambda^2,
  \end{align*}
  implying that the current level must be 
  level $A^{(2)}_0\times B_0^{(2)}$ or a higher level in Phase~2,
  contradicting the assumption of the lemma.
\end{proof}

\subsection{Ensuring Condition (G2) during Phase~1}

The purpose of this section is to provide the building blocks
necessary to establish conditions (G2a) and (G2b) during Phase~1.  The
progress of the population during this phase will be jeopardised if
there are too many $Q$-individuals in $S_0$. We will employ the
negative drift theorem for populations \cite{LehreNegativeDrift2010}
to prove that it is unlikely that $Q$-individuals will drift via
region $S_1$ to region $S_0$. This theorem applies to algorithms that
can be described on the form of Algorithm~\ref{alg:psva} which makes
few assumptions about the selection step. The
$Q$-population in Algorithm~\ref{alg:pdcoea} is a special case of
Algorithm~\ref{alg:psva}.

\begin{algorithm}
  \caption{Population Selection-Variation Algorithm \cite{LehreNegativeDrift2010}  \label{alg:psva}
}
  \begin{algorithmic}[1]
    \Require Finite state space $\Y$.
    \Require Transition matrix $\pmut$ over $\Y$.
    \Require Population size $\lambda\in\mathbb{N}$.
    \Require Initial population $Q_0\in\Y^\lambda$.
    \For{$t=0,1,2,\dots$ until the termination condition is met}
    \For{$i=1$ to $\lambda$}    
    \State Choose $I_t(i)\in[\lambda]$, and set $x := Q_t(I_t(i))$.
    \State Sample $x'\sim \pmut(x)$ and set $Q_{t+1}(i) := x'$.
    \EndFor
    \EndFor
  \end{algorithmic}
\end{algorithm}

We now state the negative drift theorem for populations.

\begin{theorem}[\cite{LehreNegativeDrift2010}]\label{thm:neg-drift-pop}
  Given Algorithm~\ref{alg:psva} on $\Y=\{0,1\}^n$ with
  population size $\lambda\in \poly(n)$, and 
  transition matrix \pmut corresponding to flipping each bit independently with
  probability $\chi/n$.
  Let $a(n)$
  and $b(n)$ be positive integers s.t. $b(n)\leq n/\chi$ and
  $d(n):=b(n)-a(n)=\omega(\ln n)$. For an
  $x^*\in\{0,1\}^n$, let $T(n)$ be the smallest
  $t\geq 0$, s.t. $\min_{j\in[\lambda]}H(P_t(j), x^*)\leq a(n)$.
  Let $S_t(i) := \sum_{j=1}^\lambda [I_t(j)=i]$.
  If there are constants $\alpha_0\geq 1$ and $\delta>0$ such that
  \begin{description}
   \item[1)] $\expect{S_t(i)\mid a(n)<H(P_t(i),x^*)<b(n)}\leq \alpha_0$ for all $i\in[\lambda]$
   \item[2)] $\psi := \ln(\alpha_0)/\chi + \delta< 1$, and
   \item[3)] $\frac{b(n)}{n} < \min\left\{ \frac{1}{5}, \frac{1}{2}-\frac{1}{2}\sqrt{\psi(2-\psi)}\right\}$,
  \end{description}
  then $\prob{T(n)\leq e^{cd(n)}}\leq e^{-\Omega(d(n))}$ for some constant $c>0$.
\end{theorem}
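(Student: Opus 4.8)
The plan is to prove this by the exponential-potential (negative-drift) method applied to the whole population simultaneously. Write $H_t(i):=H(P_t(i),x^*)$, set $\beta_0:=b(n)/n$, and call an individual \emph{interior} at time $t$ if $a(n)<H_t(i)<b(n)$. First I would reduce to the case where every individual starts at distance $\ge b(n)$: in the intended applications $P_0$ (or $Q_0$) is sampled uniformly and the target region is far from the centre, so this holds with overwhelming probability, and outside that event the bound is vacuous. Then pick the potential base $\eta:=\sqrt{(1-\beta_0)/\beta_0}>1$, put $\phi(x):=\eta^{\max\{0,\,b(n)-H(x,x^*)\}}$ and $Y_t:=\sum_{i=1}^\lambda \phi(P_t(i))$. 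Under the starting condition $Y_0=\lambda$, and whenever $T(n)\le t$ the individual within distance $a(n)$ alone forces $Y_t\ge \eta^{b(n)-a(n)}=\eta^{d(n)}$; so it is enough to show $Y_t$ stays below $\eta^{d(n)}$ throughout $e^{cd(n)}$ generations with probability $1-e^{-\Omega(d(n))}$.

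The core is a one-step estimate: as long as no individual has reached distance $a(n)$,
\[
  \expectt{Y_{t+1}}\;\le\;\kappa_1 Y_t+\kappa_2\lambda
\]
for constants $\kappa_1\in(0,1)$ and $\kappa_2>0$. To obtain it I would condition on the selection indices and use that mutation acts independently afterwards, so $\expectt{Y_{t+1}}=\sum_{i=1}^\lambda \expectt{S_t(i)}\cdot\expect{\phi(\mathrm{mut}(P_t(i)))\mid\filt{t}}$, and split the sum over interior individuals and the rest. For an interior individual at distance $d<b(n)$, bitwise mutation changes $d$ by $B_2-B_1$ with $B_1\sim\bin(d,\chi/n)$, $B_2\sim\bin(n-d,\chi/n)$ independent, and using $\eta^{\max\{0,z\}}\le 1+\eta^{z}$ one gets $\expect{\phi(\mathrm{mut}(x))\mid\filt{t}}\le 1+\phi(x)\,\expect{\eta^{B_1}}\expect{\eta^{-B_2}}\le 1+\phi(x)\exp\!\big(\chi[(\eta-1)\beta_0-(1-\tfrac1\eta)(1-\beta_0)]\big)$. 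The key identity is that $\eta=\sqrt{(1-\beta_0)/\beta_0}$ maximises $(1-\tfrac1\eta)(1-\beta_0)-(\eta-1)\beta_0$ over $\eta>1$, with optimal value $1-2\sqrt{\beta_0(1-\beta_0)}$; condition~3 is precisely the statement that this value exceeds $\psi$, and condition~2 ($\ln\alpha_0=\chi(\psi-\delta)$) then yields $\alpha_0\exp(\chi[(\eta-1)\beta_0-(1-\tfrac1\eta)(1-\beta_0)])\le \alpha_0 e^{-\chi\psi}=e^{-\chi\delta}=:\kappa_1<1$. Combined with condition~1 ($\expectt{S_t(i)}\le\alpha_0$ for interior $i$), the interior part of the sum is at most $\kappa_1 Y_t+\alpha_0\lambda$. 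For a parent at distance $\ge b(n)$ one bounds the influx directly: such an offspring adds to the potential only if it actually lands inside the band, which requires a net displacement towards $x^*$, and a large-deviation estimate on $\mathrm{mut}$ (using $b(n)\le n/\chi$ and $\eta\beta_0=\sqrt{\beta_0(1-\beta_0)}<1$ from condition~3) bounds the resulting expected contribution by a constant per offspring; with at most $\lambda$ offspring this adds $\kappa_2\lambda$, giving the displayed bound.

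From there the tail bound is routine. Let $\sigma$ be the first time some individual reaches distance $a(n)$ or $Y_t\ge\eta^{d(n)}$; the displayed inequality holds before $\sigma$, so iterating it gives $\expect{Y_{t\wedge\sigma}}\le \kappa_2\lambda/(1-\kappa_1)+\kappa_1^{t}\lambda=\bigO{\lambda}$ uniformly in $t$. Markov's inequality then gives $\prob{Y_{t\wedge\sigma}\ge\eta^{d(n)}}=\bigO{\lambda}\eta^{-d(n)}$ for each $t$, and a union bound over $t\le e^{cd(n)}$ yields $\prob{T(n)\le e^{cd(n)}}\le e^{cd(n)}\bigO{\lambda}\eta^{-d(n)}$. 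Since $\lambda\in\poly(n)$ and $d(n)=\omega(\ln n)$, choosing the constant $c<\ln\eta$ makes this $e^{-\Omega(d(n))}$, which is the claim.

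I expect the main obstacle to be the influx term. Because new individuals continually enter the critical band, $(Y_t)$ is \emph{not} a supermartingale, so one cannot simply apply a maximal inequality over all time, and the bound genuinely only holds up to $e^{cd(n)}$ steps; making ``the $\bigO{\lambda}$ level is not exceeded over an exponentially long horizon'' precise requires controlling large jumps of $\mathrm{mut}$ carefully enough that the influx really is $\bigO{\lambda}$ per generation (equivalently, folding the whole jump distribution into the moment generating function so the estimate on $\expect{\phi(\mathrm{mut}(x))\mid\filt{t}}$ stays valid for arbitrarily distant parents, which gets delicate if $\chi$ is allowed to grow), and then balancing the potential's decay rate $\ln\eta$ against the horizon $e^{cd(n)}$ when choosing $c$. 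A secondary nuisance is the floor $\phi\ge1$ at the outer edge of the band, which produces the additive ``$+1$'' above and must be shown to cost only a constant factor.
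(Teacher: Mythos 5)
The paper does not prove this theorem at all: it is imported verbatim from the cited reference \cite{LehreNegativeDrift2010}, so there is no in-paper proof to compare against. Judged on its own terms, your reconstruction is essentially sound and, importantly, recovers exactly where the odd-looking condition~3 comes from: optimising the mutation mgf exponent over the base $\eta$ gives $\eta=\sqrt{(1-\beta_0)/\beta_0}$ with optimal value $1-2\sqrt{\beta_0(1-\beta_0)}$, and $\beta_0<\tfrac12-\tfrac12\sqrt{\psi(2-\psi)}$ is precisely equivalent to this value exceeding $\psi$, after which condition~2 yields the contraction factor $\alpha_0e^{-\chi\psi}=e^{-\chi\delta}<1$. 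The original proof in the cited reference organises the same core mgf estimate differently: it follows \emph{family trees}, bounding the expected number of lineages after $t$ generations by $\alpha_0^{t}$ via condition~1 and applying a single-trajectory negative-drift bound along each lineage, with the condition $\ln(\alpha_0)/\chi+\delta<1$ ensuring the union bound over lineages converges. Your population-sum potential $Y_t=\sum_i\eta^{\max\{0,b(n)-H_t(i)\}}$ with the linear recurrence $\expectt{Y_{t+1}}\le\kappa_1Y_t+\kappa_2\lambda$ aggregates that lineage expansion into one drift inequality; it is arguably cleaner, at the price of having to control the constant influx from parents outside the band.

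Three points need tightening. First, as you note, the theorem as stated has no hypothesis on $P_0$, and without one it is false (an individual starting at distance $a(n)+1$ reaches $a(n)$ in one step with probability $\Omega(1)$); your reduction to ``all individuals start at distance $\ge b(n)$'' is therefore not optional housekeeping but a missing hypothesis that must be made explicit (it is satisfied in both applications in this paper, where populations are initialised uniformly and $b(n)\le n/5$). Second, the final step is slightly off as written: $Y_{t\wedge\sigma}$ is not a supermartingale-type process after $\sigma$ (the stopped value can vastly exceed $\kappa_2\lambda/(1-\kappa_1)$), so you cannot iterate the recurrence on the stopped process. The standard fix is to work with $m_s:=\expect{Y_s\indf{T>s-1}}$, using that $\{T>s-1\}$ is $\filt{s-1}$-measurable and that the one-step inequality holds on that event; this gives $m_{s+1}\le\kappa_1m_s+\kappa_2\lambda$, hence $m_s=\bigO{\lambda}$, and then $\prob{T=s}\le m_s\eta^{-d(n)}$ before the union bound. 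Third, your influx bound $\sup_{h\ge b(n)}\eta^{b(n)-h}\expect{\eta^{B_1}}\expect{\eta^{-B_2}}=\bigO{1}$ genuinely requires $\chi=\bigO{1}$ (for constant $\chi$ the exponent is maximised at $h=b(n)$ and bounded by $e^{\chi\sqrt{\beta_0(1-\beta_0)}}\le e^{\chi/2}$, but for $\chi$ near the permitted ceiling $n/b(n)$ with $b(n)=n^{o(1)}$ it can blow up); since both invocations in this paper use constant $\chi$, this caveat, which you flag yourself, does not affect the applications.
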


To apply this theorem, the first step is to estimate the reproductive
rate \cite{LehreNegativeDrift2010} of $Q$-individuals in
$S_0\cup S_1$.

\begin{lemma}\label{lemma:r0r-decrease}
  If there exist $\delta_1,\delta_2\in(0,1)$ such that
  $q+q_0\leq 1-\delta_1$, $p_0<\sqrt{2(1-\delta_2)}-1$, and $p_0q=0$,
  then 
  $\Psel{S_0\cup S_1}/\Punif{S_0\cup S_1} < 1-\delta_1\delta_2.
  $\end{lemma}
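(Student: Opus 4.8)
The plan is to reduce the claim to a comparison between two dominance probabilities, and then to read those off from the explicit form of \bilinear via Lemma~\ref{lemma:dom-charact}. Write $U:=S_0\cup S_1$, $W:=S_2$, and $\pi:=\Punif{U}=q+q_0$, so the first hypothesis reads $\Punif{W}=1-\pi\geq\delta_1$; we may assume $\pi>0$, since otherwise $\Psel{U}=0$ and there is nothing to show. Let $(x_1,y_1),(x_2,y_2)\sim\unif(P\times Q)$ be the two pairs sampled in lines~\ref{label:alg:pdcoea:selection1}--\ref{label:alg:pdcoea:selection2}; note that $x_1,x_2,y_1,y_2$ are mutually independent. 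Let $A$ be the event $(x_1,y_1)\succeq_g(x_2,y_2)$, $B$ the event $(x_2,y_2)\succeq_g(x_1,y_1)$, and $\Xi:=\{y_1\in U,\ y_2\in W\}$, with $\prob{\Xi}=\pi(1-\pi)$. Conditioning the selected prey on the four ``types'' of $(y_1,y_2)\in\{U,W\}^2$ and using the relabelling symmetry $1\leftrightarrow2$ yields the identity
\[
  \Psel{U}\;=\;\pi+\pi(1-\pi)\bigl(\prob{A\mid\Xi}-\prob{B\mid\Xi}\bigr).
\]
Hence, since $1-\pi\geq\delta_1$, it suffices to prove $\prob{B\mid\Xi}-\prob{A\mid\Xi}>\delta_2$, as this gives $\Psel{U}<\pi(1-\delta_1\delta_2)$.

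To bound $\prob{B\mid\Xi}$ from below, observe that on $\Xi$ we have $\|y_1\|\geq\ell>\|y_2\|$ and $\|y_2\|<\ell<\alpha n$. Feeding these sign facts and the explicit form of \bilinear into Lemma~\ref{lemma:dom-charact}, the event $B$ holds on $\Xi$ exactly when $\beta n\leq\|x_2\|\leq\|x_1\|$; since $(x_1,x_2)$ is independent of $(y_1,y_2)$, $\prob{B\mid\Xi}=\prob{\beta n\leq\|x_2\|\leq\|x_1\|}$. This event forces both $x_1,x_2$ into $\{x:\|x\|\geq\beta n\}$, a set of probability $1-p_0$, and together with its mirror image under $x_1\leftrightarrow x_2$ it covers the whole event $\{\|x_1\|\geq\beta n,\ \|x_2\|\geq\beta n\}$; hence $\prob{B\mid\Xi}\geq(1-p_0)^2/2$. (This is the quantitative statement underlying Lemma~\ref{lemma:half-prob}.)

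To bound $\prob{A\mid\Xi}$ from above, the same substitution shows that on $\Xi$ the event $A$ implies both $\|x_1\|\leq\beta n$ and $(\|x_1\|-\|x_2\|)(\|y_1\|-\alpha n)\geq0$. I split on the sign of $\|y_1\|-\alpha n$. If $\|y_1\|>\alpha n$, then $\|x_2\|\leq\|x_1\|\leq\beta n$, so both predators lie in $R_0$, an event of probability at most $p_0^2$. If $\|y_1\|<\alpha n$, then $y_1\in S_1$ while $\|x_1\|\leq\beta n$ places $x_1\in R_0$; by independence of $x_1$ from the prey, this case contributes at most $\Punif{R_0}\cdot\Punif{S_1}=p_0q=0$, using the hypothesis $p_0q=0$. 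The boundary case $\|y_1\|=\alpha n$ has probability zero. Altogether $\prob{A\mid\Xi}\leq p_0^2$.

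Combining the two bounds,
\[
  \prob{B\mid\Xi}-\prob{A\mid\Xi}\;\geq\;\frac{(1-p_0)^2}{2}-p_0^2\;=\;1-\frac{(1+p_0)^2}{2},
\]
and $p_0<\sqrt{2(1-\delta_2)}-1$ is exactly the statement $(1+p_0)^2<2(1-\delta_2)$, i.e.\ that the right-hand side exceeds $\delta_2$; substituting into the identity of the first paragraph completes the proof. The crux is the third paragraph: the \emph{tight} bound $\prob{A\mid\Xi}\leq p_0^2$ (the naive bound $\prob{A\mid\Xi}\leq p_0$, from the first dominance clause alone, is too weak) needs \emph{both} clauses of the dominance relation together with $p_0q=0$, and it is precisely what forces the quadratic $(1+p_0)^2$ in the hypothesis on $p_0$. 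A minor technical point, suppressed above, is that when $\beta n$ or $\alpha n$ is an integer one must carry along the degenerate configurations $\|x\|=\beta n$, $\|y\|=\alpha n$ (where a predator/prey is indifferent); this adds only routine bookkeeping and does not affect the final bound.
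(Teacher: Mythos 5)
Your proof is correct (to the same standard of rigour as the paper) but takes a genuinely different route. The paper disposes of this lemma in two lines: it invokes the appendix Lemma~\ref{lemma:b3-growth} with $\rho:=\delta_2$ to get $\Psel{S_2}\geq(1+\delta_2(q+q_0))\Punif{S_2}$ and then passes to complements, $\Psel{S_0\cup S_1}=1-\Psel{S_2}\leq(q+q_0)(1-(1-q-q_0)\delta_2)\leq(q+q_0)(1-\delta_1\delta_2)$. You instead work directly on $\Psel{S_0\cup S_1}$ via the symmetrization identity $\Psel{U}=\pi+\pi(1-\pi)\left(\prob{A\mid\Xi}-\prob{B\mid\Xi}\right)$ and read the two conditional dominance probabilities off Lemma~\ref{lemma:dom-charact}; your estimates $\prob{B\mid\Xi}\geq(1-p_0)^2/2$ and $\prob{A\mid\Xi}\leq p_0^2$ (the latter needing $p_0q=0$) are exactly the ingredients hidden inside the paper's proof of Lemma~\ref{lemma:b3-growth}, and your identity, rearranged, reproduces that lemma's conclusion with $\rho=\delta_2$. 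What your version buys is transparency and self-containment: it bypasses Lemma~\ref{lemma:half-prob}, and it makes visible that the hypothesis $p_0<\sqrt{2(1-\delta_2)}-1$ is precisely the condition $(1-p_0)^2/2-p_0^2>\delta_2$. What it forgoes is reuse: the paper's factoring through Lemma~\ref{lemma:b3-growth} is not redundant, since that lemma is also needed in Lemma~\ref{lemma:a1a2b3-growth}. One caution on your closing remark: the degenerate configurations are not entirely ``routine bookkeeping'' --- if $\beta n$ is an integer and every predator has exactly $\beta n$ ones, then $p_0=0$, selection acts uniformly, and the claimed strict ratio bound fails outright --- but the paper's own chain (Lemma~\ref{lemma:half-prob} conditions on strict inequalities while Lemma~\ref{lemma:b3-growth} applies it to the closed regions $R_1\cup R_2$) elides the same boundary cases, so this is a shared imprecision rather than a defect of your argument relative to the paper's.
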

\begin{proof}The conditions of Lemma~\ref{lemma:b3-growth} are satisfied, hence
  \begin{align*}
    \Psel{S_0\cup S_1} & = 1-\Psel{S_2}\\
                       & \leq 1 - (1+\delta_2(q_0+q))\Punif{S_2}\\
                       & =    1 - (1+\delta_2(q_0+q))(1-q-q_0)\\
                       & = (q_0+q)(1-(1-q-q_0)\delta_2)\\
                       & \leq (q_0+q)(1-\delta_1\delta_2)\\
                       & = \Punif{S_0\cup S_1}(1-\delta_1\delta_2).
  \end{align*}
\end{proof}

\begin{lemma}\label{lemma:r0-reproductive-rate}
  If $p_0=0$ and $q_0+q\leq 1/3$, then no $Q$-individual in $Q\cap
  (S_0\cup S_1)$ has reproductive rate higher than $1$.
\end{lemma}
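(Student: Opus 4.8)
The plan is to write down the reproductive rate of a prey individual explicitly and then bound it. Fix a position $i$ with $y^{*}:=Q(i)\in S_{0}\cup S_{1}$; since $S_{0}\cup S_{1}=\{y:\|y\|\geq\ell\}$ with $\ell$ the threshold defining $S_{1}$, this means $\|y^{*}\|\geq\ell$, and moreover $q_{0}+q=\Punif{S_{0}\cup S_{1}}$. In one iteration of \pdcoea the prey copied into the next generation is $y_{1}$ if $(x_{1},y_{1})\succeq_{g}(x_{2},y_{2})$ and $y_{2}$ otherwise, where $x_{1},x_{2}\sim\unif(P)$ and $y_{1},y_{2}\sim\unif(Q)$ are mutually independent. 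Conditioning on whether the selected pair's prey was sampled in the first or the second ``slot'', and using the independence of the four samples, the expected number of offspring whose prey-parent is position $i$ equals
\[
  R_{i}\;=\;\prob{(x_{1},y^{*})\succeq_{g}(x_{2},y_{2})}\;+\;1\;-\;\prob{(x_{1},y_{1})\succeq_{g}(x_{2},y^{*})},
\]
with $x_{1},x_{2}\sim\unif(P)$ and the remaining prey $\sim\unif(Q)$ in each probability, all independent. Hence $R_{i}\leq1$ is equivalent to $\prob{(x_{1},y^{*})\succeq_{g}(x_{2},y_{2})}\leq\prob{(x_{1},y_{1})\succeq_{g}(x_{2},y^{*})}$.

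Next I would exploit the hypothesis $p_{0}=0$, which forces $\|x\|\geq\beta n$ for every predator $x$ in $P$. By Lemma~\ref{lemma:dom-charact}, for such a predator the pair $(x_{1},u)$ dominates $(x_{2},v)$ if and only if $\bigl(\|x_{1}\|=\beta n\text{ or }\|v\|\geq\|u\|\bigr)$ and $(\|u\|-\alpha n)(\|x_{1}\|-\|x_{2}\|)\geq0$. Substituting $(u,v)=(y^{*},y_{2})$ and then $(u,v)=(y_{1},y^{*})$, conditioning on the 1-bit count of the free prey, and using that the predator counts are independent of it, each side of the target inequality splits into a handful of cases — according to whether $y^{*}$ lies in $S_{0}$ or in $S_{1}$ (which fixes the sign of $\|y^{*}\|-\alpha n$) and to where the free prey's count sits relative to $\alpha n$ and to $\|y^{*}\|$.

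The comparison is then driven by one fact: because $\|y^{*}\|\geq\ell$, the free prey has at least as many 1-bits as $y^{*}$ with probability at most $q_{0}+q\leq1/3$, and at most as many with probability at least $1-(q_{0}+q)\geq2/3$ — a prey with many 1-bits is a weak opponent against the many predators above $\beta n$, so $y^{*}$ is rarely the strictly better prey of a tournament. Feeding these bounds into the case analysis, together with the elementary identity $\prob{\|x_{1}\|\leq\|x_{2}\|}=\prob{\|x_{1}\|\geq\|x_{2}\|}\geq1/2$ for i.i.d.\ predator counts, yields the inequality case by case. The step I expect to be the main obstacle is the case in which a large fraction of the predators have exactly $\beta n$ 1-bits: then the clause ``$\|x_{1}\|=\beta n$'' is frequently active, the predator stops discriminating among prey, and $y^{*}$ can be retained nearly half the time in each slot. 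Controlling this case requires using the correlation $\prob{\|x_{1}\|\leq\|x_{2}\|}\geq\prob{\|x_{1}\|=\beta n}$ between the minimal-predator event and the second factor of the dominance condition, and it is here that the quantitative bound $q_{0}+q\leq1/3$ — rather than merely $q_{0}+q<1$ — is needed.
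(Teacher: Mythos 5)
Your proposal is correct and follows essentially the same route as the paper's proof: you write the reproductive rate as $P_1+1-P_2$ (win probability of the focal prey from slot~1, plus the probability it is retained from slot~2), bound $P_1\le q_0+q\le 1/3$ using that, for predators strictly above $\beta n$, dominance forces the opposing prey to have at least as many 1-bits as $y^*$, and bound $P_2\ge (1-q_0-q)/2\ge 1/3$ via $\prob{\|x_1\|\le\|x_2\|}\ge 1/2$ — which reproduces exactly the paper's estimate $\tfrac{1}{2}\left(1+3(q+q_0)\right)\le 1$. The only substantive difference is that you are more careful about the boundary case $\|x_1\|=\beta n$ (which $p_0=0$ does not exclude and which makes the first dominance condition vacuous): the paper's proof silently drops the corresponding contribution ($y_1=z$, $y_2\in S_2$, first pair dominates) and applies Lemma~\ref{lemma:half-prob} under the strict hypothesis $x_1,x_2>\beta n$, so the ``obstacle'' you flag is a genuine, if minor, gap in the published argument rather than a defect of your plan.
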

\begin{proof}
  Consider any individual $z\in Q\cap (S_0\cup S_1)$. The probability
  of selecting this individual in a given iteration is less than
  \begin{multline*}
    \prob{y_1=z\wedge y_2\in S_0\cup S_1}\prob{(x_1,y_1)\succeq (x_2,y_2)\mid y_1=z\wedge y_2\in S_0\cup S_1}\\
    + \prob{y_2=z\wedge y_1\in S_0\cup S_1}\prob{(x_1,y_1)\not\succeq (x_2,y_2)\mid y_2=z\wedge y_1\in S_0\cup S_1}\\
      + \prob{y_2=z\wedge y_1\in S_2}\prob{(x_1,y_1)\not\succeq (x_2,y_2)\mid y_2=z\wedge y_1\in S_2}\\
      \leq \frac{2}{\lambda}(q_0+q)+(1-q-q_0)/(2\lambda)
      = \frac{1}{2\lambda}\left(1+3(q+q_0)\right) \leq \frac{1}{\lambda}.
    \end{multline*}
    Hence, within one generation of $\lambda$ iterations, the expected
    number of times this individual is selected is at most 1.
\end{proof}

We now have the necessary ingredients to prove the required condition
about the number of $Q$-individuals in $S_0$.

\begin{lemma}\label{lemma:no-r0}
  Assume that
  $\lambda\in\poly(n),$ and for two constants
  $\alpha,\varepsilon\in(0,1)$ with
  $\alpha-\varepsilon \geq 4/5$, the mutation rate is
  $\chi \leq 1/(1-\alpha+\varepsilon).$
  Let $T$ be as defined in Theorem~\ref{thm:main-result}.
  For any $\tau\leq e^{cn}$ where $c$ is a sufficiently small constant,
  define $\tau_*:=\min \{ T/\lambda-1, \tau\}$, then
  \begin{align*}
    \prob{\bigvee_{t=0}^{\tau_*} (Q_t\cap S_0) \neq \emptyset}
    \leq \tau e^{-\Omega(n)}+\tau e^{-\Omega(\lambda)}.
  \end{align*}
\end{lemma}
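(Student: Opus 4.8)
The plan is to apply the negative drift theorem for populations (Theorem~\ref{thm:neg-drift-pop}) to the $Q$-population, with the target point $x^* = 0^n$ (the all-zeros prey), so that ``reaching distance $a(n)$ from $x^*$'' means having a prey with fewer than $a(n)$ one-bits. I would set $a(n) := \varepsilon n$ (or something just below $(\alpha-\varepsilon)n$ away from it — see below) and $b(n) := (\alpha - \varepsilon)n$, so that the region $a(n) < H(y, 0^n) < b(n)$ is exactly $S_1((\alpha-\varepsilon)n)$ minus a small buffer near its lower boundary, i.e. the ``forbidden corridor'' the prey must cross to enter $S_0$. Actually, to make the event $Q_t \cap S_0 \neq \emptyset$ precisely the event of interest, I would take $b(n) := \alpha n$ and $a(n)$ a suitable point strictly below $(\alpha-\varepsilon)n$; the gap $d(n) = b(n) - a(n) \geq \varepsilon n = \omega(\ln n)$ since $\varepsilon \geq 1/n$ is assumed — wait, $\varepsilon \geq 1/n$ only gives $\varepsilon n \geq 1$, so I should be careful: the statement's conclusion is $e^{-\Omega(n)}$, which suggests $\varepsilon$ is treated as a constant here (indeed the lemma hypothesis says ``two constants $\alpha, \varepsilon \in (0,1)$''). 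So $d(n) = \Omega(n) = \omega(\ln n)$ holds. The condition $b(n) \leq n/\chi$ becomes $\alpha n \leq n/\chi$, i.e. $\chi \leq 1/\alpha$, which follows from the hypothesis $\chi \leq 1/(1-\alpha+\varepsilon)$ since $1-\alpha+\varepsilon \leq 1 \cdot$ hmm, actually we need $1/(1-\alpha+\varepsilon) \leq 1/\alpha$, i.e. $\alpha \leq 1-\alpha+\varepsilon$, i.e. $2\alpha - \varepsilon \leq 1$ — this need NOT hold when $\alpha - \varepsilon \geq 4/5$. So more likely $b(n)$ should be chosen as $(\alpha-\varepsilon)n$ or thereabouts; I would pick $b(n) = (\alpha - \varepsilon + \varepsilon/2)n$ type values so that $b(n)/n < 1/(1-\alpha+\varepsilon)$-compatibility and condition 3 both work out, and handle the passage from ``prey at distance $a(n)$'' to ``prey in $S_0$'' by iterating or by a union bound over the $\Theta(n)$ possible sublevels.

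The three conditions of Theorem~\ref{thm:neg-drift-pop} I would verify as follows. Condition~1 (bounded reproductive rate $\alpha_0$): this is exactly what Lemmas~\ref{lemma:r0r-decrease}, \ref{lemma:r0-reproductive-rate} are built for — when $p_0 = 0$ (guaranteed in Phase~1 by the level structure, cf. Lemma~\ref{lemma:phase1-condition}, or more precisely when the relevant conditions on $p_0, q_0, q$ hold) the reproductive rate of any $Q$-individual in $S_0 \cup S_1$ is at most $1$, so $\alpha_0 = 1$, giving $\ln(\alpha_0) = 0$. But $p_0 = 0$ is too strong to assume unconditionally; I expect the argument must first condition on the event that the current level is in Phase~1 (so by Lemma~\ref{lemma:phase1-condition}, $p_0 < \gamma_0/(1-q_0)$, hence $p_0$ small) and then invoke Lemma~\ref{lemma:r0r-decrease} to get $\alpha_0$ a constant $<$ something with $\ln(\alpha_0)/\chi$ small. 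Condition~2 ($\psi := \ln(\alpha_0)/\chi + \delta < 1$): with $\alpha_0$ close to $1$, $\ln(\alpha_0)$ is close to $0$, so choosing $\delta$ a small constant gives $\psi < 1$ comfortably; the mutation-rate bound $\chi \leq 1/(1-\alpha+\varepsilon)$ enters here to keep $\ln(\alpha_0)/\chi$ controlled. Condition~3 ($b(n)/n < \min\{1/5,\ 1/2 - \tfrac12\sqrt{\psi(2-\psi)}\}$): since $b(n)/n \approx \alpha - \varepsilon$ or a bit more, and we are told $\alpha - \varepsilon \geq 4/5$... wait, that's the wrong direction — $4/5 > 1/5$. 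I must have $b(n)/n$ be the distance from $x^* = 0^n$ down to which we track, but the prey we worry about have MANY ones initially (uniform random: about $n/2$), and we worry about them losing ones until they hit $S_0$... no — $S_0 = \{y : \|y\| \geq \alpha n\}$, the HIGH end. So $x^* = 1^n$, not $0^n$! The prey must gain ones to enter $S_0$; equivalently lose zeros. So set $x^* = 1^n$, $H(y, 1^n) = n - \|y\|$, and entering $S_0$ means $H(y,1^n) \leq (1-\alpha)n$. Then $a(n) := (1-\alpha)n$, $b(n) := (1-\alpha+\varepsilon)n$, $d(n) = \varepsilon n = \Omega(n)$, $b(n) \leq n/\chi \iff \chi \leq 1/(1-\alpha+\varepsilon)$ — exactly the hypothesis! — and $b(n)/n = 1-\alpha+\varepsilon < 1/5$ exactly because $\alpha - \varepsilon > 4/5$. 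Everything fits.

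So the clean plan: (i) fix $x^* = 1^n$, $a(n) = (1-\alpha)n$, $b(n) = (1-\alpha+\varepsilon)n$; (ii) observe that while $t \le \tau_* < T/\lambda$ the algorithm's current level is in Phase~1, so the population satisfies $p_0 = 0$ or at least $p_0$ small and $q + q_0 \leq$ the relevant bound — here I would argue that $Q_t \cap S_0 = \emptyset$ for all earlier generations (the event we are bounding the complement of) forces $q_0 = 0$, hence the reproductive-rate hypotheses of Lemma~\ref{lemma:r0-reproductive-rate} / Lemma~\ref{lemma:r0r-decrease} hold, yielding $\alpha_0 = 1$; (iii) verify conditions 1--3 of Theorem~\ref{thm:neg-drift-pop} with $\alpha_0=1$ and a small constant $\delta$, using $\chi \leq 1/(1-\alpha+\varepsilon)$ for condition~1's domain and condition~3 from $\alpha-\varepsilon>4/5$; (iv) conclude $\prob{T(n) \leq e^{cn}} \leq e^{-\Omega(n)}$ where $T(n)$ is the first time some prey reaches $S_0$; (v) translate to the stated form by a union bound over $t \in \{0, \dots, \tau_*\}$ — the $\tau e^{-\Omega(n)}$ term is the per-generation failure of the negative-drift bound, and the $\tau e^{-\Omega(\lambda)}$ term accounts for the event that the population configuration fails to be ``typical'' (e.g. that $p_0$ or $q+q_0$ violates the assumptions needed for $\alpha_0=1$) in some generation, which is bounded via a Chernoff/concentration argument over the $\lambda$ i.i.d.\ offspring, cf.\ Lemma~\ref{lemma:marg-prob-to-product-prob}.

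The main obstacle I anticipate is step (ii)/(v): cleanly arguing that, conditioned on no prey having entered $S_0$ in generations $0, \dots, t-1$, the reproductive-rate bound (Lemma~\ref{lemma:r0-reproductive-rate}, needing $p_0 = 0$ and $q_0 + q \leq 1/3$) actually applies at generation $t$. The condition $p_0 = 0$ (no predator below $\beta n$ ones) is a Phase-1 property that does not obviously follow just from $Q_t \cap S_0 = \emptyset$; one likely needs to either (a) fold into $T$ the first time Phase~2 begins, so that $t < T/\lambda$ already guarantees $p_0$ is small (via Lemma~\ref{lemma:phase1-condition}), and then use the weaker Lemma~\ref{lemma:r0r-decrease} with $\alpha_0$ a constant slightly above $1$ but still with $\ln(\alpha_0)/\chi + \delta < 1$; or (b) carry an auxiliary high-probability invariant. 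Getting the bookkeeping of these nested conditioning events right — and ensuring the failure probabilities telescope into exactly $\tau e^{-\Omega(n)} + \tau e^{-\Omega(\lambda)}$ — is where the real work lies; the drift-theorem application itself is then essentially a citation.
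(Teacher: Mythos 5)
Your core application of Theorem~\ref{thm:neg-drift-pop} is exactly the paper's: after your (correct) self-correction to $x^* = 1^n$, the parameters $a(n) = (1-\alpha)n$ and $b(n) = (1-\alpha+\varepsilon)n$, the use of $\chi \leq 1/(1-\alpha+\varepsilon)$ to get $b(n)\leq n/\chi$, of $\alpha-\varepsilon\geq 4/5$ for condition~3, and of Lemmas~\ref{lemma:r0r-decrease} and~\ref{lemma:r0-reproductive-rate} to obtain $\alpha_0=1$ all match the paper, as does attributing the $\tau e^{-\Omega(\lambda)}$ term to the Chernoff/union-bound argument that keeps $q_0+q$ small enough for Lemma~\ref{lemma:r0-reproductive-rate} to keep applying.

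However, the obstacle you flag at the end --- what to do in generations where $p_0>0$, so that Lemma~\ref{lemma:r0-reproductive-rate} is inapplicable --- is a genuine gap, and neither of your proposed fixes is what the paper does (nor does either obviously work: redefining $T$ to stop when Phase~2 begins changes the statement being proved, and Lemma~\ref{lemma:phase1-condition} only yields $p_0<\gamma_0/(1-q_0)$, a constant bound under which neither Lemma~\ref{lemma:r0r-decrease} nor Lemma~\ref{lemma:r0-reproductive-rate} gives a per-individual reproductive rate at most $1$). The paper's resolution is a phase decomposition: each phase ends as soon as $p_0>0$. In a generation with $p_0>0$ and $t<T/\lambda$, no pair of $P_t\times Q_t$ can lie in $R_0\times S_1((\alpha-\varepsilon)n)$ (otherwise $T$ would already have occurred), so $q((\alpha-\varepsilon)n)=0$ and every prey lies in $S_2((\alpha-\varepsilon)n)$, i.e.\ has fewer than $(\alpha-\varepsilon)n$ one-bits. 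Producing a prey in $S_0$ within that single generation then requires some mutation to flip at least $\varepsilon n$ bits, an event of probability at most $\lambda{n\choose \varepsilon n}(\chi/n)^{\varepsilon n}=e^{-\Omega(n)}$. Such phases last one generation and contribute $e^{-\Omega(n)}$ each; phases starting with $p_0=0$ are handled by your drift argument; and a union bound over the at most $\tau$ phases --- together with the initial Chernoff bound showing $Q_0\cap(S_0\cup S_1)=\emptyset$ w.h.p., which also guarantees the drift process starts beyond distance $b(n)$, a point you only graze --- yields exactly $\tau e^{-\Omega(n)}+\tau e^{-\Omega(\lambda)}$.
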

\begin{proof}
  Each individual in the initial population $Q_0$ is sampled uniformly
  at random, with $n/2 \leq (\alpha-\varepsilon) n/(1+3/5)$
  expected number of 1-bits. Hence, by a Chernoff bound
  \cite{motwani:randomized} and a union
  bound, the probability that the initial population $Q_0$ intersects
  with $S_0\cup S_1$ is no more than
  $\lambda e^{-\Omega(n)}=e^{-\Omega(n)}$. 

  We divide the remaining $t-1$ generations into a random number of
  phases, where each phase lasts until $p_0>0$, and we assume that the
  phase begins with $q_0=0$.

  If a phase begins with $p_0>0$, then the phase lasts one
  generation. Furthermore, it must hold that
  $q((\alpha-\varepsilon)n)=0$, otherwise the product $P_t\times Q_t$
  contains a pair in $R_0\times S_1((\alpha-\varepsilon)n)$, i.e., an
  $\varepsilon$-approximate solution has been found, which contradicts
  that $t<T/\lambda$. If $q((\alpha-\varepsilon)n)=0$, then all
  $Q$-individuals belong to region $S_2$. In order to obtain any
  $Q$-individual in region $S_0$, it is necessary that at least one of
  $\lambda$ individuals mutates at least
  $\varepsilon n$ 0-bits, an event which holds with probability at
  most
  $\lambda\cdot {n\choose \varepsilon n}\left(\frac{\chi}{n}\right)^{\varepsilon n} \leq
    \lambda e^{-\Omega(n)} = e^{-\Omega(n)}.
  $

  If a phase begins with $p_0=0$, then we will apply
  Theorem~\ref{thm:neg-drift-pop} to show that it is unlikely that any
  $Q$-individual will reach $S_0$ within $e^{cn}$ 
  generations, or the phase ends. We use the parameter
  $x^*:=1^n$,  $a(n):=(1-\alpha)n$, and
  $b(n):=(1-\alpha+\varepsilon)n<n/\chi$. Hence,
  $d(n):=b(n)-a(n)=\varepsilon n=\omega(\ln(n))$.
  
  We first bound the reproductive rate of $Q$-individuals in $S_1$. For any
  generation $t$, if $q_0+q<(1-\delta_2)$, then by
  Lemma~\ref{lemma:r0r-decrease}, and a Chernoff bound,
  $\vert Q_{t+1}\cap S_0\cup S_1\vert\leq (q_0+q)\lambda$ with probability
  $1-e^{-\Omega(\lambda)}$. By a union bound,
  this holds with probability
  $1-te^{-\Omega(\lambda)}$ within the next $t$ generations. Hence, by
  Lemma~\ref{lemma:r0-reproductive-rate}, the reproductive rate of any
  $Q$-individual within $S_0\cup S_1$ is at most $\alpha_0:=1$, and
  condition 1 of Theorem~\ref{thm:neg-drift-pop} is
  satisfied. Furthermore, $\psi := \ln(\alpha_0)/\chi+\delta = \delta'  < 1$
  for any $\delta'\in(0,1)$ and $\chi>0$, hence condition 2 is
  satisfied. Finally, condition 3 is satisfied as long as $\delta'$ 
  is chosen sufficiently small. It follows by Theorem~\ref{thm:neg-drift-pop}
  that the probability that a $Q$-individual in $S_0$ is produced
  within a phase of length at most $\tau<e^{cn}$ is $e^{-\Omega(n)}$.
  
  The lemma now follows by taking a union bound over the at most $\tau$
  phases. 
\end{proof}

We can now proceed to analyse Phase~1, assuming that $q_0=0$.
For a lower bound and to simplify calculations, we pessimistically
assume that the following event occurs with probability 0
\begin{align*}
  (x_1,y_1)\in (R_1\times (S_1\cup S_2))\;\wedge\; (x_2,y_2)\in (R_2\times S_0).
\end{align*}

We will see that the main effort in applying
Theorem~\ref{thm:level-based} is to prove that conditions (G2a) and
(G2b) are satisfied. The following lemma will be useful in this regard
for phase 1. Consistent with the assumptions of phase 1, the lemma
assumes an upper bound on the number of predators in region $R_0$ and
prey in region $S_0$, and no predator-prey pairs in region
$R_0\times S_1$.  Under these assumptions, the lemma implies that the
number of pairs in region $(R_0\cup R_1)\times S_2$ will increase.

\begin{lemma}\label{lemma:a1a2b3-growth}
  If there exist $\rho,\psi\in(0,1)$ such that
  \begin{description}
  \item[1)] $p_0\leq \sqrt{2(1-\rho)}-1$
  \item[2)] $q_0\leq \sqrt{2(1-\rho)}-1$
\item[3)] $p_0q=0$
  \end{description}
  then if $(p_0+p)(1-q-q_0) \leq \psi$, it holds that
  \begin{align*}
    \varphi:=\frac{\Psel{R_0\cup R_1}}{\Punif{R_0\cup R_1}}\cdot 
    \frac{\Psel{S_2}}{\Punif{S_2}} \geq 1+\rho(1-\sqrt{\psi}),
  \end{align*}
  otherwise, if $(p_0+p)(1-q-q_0) \geq \psi$, then 
  $\Psel{R_0\cup R_1}\Psel{S_2} \geq \psi.
  $\end{lemma}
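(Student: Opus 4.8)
The plan is to reduce the claim to two one-sided ``growth'' estimates --- one for the predator marginal on $R_0\cup R_1$, one for the prey marginal on $S_2$ --- and then to multiply them, treating the two branches of the dichotomy separately.

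\emph{The two one-sided bounds.} I would establish, under the hypotheses,
\[
  \Psel{R_0\cup R_1}\ \ge\ \bigl(1+\rho(1-p_0-p)\bigr)\Punif{R_0\cup R_1}
  \qquad\text{and}\qquad
  \Psel{S_2}\ \ge\ \bigl(1+\rho(q_0+q)\bigr)\Punif{S_2}.
\]
The second is the companion growth lemma for $S_2$ (Lemma~\ref{lemma:b3-growth}), whose multiplicative constant may be taken to be $\rho$ because hypothesis~(1) reads $p_0\le\sqrt{2(1-\rho)}-1$; the first I would prove by the same technique. Concretely: sample $(x_1,y_1),(x_2,y_2)\sim\unif(P\times Q)$ and condition on which of the at most nine region-pairs the two samples occupy. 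In every configuration, Lemma~\ref{lemma:dom-charact} --- or directly the conditional estimates of Lemma~\ref{lemma:half-prob} --- pins down exactly when the pair whose predator lies in $R_0\cup R_1$ is the dominating, hence selected, one; summing these contributions against the product measure yields the stated bound. Hypothesis~(3), $p_0q=0$, is what keeps the dominance characterisation sign-definite in each relevant configuration (only one of the two inequalities of Lemma~\ref{lemma:dom-charact} then varies), hypothesis~(2), $q_0\le\sqrt{2(1-\rho)}-1$, is exactly what makes the ``flips'' of the relation caused by prey in $S_0$ cost at most a $\rho$-fraction of the gain, and the event excluded pessimistically just before the lemma removes the one awkward cross-term. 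This per-configuration bookkeeping for the predator bound, together with the boundary ties at $\|x\|=\beta n$ and $\|y\|=\alpha n$, is the main technical obstacle; the rest is arithmetic.

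\emph{Combining.} Multiplying the two estimates,
\[
  \varphi\ \ge\ \bigl(1+\rho(1-p_0-p)\bigr)\bigl(1+\rho(q_0+q)\bigr)\ \ge\ 1+\rho\bigl((1-p_0-p)+(q_0+q)\bigr).
\]
If $(p_0+p)(1-q_0-q)\le\psi$, then the geometric mean of $p_0+p$ and $1-q_0-q$ is at most $\sqrt\psi$, so at least one of them is $\le\sqrt\psi$: if $p_0+p\le\sqrt\psi$ then $1-p_0-p\ge1-\sqrt\psi$, and if $1-q_0-q\le\sqrt\psi$ then $q_0+q\ge1-\sqrt\psi$; either way $\varphi\ge1+\rho(1-\sqrt\psi)$, as required. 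In the complementary case $(p_0+p)(1-q_0-q)\ge\psi$, the same two one-sided bounds give $\Psel{R_0\cup R_1}\ge\Punif{R_0\cup R_1}=p_0+p$ and $\Psel{S_2}\ge\Punif{S_2}=1-q_0-q$ (since $\rho>0$), hence $\Psel{R_0\cup R_1}\,\Psel{S_2}\ge(p_0+p)(1-q_0-q)\ge\psi$.
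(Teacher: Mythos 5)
Your proposal is correct and takes essentially the same route as the paper: the paper's proof simply multiplies the two one-sided growth bounds you state (which are exactly Lemma~\ref{lemma:a1a2-growth} and Lemma~\ref{lemma:b3-growth} in the appendix, so the first need not be re-derived from scratch) to get $\varphi\geq(1+\rho(1-p-p_0))(1+\rho(q+q_0))\geq 1$, and then performs precisely your case split on whether $p_0+p\leq\sqrt{\psi}$ or $1-q-q_0\leq\sqrt{\psi}$. The second branch of the dichotomy is likewise handled identically, using $\varphi\geq 1$.
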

\begin{proof}
  Given the assumptions, Lemma~\ref{lemma:a1a2-growth} and Lemma~\ref{lemma:b3-growth} imply
  \begin{align}
    \varphi \geq (1+\rho(1-p-p_0))(1+\rho(q+q_0))\geq 1.\label{eq:a1a2b3}
  \end{align}
  For the first statement, we consider two cases:

  Case 1: If $p_0+p<\sqrt{\psi}$, then by (\ref{eq:a1a2b3}) and
  $q_0+q\geq 0$, it follows
  $\varphi\geq (1+\rho(1-\sqrt{\psi}))\cdot 1.
  $

  Case 2: If $p_0+p\geq \sqrt{\psi}$, then by assumption
  $(1-q-q_0)\leq \sqrt{\psi}$. By (\ref{eq:a1a2b3}) and $1-p-p_0\geq
  0$, it follows that
  $\varphi \geq 1\cdot (1+\rho(1-\sqrt{\psi})).
  $

  For the second statement, (\ref{eq:a1a2b3}) implies
  \begin{align*}
    \Psel{R_0\cup R_1}\Psel{S_2}
    & = \varphi \Punif{R_0\cup R_1}\Punif{S_2}\\
    & = \varphi (p_0+p)(1-q-q_0)
      \geq 1\cdot \psi.
  \end{align*}
\end{proof}

\subsection{Ensuring Condition (G2) during Phase~2}
We now proceed to analyse Phase~2.

\begin{corollary}\label{corollary:a1b2-growth}
  For any $\delta_2\in(0,1),$ if $q_0\in[0,\delta_2/1200)$, $p_0q<1-\delta_2$, and $p_0\in(1/3,1]$, then
  for $\delta_2':= \min\{\delta_2/20-8q_0,1/10-12q_0,\frac{\delta_2}{300}(40-\delta_2(17-\delta_2))\}$, it holds
  \begin{align}
    \frac{\Psel{R_0}}{\Punif{R_0}}
    \frac{\Psel{S_1}}{\Punif{S_1}}
    > 1+\delta_2'.
  \end{align}  
\end{corollary}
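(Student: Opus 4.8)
The plan is to mirror the proof of Lemma~\ref{lemma:a1a2b3-growth}: writing $\varphi$ for the left-hand side, lower-bound the two selection ratios $\Psel{R_0}/\Punif{R_0}$ and $\Psel{S_1}/\Punif{S_1}$ separately by means of region-growth estimates (Phase-2 counterparts of Lemma~\ref{lemma:a1a2-growth} and Lemma~\ref{lemma:b3-growth}, established by the same individual-by-individual selection analysis as in Lemma~\ref{lemma:r0-reproductive-rate} and Lemma~\ref{lemma:r0r-decrease}), and then combine them with a case distinction driven by the hypothesis $p_0q<1-\delta_2$. The qualitative engine is the dominance characterisation (Lemma~\ref{lemma:dom-charact}): while a predator has fewer than $\beta n$ ones, $g$ is decreasing in $\|y\|$, so selection favours prey with more ones (prey mass flows from $S_2$ towards $S_1$); while a prey has fewer than $\alpha n$ ones, $g$ is decreasing in $\|x\|$, so selection favours predators with fewer ones ($R_0$ is reinforced). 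The half-probability estimates of Lemma~\ref{lemma:half-prob} turn these monotonicities into inequalities with a clean leading constant $1$; the error terms come precisely from the events in which the \emph{comparison partner} sits in the ``wrong'' region (a prey in $S_0$ reverses the predator comparison, a predator in $R_1\cup R_2$ reverses the prey comparison), and the hypotheses $q_0<\delta_2/1200$ and $p_0>1/3$ are exactly what is needed to keep those error terms under control.

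Concretely, I would first establish bounds of the shape $\Psel{R_0}/\Punif{R_0}\geq 1+\rho_1(1-p_0)-c_1 q_0$ and $\Psel{S_1}/\Punif{S_1}\geq 1+\rho_2(1-q-q_0)-c_2(1-p_0)$ for suitable absolute constants $\rho_i,c_i$. Next, using $q_0<\delta_2/1200$ together with $p_0\leq 1$, rewrite the hypothesis $p_0q<1-\delta_2$ as $(1-p_0)+p_0q_0+p_0(1-q-q_0)>\delta_2$, so that (since $p_0>1/3$) either $1-p_0$ or $1-q-q_0$ is $\Omega(\delta_2)$. In the first case the $R_0$-ratio contributes a factor $1+\Omega(\delta_2)$ while the $S_1$-ratio contributes at least $1$ minus a controlled $q_0$-loss; in the second case the roles of the two ratios are swapped. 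Multiplying the two lower bounds in each case, taking the worse outcome, and tracking the constants yields $\varphi\geq 1+\delta_2'$ with $\delta_2'$ equal to the stated three-way minimum: the three expressions are the outputs of (i) the ``$R_0$ has headroom'' case, (ii) the ``$S_1$ has headroom'' case, and (iii) the product of a $1+c_3\delta_2$ factor with a $1-c_4\delta_2$ factor, which is where the quadratic $40-\delta_2(17-\delta_2)$ appears.

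The main obstacle is this last accounting step: one must choose the absolute constants in the two region-growth estimates so that, after the case split and the multiplication, the surviving lower bound is \emph{exactly} $\min\{\delta_2/20-8q_0,\;1/10-12q_0,\;\frac{\delta_2}{300}(40-\delta_2(17-\delta_2))\}$ and remains positive under $q_0<\delta_2/1200$ and $p_0>1/3$. In particular one has to check that the $q_0$- and $(1-p_0)$-corrections to the two factors enter additively (so that the product of two ``$1-\text{small}$'' factors is still ``$1-\text{small}$''), and that $1/3$ is the precise threshold on $p_0$ below which the $S_1$-side bound ceases to hold — essentially the same role $p_0>1/3$ plays in Lemma~\ref{lemma:r0-reproductive-rate}. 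Everything else reduces to verifying three elementary arithmetic inequalities.
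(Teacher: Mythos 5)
Your overall architecture is the right one, and it is essentially the paper's: the paper also lower-bounds $\Psel{R_0}/\Punif{R_0}$ and $\Psel{S_1}/\Punif{S_1}$ by separate region-growth estimates (Lemmas~\ref{lemma:a1-growth} and~\ref{lemma:b2-growth}) and then combines them by a case distinction, namely on whether $p_0<1-\delta_2/10$ (Lemma~\ref{lemma:a1b2-growth1} with $\delta_1:=\delta_2/10$) or $p_0\geq 1-\delta_2/10$ (Lemma~\ref{lemma:a1b2-growth2} with $\rho:=\delta_2$, where the hypothesis $p_0q<1-\delta_2$ forces $1-q=\Omega(\delta_2)$). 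Your reading of the third term of the minimum as a $(1+c_3\delta_2)(1-c_4\delta_2)$ product is accurate. However, there is a genuine gap in the middle of your plan. The two estimates do \emph{not} have the shapes you posit, and the discrepancy is not cosmetic. What Lemma~\ref{lemma:b2-growth} actually gives (at $q_0=0$, writing $u=1-p_0$, $v=1-q$) is $\Psel{S_1}/\Punif{S_1}\geq 1+\tfrac{v}{2}-\tfrac{3}{2}u^2v$: the loss term is quadratic in $1-p_0$ \emph{and carries a factor $1-q$}. Your proposed shape $1+\rho_2(1-q-q_0)-c_2(1-p_0)$ drops that factor, and with it the corollary fails: at $p_0\approx 1/3$ and $q\approx 1$ your two factors multiply to roughly $\tfrac{4}{3}\cdot\tfrac{1}{3}=\tfrac{4}{9}<1$, whereas the true bound is $\tfrac{4}{3}\cdot 1$. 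Relatedly, your narrative for the ``$R_0$ has headroom'' case --- that the $S_1$-ratio then contributes ``at least $1$ minus a controlled $q_0$-loss'' --- is false: for $p_0$ near $1/3$ and $q$ near $0$ the $S_1$-ratio bound drops to about $5/6-4q_0$, a constant deficit independent of $q_0$ and $\delta_2$. The corollary survives only because the $R_0$-ratio simultaneously exceeds $\tfrac{1}{2}(3-p_0)\approx 4/3$, and showing the \emph{product} still clears $1+1/10$ requires minimising $(2-p_0)(3-p_0)p_0$ over $p_0\in(1/3,1)$ (its minimum $40/27$ at $p_0=1/3$ is exactly where the term $1/10-12q_0$ and the threshold $p_0>1/3$ come from); the term $\delta_2/20-8q_0$ comes from the complementary sub-case $q=1$ of the same linear-in-$q$ expression, not from a ``which factor has headroom'' dichotomy.

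Beyond this, the statement of the corollary \emph{is} the three explicit constants, and your proposal leaves precisely their derivation --- which you yourself flag as ``the main obstacle'' --- undone. Since the combination step is where the argument can (and, with your posited shapes, does) break, the proof cannot be regarded as complete without carrying out the two growth estimates in their exact form and the two-case optimisation over $(p_0,q)$ that produces $\min\{\delta_2/20-8q_0,\;1/10-12q_0,\;\tfrac{\delta_2}{300}(40-\delta_2(17-\delta_2))\}$.
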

\begin{proof}
  We distinguish between two cases with respect to the value of $p_0$.

  If $p_0\in(1/3,1-\delta_2/10)$, then we apply
  Lemma~\ref{lemma:a1b2-growth1}. The conditions of Lemma~\ref{lemma:a1b2-growth1} hold
  for the parameter $\delta_1:=\delta_2/10$, and the statement follows
  for
\begin{align*}
    \delta_2'=\delta_1' & = \min\{\delta_1/2-8q_0,1/10-12q_0\}\\
                        & = \min\{\delta_2/20-8q_0,1/10-12q_0\}.
  \end{align*}
   
  If $p_0\in[1-\delta_2/10,1]$, then we apply
  Lemma~\ref{lemma:a1b2-growth2} for $\rho=\delta_2$, which implies
  that the statement holds for
  \begin{align}
    \delta_2'=\frac{\delta_2}{300}(40-\delta_2(17-\delta_2)).
  \end{align}
\end{proof}

\subsection{Main Result}

We now obtain the main result:
Algorithm~\ref{alg:pdcoea} can efficiently locate an
$\varepsilon$-approximate solution to an instance of \bilinear.

\begin{theorem}\label{thm:main-result}
  Assume that $2000\ln(n)\leq \lambda\in\poly(n)$ and
  $\chi=\frac{1}{2}\ln\left(\frac{42}{41(1+\delta)}\right)$ for any
  constant $\delta\in(0,1/41)$. Let $\alpha,\beta,\varepsilon\in(0,1)$
  be three constants where  $\alpha-\varepsilon \geq 4/5$.
  Define 
  $T := \min \{\lambda t\mid (P_t\times Q_t) \cap (R_0\times S_1((\alpha-\varepsilon))n)\}
  $
  where $P_t$ and $Q_t$ are the populations of Algorithm~\ref{alg:pdcoea} applied
  to \bilinear$_{\alpha,\beta}$. Then for all $r\in\poly(n)$ and any
      constant $c''>1$, it holds
  \begin{align*}
    \prob{T>\frac{2rc''\lambda}{\delta}\left(\lambda^2n
                    +\frac{23n}{\chi}\ln\left(\frac{1}{\beta(1-\alpha+\varepsilon)}\right)\right)}  \leq (1/r)(1+o(1)).
  \end{align*}
\end{theorem}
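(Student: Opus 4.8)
The plan is to apply the co-evolutionary level-based theorem (Theorem~\ref{thm:level-based}) to \pdcoea on \bilinear with exactly the two-phase level sequence $(A^{(1)}_j\times B^{(1)}_j)_j$ followed by $(A^{(2)}_j\times B^{(2)}_j)_j$ defined above (prepending the trivial level $\X\times\Y$, if needed, so that the current level is always defined as Theorem~\ref{thm:level-based} requires), and then to turn the resulting bound on $\expect{T}$ into the stated tail bound by Markov's inequality. Since conditions (G2a) and (G2b) can only hold for ``typical'' populations --- those in which no prey has reached $S_0$ --- I would first invoke Lemma~\ref{lemma:no-r0} to restrict to such populations: couple \pdcoea with a modified process $\D'$ that is declared to have found the target the first time $Q_t\cap S_0\neq\emptyset$, so that the two processes agree up to $\min\{T,\tau\lambda\}$ on an event of probability $1-\tau e^{-\Omega(n)}-\tau e^{-\Omega(\lambda)}$, where $\tau$ is the number of generations in the target bound; as $\tau\in\poly(n)$ and $\lambda\geq 2000\ln n$, this discrepancy contributes only $o(1)$ to the final probability. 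Under $\D'$ one has $q_0=0$ throughout, and moreover $p_0\cdot\Punif{S_1((\alpha-\varepsilon)n)}=0$ for every population whose current level is below $m$, since otherwise $P_t\times Q_t$ would already intersect the target $R_0\times S_1((\alpha-\varepsilon)n)$; these two facts are exactly what make the hypotheses of the growth lemmas available.

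With $q_0=0$ fixed, I would verify the four conditions. For (G1), a selected predator at the frontier $\|x\|\approx n-j$ of the current Phase-1 level (respectively a prey at $\|y\|\approx j$ in Phase~2) reaches the next level by flipping exactly one of its $\approx n-j$ correctly-signed bits and nothing else; since the current level holds a $\gamma_0$-fraction of the pairs, this together with Lemma~\ref{lemma:population-upgrade} gives $z_j=\Theta(\chi(n-j)/n)>0$ in both phases, and $z_*=\Omega(1)$ because $\beta$ and $1-\alpha+\varepsilon$ are constants. Conditions (G2a) and (G2b) are the core of the proof: in Phase~1, Lemma~\ref{lemma:phase1-condition} (with $q_0=0$) gives $p_0<\gamma_0$, which with $p_0\cdot\Punif{S_1((\alpha-\varepsilon)n)}=0$ verifies the hypotheses of Lemma~\ref{lemma:a1a2b3-growth}, so selection amplifies the mass of $(R_0\cup R_1)\times S_2$ by $1+\rho(1-\sqrt\psi)$; in Phase~2 the current-level bound forces $p_0>1/3$ (a $\gamma_0$-fraction of pairs lies in $R_0\times S_1(0)$), which with $q_0=0$ puts us in the regime of Corollary~\ref{corollary:a1b2-growth}, giving amplification $1+\delta_2'$. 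I would then multiply each amplification factor by the probability $(1-\chi/n)^{2n}\to e^{-2\chi}$ that bitwise mutation leaves both selected coordinates on their levels; the prescribed value $\chi=\tfrac12\ln(42/(41(1+\delta)))$ makes $e^{-2\chi}=41(1+\delta)/42$, so that the net multiplicative growth is at least $1+\delta$ uniformly over admissible populations and over both phases, including the boundary (where the last Phase-1 level $R_0\times S_2((\alpha-\varepsilon)n)$ is contained in the first Phase-2 level $R_0\times S_1(0)$). Finally (G3) is immediate: $m=(1-\beta)n+(\alpha-\varepsilon)n+2=O(n)$ and $z_*=\Omega(1)$, so $\ln(m/z_*)=O(\ln n)$ and $\lambda\geq 2000\ln n$ exceeds the required bound for a suitable constant choice of $\gamma_0$ and $\delta$.

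Theorem~\ref{thm:level-based} then gives $\expect{T}\leq\frac{c_0\lambda}{\delta}\big(m\lambda^2+16\sum_i 1/z_i\big)$ for any constant $c_0>1$. Here $m\lambda^2=O(\lambda^2 n)$, and the two harmonic sums telescope: in Phase~1, $\sum_j 1/z_j=\Theta\big(\tfrac n\chi\sum_{k=\beta n}^{n}\tfrac1k\big)=\Theta\big(\tfrac n\chi\ln\tfrac1\beta\big)$, and in Phase~2 similarly $\Theta\big(\tfrac n\chi\ln\tfrac1{1-\alpha+\varepsilon}\big)$, so $16\sum_i 1/z_i\leq\tfrac{23n}{\chi}\ln\tfrac1{\beta(1-\alpha+\varepsilon)}$ for large $n$. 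Absorbing the constant from $m\leq 2n$ into $c_0$ gives $\expect{T}\leq\frac{c_0\lambda}{\delta}\big(\lambda^2 n+\tfrac{23n}{\chi}\ln\tfrac1{\beta(1-\alpha+\varepsilon)}\big)$. Choosing $c_0\in(1,c'')$ and applying Markov's inequality at the threshold $\frac{2rc''\lambda}{\delta}\big(\lambda^2 n+\tfrac{23n}{\chi}\ln\tfrac1{\beta(1-\alpha+\varepsilon)}\big)$ bounds the conditional failure probability under $\D'$ by $c_0/(2rc'')<1/r$; adding the $o(1)$ coupling error from Lemma~\ref{lemma:no-r0} yields the claimed $(1/r)(1+o(1))$.

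The step I expect to be the main obstacle is the verification of (G2a) and (G2b). Unlike single-population level-based analyses, the pairwise-dominance selection in \pdcoea couples the two populations, so the effective selection pressure on, say, the predators depends on the current prey distribution; Lemma~\ref{lemma:half-prob} and the growth lemmas are what disentangle this, but combining them with the mutation step while keeping the net growth strictly above $1+\delta$ uniformly --- over all admissible populations, over both phases, and at the phase boundary --- is delicate, and it is the reason the analysis is restricted to $\alpha-\varepsilon\geq4/5$ (which, via Lemma~\ref{lemma:no-r0}, keeps the prey from drifting into $S_0$) and to $\beta<\varepsilon$. A secondary technical point is making the coupling with $\D'$ fully rigorous, since the stopping time $T$ itself appears inside the horizon $\tau_*$ of Lemma~\ref{lemma:no-r0}.
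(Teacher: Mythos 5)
Your proposal follows essentially the same route as the paper's proof: condition on $q_0=0$ via Lemma~\ref{lemma:no-r0}, verify (G1)--(G3) for the two-phase level sequence using Lemma~\ref{lemma:phase1-condition}, Lemma~\ref{lemma:a1a2b3-growth} and Corollary~\ref{corollary:a1b2-growth} combined with the $e^{-2\chi}$ mutation factor, sum the resulting harmonic series for $\sum_i 1/z_i$, and finish with Markov's inequality. The differences are only presentational: the paper treats the $q_0=0$ restriction as a failure event rather than an explicit coupling, and pins down the numerical constants ($\gamma_0=9/25$, $\rho=47/625$, $\delta_3=1/40$, $\delta=(1+1/41)e^{-2\chi}-1$) that you correctly identify as the delicate part but leave implicit.
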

\begin{proof}
  Note that $0 < \chi < 1 < 1/(1-\alpha+\varepsilon).$

  The proof will refer to four parameters
  $\rho,\delta,\delta_3,\gamma_0\in(0,1)$, 
  which will be defined later, but which we for now assume
  satisfy the following four constraints 
  \begin{align}
    1/3 < \gamma_0 &\leq \sqrt{2(1-\rho)}-1 < 1/2 \label{constr:gamma0}\\
   1+\delta &\leq (1+\rho(1-\sqrt{\gamma_0(1+\delta_3)}))e^{-2\chi}(1-o(1))\label{constr:delta}\\
   1+\delta &\leq (1+\delta_3)e^{-2\chi}(1-o(1))\label{constr:delta3}\\
1+\delta &\leq (1+1/40)e^{-2\chi}(1-o(1))\label{constr:delta2'}.               
  \end{align}
  
  For some $\tau\in\poly(n)$ to be defined later, let $\tau_*:=\min \{ T/\lambda-1, \tau \}$. We will
  condition on the event that $q_0=0$ holds for the first $\tau_*$
  generations, and consider the run a failure otherwise.  By
  Lemma~\ref{lemma:no-r0}, the probability of such a failure is no
  more than $\tau e^{-\Omega(\lambda)}+\tau e^{-\Omega(n)}=e^{-\Omega(\lambda)}+e^{-\Omega(n)}$,
  assuming that the constraint $\lambda\geq c\log(n)$ holds for a
  sufficiently large constant $c$.
  
  We apply Theorem~\ref{thm:level-based} with $m=m_1+m_2$ levels, with
  $m_1=(1-\beta)n+1$ levels during phase 1, and
  $m_2=(\alpha-\varepsilon)n+1$ levels during phase 2, where the levels
  \begin{align*}
    (A_0^{(1)}\times B_0^{(1)}), \ldots, (A^{(1)}_{(1-\beta)n},B^{(1)}_{(1-\beta)n}),
    (A_0^{(2)}\times B_0^{(2)}), \ldots, (A^{(2)}_{(\alpha-\varepsilon)n},B^{(2)}_{(\alpha-\varepsilon)n}),
  \end{align*}
  are as defined in Section~\ref{sec:co-evol-algor}. Hence, in overall
  the total number of levels is $m\leq 2(n+1)$.
  
  We now prove conditions (G1), (G2a), and (G2b) separately for
  Phase~1 and Phase~2.
  
  \underline{Phase~1}: Assume that the current level belongs to phase 1 for any
  $j\in[0,(1-\beta)n]$. To prove that condition (G2a) holds, we
  will now show that the
  conditions of Lemma~\ref{lemma:a1a2b3-growth} are satisfied for the
  parameter $\psi:=\gamma_0$. By
  Lemma~\ref{lemma:phase1-condition}, we have
  $p_0<\gamma_0/(1-q_0)=\gamma_0 \leq \sqrt{2(1-\rho)}-1$, hence condition 1 is
  satisfied.  Condition 2 is satisfied by the assumption on $q_0=0$. By the
  definition of the level, $(p_0+p)(1-q-q_0)<\gamma_0=\psi$. Finally,
  for condition 3, we pessimistically assume that $p_0q=0$, otherwise
  the algorithm has already found an $\varepsilon$-approximate
  solution to the problem. All three conditions of
  Lemma~\ref{lemma:a1a2b3-growth} are satisfied. To produce an
  individual in $A_{j+1}^{(1)}$, it suffices to select and individual in
  $A_{j+1}^{(1)}$ and not mutate any of the bits, and analogously to
  produce an individual in $B_{j+1}^{(1)}$. In overall, for a sample
  $(x,y)\sim\mathcal{D}(P,Q)$, this gives
  \begin{align}
    & \prob{x\in A_{j+1}^{(1)}}\prob{y\in B_{j+1}^{(1)}} \\
    & \geq \Psel{A_{j+1}^{(1)}}\Psel{B_{j+1}^{(1)}}\left(1-\frac{\chi}{n}\right)^{2n}\\
    & \geq (1+\rho(1-\sqrt{\gamma_0}))\Punif{A_{j+1}^{(1)}}\Punif{B_{j+1}^{(1)}}e^{-2\chi}(1-o(1))\\
    & > (1+\rho(1-\sqrt{\gamma_0(1+\delta_3)}))\Punif{A_{j+1}^{(1)}}\Punif{B_{j+1}^{(1)}}e^{-2\chi}(1-o(1))\\
    & \geq \gamma(1+\delta)\label{eq:phase1-g2-a},
  \end{align}
  where the last inequality follows from assumption
  (\ref{constr:delta}).
  Condition (G2a) of the level-based theorem is therefore satisfied for Phase~1.

  We now prove condition (G2b). Assume that $\gamma_0\leq\Psel{A_{j}^{(1)}}\Psel{B_{j}^{(1)}}$.
  To produce an individual in $A_{j}^{(1)}$, it suffices to select an
  individual in $A_{j}^{(1)}$ and not mutate any of the bits, and
  analogously for $B_{j}^{(1)}$.
  For a sample $(x,y)\sim\mathcal{D}(P,Q)$, we therefore have
  \begin{align}
    \prob{x\in A_{j}^{(1)}}\prob{y\in B_{j}^{(1)}}
      & \geq \Psel{A_{j}^{(1)}}\Psel{B_{j}^{(1)}}\left(1-\frac{\chi}{n}\right)^{2n}.\label{eq:g2-sample}
  \end{align}
  To lower bound the expression in (\ref{eq:g2-sample}), we apply
  Lemma~\ref{lemma:a1a2b3-growth} again, this time with parameter
  $\psi:=\gamma_0(1+\delta_3)$. We distinguish between two cases.
  
  In the case where $\gamma_0\leq \Punif{A_{j}^{(1)}}\Punif{B_{j}^{(1)}}\leq \gamma_0(1+\delta_3)$,
  the first statement of Lemma~\ref{lemma:a1a2b3-growth} gives
  \begin{align*}
    \Psel{A_{j}^{(1)}}\Psel{B_{j}^{(1)}}\left(1-\frac{\chi}{n}\right)^{2n}
    & \geq \gamma_0(1+\rho(1-\sqrt{\psi})e^{-2\chi}(1-o(1))\\
    & = \gamma_0(1+\rho(1-\sqrt{\gamma_0(1+\delta_3)})e^{-2\chi}(1-o(1))\\
    & \geq \gamma_0(1+\delta),
  \end{align*}
  where the last inequality follows from assumption (\ref{constr:delta}).  
  In the case where $\Punif{A_{j}^{(1)}}\Punif{B_{j}^{(1)}}\geq \gamma_0(1+\delta_3)$, the
  second statement of Lemma~\ref{lemma:a1a2b3-growth} gives
  \begin{align*}
    \Psel{A_{j}^{(1)}}\Psel{B_{j}^{(1)}}\left(1-\frac{\chi}{n}\right)^{2n}
    & \geq \gamma_0(1+\delta_3)e^{-2\chi}(1-o(1))\\
    & \geq \gamma_0(1+\delta),
  \end{align*}
  where the last inequality follows from assumption (\ref{constr:delta3}).
  In both cases, it follows that
  \begin{align*}
    \prob{x\in A_{j}^{(1)}}\prob{y\in B_{j}^{(1)}} \geq \gamma_0(1+\delta),
  \end{align*}
  which proves that Condition (G2b) is satisfied in phase 1.

  We now consider condition (G1).
  Assume that $\Punif{A_j^{(1)}\times B_j^{(1)}}=(p_0+p)(1-q-q_0)\geq
  \gamma_0$ and $(x,y)\sim\mathcal{D}(P,Q)$.
  Then, a $P$-individual can be obtained in $A_{j+1}^{(1)}$ by
  selecting an individual in $A_j^{(1)}$. By definition, the
  selected individual has in the worst case $n-j$ 1-bits, and it
  suffices to flip any of these bits and no other bits, an event
  which occurs with probability at least
  \begin{align*}
    \prob{x\in A_{j+1}^{(1)}} & \geq \Psel{A_j^{(1)}}\frac{(n-j)\chi}{n}\left(1-\frac{\chi}{n}\right)^{n-1}\\
       & \geq \Psel{A_j^{(1)}}\cdot \frac{(n-j)\chi}{ne^{\chi}} (1-o(1)).
  \end{align*}
  A $Q$-individual can be obtained in $B_{j+1}^{(1)}$ by selecting an
  individual in $B_{j+1}^{(1)}$ and not mutate any bits. This
  event occurs with probability at least
  \begin{align*}
    \prob{y\in B_{j+1}^{(1)}}
             & \geq \Psel{B_j^{(1)}}\left(1-\frac{\chi}{n}\right)^{n}
               \geq \frac{\gamma_0(1-o(1))}{\Psel{A_j^{(1)}}e^{\chi}}
  \end{align*}
  Hence, for a sample $(x,y)\sim\mathcal{D}(P,Q)$, we obtain by (\ref{eq:g2-sample}),
  \begin{align}
    \prob{x\in A_{j+1}^{(1)}}\prob{y\in B_{j+1}^{(1)}}
    \geq \frac{(n-j)\gamma_0\chi}{ne^{2\chi}} (1-o(1)) =:z^{(1)}_j.\label{eq:g1-bound}
\end{align}
  hence condition (G1) is satisfied.

  \underline{Phase~2}: The analysis is analogous for this
  phase. To prove (G2a), assume that the current level belongs to
  phase~2 for any $j\in[0,(\alpha-\varepsilon)n]$. By the definitions of the
  levels in this phase and the assumptions of (G2a), we must have
  \begin{align}
    p_0q(j+1)=\gamma<\gamma_0<1/2,\label{eq:constraint-for-corollary14}
  \end{align}
  and $p_0 q(j)\geq \gamma_0$, thus
  $p_0\geq \gamma_0> 1/3$ where the last inequality follows from our
  choice of $\gamma_0$. Together with the assumption $q_0=0$,
  Corollary~\ref{corollary:a1b2-growth} gives for $\delta_2:=1/2$ and
  \begin{align*}
    \delta_2' & :=\min\{\delta_2/20-8q_0,1/10-12q_0,\frac{\delta_2}{300}(40-\delta_2(17-\delta_2))\} = \frac{1}{40}.
  \end{align*}
  we get the lower bound
  \begin{align}
    \prob{x\in A_{j+1}^{(2)}}\prob{y\in B_{j+1}^{(2)}} 
    & \geq \Psel{A_{j+1}^{(2)}}\Psel{B_{j+1}^{(2)}}\left(1-\frac{\chi}{n}\right)^{2n}\\
       & \geq (1+\delta_2')\Punif{A_{j+1}^{(1)}}\Punif{B_{j+1}^{(1)}}e^{-2\chi}(1-o(1))\\
       & \geq (1+\delta)\gamma\label{eq:phase1-g2},
  \end{align}
  where the last inequality follows from assumption (\ref{constr:delta2'}).

  Condition (G2b) can be proved analogously to Phase~1. Again, we have
  \begin{align}
    \prob{x\in A_{j}^{(2)}}\prob{y\in B_{j}^{(2)}} 
    & \geq \Psel{A_{j}^{(2)}}\Psel{B_{j}^{(2)}}\left(1-\frac{\chi}{n}\right)^{2n}.
\end{align}
  In the case where
  $\Psel{A_{j}^{(2)}}\Psel{B_{j}^{(2)}}=p_0(j)q<1-\delta_2$ for $\delta_2=9/20$,
  Corollary~\ref{corollary:a1b2-growth} for
  $\delta'_2=\min(1/90,\delta_2/40)=1/90$ gives as above
  \begin{align*}
    \Psel{A_{j}^{(2)}}\Psel{B_{j}^{(2)}}\left(1-\frac{\chi}{n}\right)^{2n}
    & \geq \gamma_0(1+\delta'_2)e^{-2\chi}(1-o(1))\\
    & \geq \gamma_0(1+\delta).
  \end{align*}
  In the case where $\Psel{A_{j}^{(2)}}\Psel{B_{j}^{(2)}}=p_0(j)q\geq 1-\delta_2$, we get
  \begin{align*}
    \Psel{A_{j}^{(2)}}\Psel{B_{j}^{(2)}}\left(1-\frac{\chi}{n}\right)^{2n}
    & \geq (1-\delta_2)e^{-2\chi}(1-o(1))\\
    & = (1/2)(1+1/10)e^{-2\chi}(1-o(1))\\
    & > \gamma_0(1+\delta'_2)e^{-2\chi}(1-o(1))\\
    & \geq \gamma_0(1+\delta).
  \end{align*}
  Therefore, condition (G2b) also holds in Phase~2.
  
  To prove condition (G1), we proceed as for Phase~1 and observe that
  to produce an individual in $A_{j+1}^{(2)}$, it suffices to select
  a $P$-individual in $A_{j}^{(2)}$ and not mutate any of the
  bits. To produce an individual in $B_{j+1}^{(2)}$, it suffices to
  select a $Q$-individual in $B_{j}^{(2)}$ and flip one of the at
  least $n-j$ number of 0-bits. Similarly to in (\ref{eq:g1-bound}), we
  obtain
  \begin{align*}
    \prob{x\in A_{j+1}^{(2)}}\prob{y\in B_{j+1}^{(2)}}
& \geq \frac{(n-j)\gamma_0\chi}{ne^{2\chi}} (1-o(1)) =: z^{(2)}_j.
\end{align*}
hence condition (G1) will be satisfied during phase 2.

  Condition (G3) is satisfied as long as $\lambda\geq 2\left(\frac{1}{\gamma_0\rho^2}\right)^{1+\upsilon}\ln(m/z_*)$.
  
  All the conditions are satisfied, and assuming that $q_0=0$, it follows
  that the expected time to reach an $\varepsilon$-approximation of
  \bilinear is for any constant $c''>1$ no more than
  \begin{align}
    \expect{T} & \leq \frac{c''\lambda}{\delta}\left(\lambda^2m+16\sum_{i=1}^{m-1}\frac{1}{z_i}\right)\\
               & \leq \frac{2c''\lambda}{\delta}\left(\lambda^2(n+1)+8\sum_{i=1}^{m_1-1}\frac{1}{z^{(1)}_i}+8\sum_{i=1}^{m_2-1}\frac{1}{z^{(2)}_i}\right)\\
               & \leq
                 \frac{2c''\lambda}{\delta}\left(\lambda^2(n+1)
                 +\frac{8e^{2\chi}n(1+o(1))}{\gamma_0\chi}\left(
                 \sum_{i=1}^{m_1-1} \frac{1}{n-i} +
                 \sum_{i=1}^{m_2-1} \frac{1}{n-i}
                 \right)\right)\\
               & \leq
                 \frac{2c''\lambda}{\delta}\left(\lambda^2(n+1)
                 +\frac{8e^{2\chi}n(1+o(1))}{\gamma_0\chi}\left(
                 2\sum_{i=1}^{n-1}\frac{1}{i}
                 -\sum_{i=1}^{\beta n} \frac{1}{i}
                 -\sum_{i=1}^{(1-\alpha+\varepsilon)n} \frac{1}{i}
                 \right)\right)\\
               & \leq
                 \frac{2c''\lambda}{\delta}\left(\lambda^2(n+1)
                 +\frac{8e^{2\chi}n(1+o(1))}{\gamma_0\chi}\ln\left(\frac{1}{\beta(1-\alpha+\varepsilon)}\right)\right).\label{eq:upper-bound-parameterised}
  \end{align}
  We now choose the parameters
  $\rho,\delta,\delta_3,\gamma_0\in(0,1)$, where numerical
  maximisation of $\delta$ subject to the constraints, give
  approximate solutions $\gamma_0=9/25$, $\delta_3=1/40$,
  $\rho=47/625$, and choosing 
  \begin{align}
    \delta := \left(1+\frac{1}{41}\right)e^{-2\chi}-1\leq  \left(1+\frac{1}{40}\right)e^{-2\chi}(1-o(1))-1 \label{eq:delta-choice},
  \end{align}
  thus assumption (\ref{constr:delta2'}) is
  satisfied. Furthermore, numerical evaluation show that the
  choices of $\delta_3,\rho,$ and $\gamma_0$ give
  \begin{align*}
    \rho(1-\sqrt{\gamma_0(1+\delta_3)}) > \frac{29}{1000} > \delta_3,
  \end{align*}
  thus assumptions (\ref{constr:delta3}) and (\ref{constr:delta})
  follow from assumption (\ref{constr:delta2'}). Finally, assumption
  (\ref{constr:gamma0}) is also satisfied because
  \begin{align*}
    \frac{1}{3} <\gamma_0=\frac{9}{25} = \sqrt{2(1-\rho)}-1 < \frac{1}{2}.
  \end{align*}

  Note that condition (G3) is satisfied since for a sufficiently small
  constant $\upsilon>0$,
  \begin{align*}
    \lambda & \geq 2000\ln(n)\\
            & \geq 2\left(\frac{1}{\gamma_0\rho^2}\right)^{1+\upsilon}\ln(n^2)(1+o(1))\\
            & \geq 2\left(\frac{1}{\gamma_0\rho^2}\right)^{1+\upsilon}\ln\left(\frac{2n^2e^{2\chi}}{\gamma_0\chi}\right)\\              
            & \geq 2\left(\frac{1}{\gamma_0\rho^2}\right)^{1+\upsilon}\ln(m/z_*)
  \end{align*}
  
  Inserting these parameter choices into
  (\ref{eq:upper-bound-parameterised}) gives
  \begin{align*}
    \expect{T}
    & \leq \frac{2c''\lambda}{\delta}\left(\lambda^2n
                   +\frac{200e^{2\chi}n}{9\chi}\ln\left(\frac{1}{\beta(1-\alpha+\varepsilon)}\right)\right)(1+o(1))\\    
    & =    \frac{2c''\lambda}{\delta}\left(\lambda^2n
                   +\frac{42}{41(1+\delta)}\frac{200n}{9\chi}\ln\left(\frac{1}{\beta(1-\alpha+\varepsilon)}\right)\right)(1+o(1))\\    
    & <    \frac{2c''\lambda}{\delta}\left(\lambda^2n
                    +\frac{23n}{\chi}\ln\left(\frac{1}{\beta(1-\alpha+\varepsilon)}\right)\right)(1+o(1))
      := \tau.
  \end{align*}
  
  By Markov's inequality, the probability that a
  solution has not been obtained in $r\tau$ time is less
  than $1/r$. Hence, in overall, taking into account all failure
  events, we obtain
  \begin{align*}
    \prob{T>r\tau}\leq 1/r + e^{-\Omega(n)} + e^{-\Omega(\lambda)}\leq (1/r)(1+o(1)).
  \end{align*}
  Since the statement holds for all choices of the constant $c''>1$,
  it also holds for
  $$\tau':=\frac{2c''\lambda}{\delta}\left(\lambda^2n
    +\frac{23n}{\chi}\ln\left(\frac{1}{\beta(1-\alpha+\varepsilon)}\right)\right)$$
  (i.e., $\tau$ without the
  extra factor $1+o(1)$), giving the final result
  \begin{align*}
    \prob{T>r\tau'}\leq 1/r + e^{-\Omega(n)} + e^{-\Omega(\lambda)}\leq (1/r)(1+o(1)).
  \end{align*}
\end{proof}

\section{A Co-Evolutionary Error Threshold}\label{sec:co-evol-error}

The previous section presented a scenario where
Algorithm~\ref{alg:pdcoea} obtains an approximate solution efficiently. We now present a general
scenario where the algorithm is inefficient. In particular, we show
that there exists a critical mutation rate above which the algorithm
fails on any problem, as long as the problem does not have too many
global optima (Theorem~\ref{thm:error-threshold}). The critical mutation rate is called the ``error threshold''
of the algorithm \cite{ochoa_error_2006,LehreNegativeDrift2010}. As
far as the author is aware, this is the first time an error threshold
has been identified in co-evolution. The proof of
Theorem~\ref{thm:error-threshold} uses the so-called negative
      drift theorem for populations (Theorem~\ref{thm:neg-drift-pop})
      \cite{LehreNegativeDrift2010}.

\begin{theorem}\label{thm:error-threshold}
  There exists a constant $c>0$ such that the following holds.
  If $A$ and $B$ are subsets of $\{0,1\}^n$ with
  $\min\{ \vert A\vert, \vert B\vert\}\leq e^{cn}$, and Algorithm~\ref{alg:pdcoea}
  is executed with population size $\lambda\in\poly(n)$ and constant
  mutation rate $\chi>\ln(2)/(1-2\delta)$ for any constant
  $\delta\in(0,1/2)$, then there exists a constant $c'$ such that
  $\prob{T_{A\times B}<e^{c'n}} = e^{-\Omega(n)}$.
\end{theorem}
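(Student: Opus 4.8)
The plan is to reduce the statement to the negative drift theorem for populations (Theorem~\ref{thm:neg-drift-pop}), applied to whichever of the two populations of \pdcoea corresponds to the smaller target set. Assume \WLOG that $\vert B\vert\le e^{cn}$; the case $\vert A\vert\le e^{cn}$ is symmetric because the $P$-population of \pdcoea is also an instance of Algorithm~\ref{alg:psva} with the same reproductive-rate bound below. Since $(P_t\times Q_t)\cap(A\times B)\neq\emptyset$ forces $Q_t\cap B\neq\emptyset$, it suffices to show that, with probability $1-e^{-\Omega(n)}$, the $Q$-population contains no member of $B$ for $e^{\Omega(n)}$ generations. The first ingredient is that the $Q$-population of \pdcoea is an instance of Algorithm~\ref{alg:psva} with $\pmut$ equal to bitwise mutation at rate $\chi/n$, and that its reproductive rate never exceeds $2$: in iteration $j$ of a generation the selected prey-parent index $I_t(j)$ is one of the two uniformly and independently drawn prey-indices, so $\prob{I_t(j)=i\mid\filt{t}}\le 2/\lambda$ for every fixed $i$, irrespective of the dominance comparisons; summing over $j\in[\lambda]$ gives $\expect{S_t(i)\mid\filt{t}}\le 2$. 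As the event $a(n)<H(Q_t(i),x^*)<b(n)$ is $\filt{t}$-measurable, condition~1 of Theorem~\ref{thm:neg-drift-pop} holds with $\alpha_0:=2$.

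Next I would fix the negative-drift parameters. Take the theorem's parameter to be the given constant $\delta$; by the hypothesis $\chi>\ln(2)/(1-2\delta)$ we get $\psi:=\ln(2)/\chi+\delta<1-\delta<1$, so $\tfrac{1}{2}-\tfrac{1}{2}\sqrt{\psi(2-\psi)}$ is a positive constant. Pick a positive constant $\kappa<\min\{1/5,\;\tfrac{1}{2}-\tfrac{1}{2}\sqrt{\psi(2-\psi)},\;2/\chi\}$ and set $b(n):=\lfloor\kappa n/2\rfloor$, $a(n):=\lfloor\kappa n/4\rfloor$, $d(n):=b(n)-a(n)$. Then $b(n)\le n/\chi$, $d(n)=\Theta(n)=\omega(\ln n)$, and conditions~2 and~3 of Theorem~\ref{thm:neg-drift-pop} are satisfied; the hypothesis $\lambda\in\poly(n)$ is exactly the remaining requirement of that theorem.

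The third step handles initialisation and a single target point. For a fixed $x^*\in B$ each $Q_0(i)$ is uniform, so $H(Q_0(i),x^*)\sim\bin(n,1/2)$, which is below $b(n)<n/5$ only with probability $e^{-\Omega(n)}$ by a Chernoff bound; a union bound over the $\poly(n)$ individuals and the at most $e^{cn}$ points of $B$ shows that, except with probability $e^{cn}\poly(n)e^{-\Omega(n)}=e^{-\Omega(n)}$ (for $c$ small enough), the initial $Q$-population lies outside the Hamming ball of radius $b(n)$ around every point of $B$. Conditioning on this, Theorem~\ref{thm:neg-drift-pop} gives, for each fixed $x^*\in B$, that $\min_{i\in[\lambda]}H(Q_t(i),x^*)\le a(n)$ occurs within $e^{c_0 d(n)}$ generations only with probability $e^{-\Omega(d(n))}=e^{-\Omega(n)}$, where $c_0>0$ is the constant supplied by the theorem. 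Because $Q_t\cap B\neq\emptyset$ entails $\min_i H(Q_t(i),x^*)=0\le a(n)$ for some $x^*\in B$, a final union bound over $B$ yields $\prob{\exists t\le e^{c_0 d(n)}:Q_t\cap B\neq\emptyset}\le e^{cn}e^{-\Omega(n)}+e^{-\Omega(n)}=e^{-\Omega(n)}$, provided $c$ is smaller than the constant hidden in the relevant $\Omega(n)$. Since $e^{c_0 d(n)}$ generations amount to $\lambda e^{c_0 d(n)}=e^{\Omega(n)}$ interactions, this furnishes a constant $c'>0$ with $\prob{T_{A\times B}<e^{c'n}}=e^{-\Omega(n)}$.

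The main obstacle I anticipate is not any individual estimate but making the two exponential scales cooperate: the negative drift theorem only delivers a per-target failure probability $e^{-\Omega(d(n))}$, and this must dominate both the $e^{cn}$ union bound over $B$ and the $e^{cn}$ term from the random initialisation, which is precisely what forces the constant $c$ in the statement to be small. A secondary point that would otherwise be delicate is verifying condition~1 of Theorem~\ref{thm:neg-drift-pop} uniformly over all population states and conditioned on the critical-region event; here it is immediate, since the bound $\prob{I_t(j)=i\mid\filt{t}}\le 2/\lambda$ uses only the uniform choice of the two parent indices and is oblivious to the dominance comparisons, to the objective $g$, and to the current Hamming distances.
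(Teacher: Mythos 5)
Your proposal is correct and follows essentially the same route as the paper: reduce to the smaller target set, bound the reproductive rate of the corresponding population by $\alpha_0=2$ via the two uniformly drawn parent indices, apply Theorem~\ref{thm:neg-drift-pop} with $\psi=\ln(2)/\chi+\delta<1-\delta$, and union bound over the at most $e^{cn}$ target points. The only difference is that you additionally verify via a Chernoff bound that the initial population starts outside the critical Hamming balls, a detail the paper leaves implicit.
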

\begin{proof}
  Without loss of generality, assume that $\vert B\vert\leq \vert A\vert$. For a lower
  bound on $T_{A\times B}$, it suffices to compute a lower bound on
  the time until the $Q$-population contains an element in $B$.

  For any $y\in B,$ we will apply Theorem~\ref{thm:neg-drift-pop} to
  bound $T_{y}:=\min \{ t\mid H(Q_t,y)\leq 0 \}$,
  i.e., the time until the $Q$ population contains $y$. 
  Define $a(n):=0$ and $b(n):=n\min\{1/5,1/2-(1/2)\sqrt{1-\delta^2},1/\chi\}$.
  Since $\delta$ is a constant, it follows that $d(n)=b(n)-a(n)=\omega(\ln n)$.
  Furthermore, by definition, $b(n)\leq n/\chi$.
  
  We now show that condition 1 of Theorem~\ref{thm:neg-drift-pop}
  holds for $\alpha_0:=2$. For any individual $u\in\Y$, the
  probability that the individual is selected in lines
  \ref{label:alg:pdcoea:selection1}--\ref{label:alg:pdcoea:selection3}
  is at most
  $1-\prob{y_1\neq u\wedge y_2\neq u}  = 1-(1-1/\lambda)^2 = (1/\lambda)(2-1/\lambda).
  $ 
  Thus within the $\lambda$ iterations, individual $u$ is selected
  less than 2 times in expectation. This proves condition 1.

  Condition 2 is satisfied because by the assumption on the mutation
  rate, $\psi:=\ln(\alpha_0)/\chi+\delta\leq 1-\delta<1$. Finally,
  condition 3 trivially holds because $b(n)\leq n/5$ and
  $1/2-\sqrt{\psi(2-\psi)}/2
    \leq 1/2-\sqrt{1-\delta^2}/2
    \leq b(n)/n.
  $

  All conditions are satisfied, and Theorem~\ref{thm:neg-drift-pop}
  imply that for some constant $c'$, $\prob{T_{y^*}<e^{c'n}}=e^{-\Omega(n)}.$
  Taking a union bound over all elements in $B$, we get for sufficiently
  small $c$
  \begin{align*}
    \prob{T_{A\times B}<e^{c'n}} 
     &\leq \prob{T_{B\times \mathcal{Y}}<e^{c'n}}\\
     &\leq \sum_{y\in B}\prob{T_{y}<e^{c'n}}\\
     &\leq e^{cn}\cdot e^{-\Omega(n)} = e^{-\Omega(n)}.
  \end{align*}
\end{proof}

\section{Conclusion}

Co-evolutionary algorithms have gained wide-spread interest,
with a number of exciting applications.
However, their population dynamics tend to be
significantly more complex than in standard evolutionary algorithms.
A number of pathological behaviours are reported in the literature,
preventing the potential of these algorithms. There has been a
long-standing goal to develop a rigorous theory for co-evolution which
can explain when they are efficient. A major obstacle for
such a theory is to reason about the complex interactions that occur
between multiple populations.

This paper provides the first step in developing runtime analysis for
population-based, competitive co-evolutionary algorithms.  A generic
mathematical framework covering a wide range of CoEAs is presented, along with an analytical tool to derive upper
bounds on their expected runtimes.
To illustrate the approach, we define a new co-evolutionary algorithm
\pdcoea and analyse its runtime on a bilinear maximin-optimisation
problem \bilinear. For some problem instances,
the algorithm obtains a solution within
arbitrary constant approximation ratio to the optimum within
polynomial time $O(r\lambda^3n)$ with probability $1-(1/r)(1+o(1))$
for all $r\in\poly(n)$, assuming population size
$\lambda\in\Omega(\log n)\cap\poly(n)$ and sufficiently small (but
constant) mutation rate. Additionally, we present a setting where
\pdcoea is inefficient. In particular, if the mutation rate is too
high, the algorithm needs with overwhelmingly high probability
exponential time to reach any fixed solution. This constitutes a
co-evolutionary ``error threshold''.

Future work should consider broader classes of problems, as well as
other co-evolutionary algorithms. 

\section*{Acknowledgements}
Lehre was supported by a Turing AI Acceleration Fellowship
(EPSRC grant ref EP/V025562/1).

\bibliographystyle{plain}
\bibliography{bilinear}

\newpage
\appendix

\section{Technical Results}

\begin{lemma}[Lemma~18 in \cite{lehre_fitness-levels_2011}]\label{lemma:mgf-bound-kappa}
  If $Z\sim\bin(\lambda,r)$ with $r\geq \alpha(1+\delta)$, then for
  any $\kappa\in(0,\delta]$, $\expect{e^{-\kappa Z}}\leq e^{-\kappa\alpha\lambda}$.
\end{lemma}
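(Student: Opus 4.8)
The plan is to reduce everything to the elementary inequality $1+t\le e^{t}$. First I would write $Z=\sum_{i=1}^{\lambda}Z_i$ where the $Z_i$ are independent $\Ber(r)$ variables, so that by independence
\[
  \expect{e^{-\kappa Z}}=\left(\expect{e^{-\kappa Z_1}}\right)^{\lambda}=\left(1-r\left(1-e^{-\kappa}\right)\right)^{\lambda}.
\]
Since $r\in[0,1]$ and $1-e^{-\kappa}\in[0,1)$, the base lies in $(0,1]$, so it suffices to show $1-r\left(1-e^{-\kappa}\right)\le e^{-\kappa\alpha}$ and then raise both sides to the $\lambda$-th power.

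Applying $1-x\le e^{-x}$ with $x=r\left(1-e^{-\kappa}\right)$, it is enough to prove $r\left(1-e^{-\kappa}\right)\ge\kappa\alpha$. Here I would use two facts. First, the map $\kappa\mapsto\left(1-e^{-\kappa}\right)/\kappa$ is non-increasing on $(0,\infty)$: the numerator of its derivative is $e^{-\kappa}(1+\kappa)-1\le 0$, which is exactly the inequality $1+\kappa\le e^{\kappa}$. Hence for $\kappa\in(0,\delta]$ we get $\left(1-e^{-\kappa}\right)/\kappa\ge\left(1-e^{-\delta}\right)/\delta$. Second, $(1+\delta)\left(1-e^{-\delta}\right)\ge\delta$, which rearranges to $1+\delta\le e^{\delta}$ and therefore holds for every $\delta>0$.

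Combining these, $r\left(1-e^{-\kappa}\right)\ge\alpha(1+\delta)\,\kappa\cdot\frac{1-e^{-\delta}}{\delta}=\alpha\kappa\cdot\frac{(1+\delta)\left(1-e^{-\delta}\right)}{\delta}\ge\alpha\kappa$, where the first inequality uses the hypothesis $r\ge\alpha(1+\delta)$ together with the monotonicity fact, and the last uses $(1+\delta)\left(1-e^{-\delta}\right)\ge\delta$. Plugging this back into $1-x\le e^{-x}$ gives $1-r\left(1-e^{-\kappa}\right)\le e^{-\kappa\alpha}$, and raising to the power $\lambda$ yields $\expect{e^{-\kappa Z}}\le e^{-\kappa\alpha\lambda}$.

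There is no serious obstacle here; the only point that needs care is packaging the two elementary monotonicity and convexity facts so that the chain of inequalities is valid for the full range $\kappa\in(0,\delta]$ rather than only for small $\kappa$ — in particular, recognising that the seemingly restrictive bound $(1+\delta)\left(1-e^{-\delta}\right)\ge\delta$ is in fact equivalent to $1+\delta\le e^{\delta}$ and hence holds unconditionally.
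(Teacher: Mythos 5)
Your proof is correct, and note that the paper itself does not prove this lemma --- it only cites it from an external reference --- so your argument fills in a proof rather than duplicating one. The chain $\expect{e^{-\kappa Z}}=\left(1-r\left(1-e^{-\kappa}\right)\right)^{\lambda}$, followed by $1-x\le e^{-x}$ and the reduction to $r\left(1-e^{-\kappa}\right)\ge\kappa\alpha$, is the standard route, and your use of the monotonicity of $\kappa\mapsto\left(1-e^{-\kappa}\right)/\kappa$ together with the identity $(1+\delta)\left(1-e^{-\delta}\right)\ge\delta\iff e^{\delta}\ge 1+\delta$ cleanly handles the full range $\kappa\in(0,\delta]$ for arbitrary $\delta>0$, avoiding the Taylor-truncation step that would otherwise force a smallness assumption on $\delta$. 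All individual steps check out, including the positivity of the base before raising to the power $\lambda$.
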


\begin{lemma}\label{lemma:product-mgf}  
  Consider any pair of independent binomial random variables
  $X\sim\bin(\lambda,p)$ and $Y\sim\bin(\lambda,q)$, where $pq\geq
  (1+\sigma)^2z,$ $p,q,z\in(0,1)$ and $\sigma>0$.
  Then $\expect{e^{-\eta XY}}\leq e^{-\eta z\lambda^2}$ for all $\eta$
  where $0<\eta\leq \frac{\sigma}{(1+\sigma)\lambda}$.
\end{lemma}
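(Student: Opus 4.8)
The plan is to bound the moment generating function of the product $XY$ by conditioning on one of the two variables and invoking Lemma~\ref{lemma:mgf-bound-kappa} twice, where each application ``consumes'' exactly one of the two factors of $(1+\sigma)$ in the hypothesis $pq\geq (1+\sigma)^2z$.

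First I would use independence of $X$ and $Y$ to write $\expect{e^{-\eta XY}} = \sum_{j=0}^{\lambda}\prob{X=j}\,\expect{e^{-\eta j Y}}$. For a fixed $j\in[\lambda]$, set $\kappa := \eta j$; the constraint on $\eta$ gives $\kappa \leq \eta\lambda \leq \sigma/(1+\sigma) < \sigma$, so Lemma~\ref{lemma:mgf-bound-kappa} applies to $Y\sim\bin(\lambda,q)$ with $\delta:=\sigma$ and $\alpha := q/(1+\sigma)$ (the requirement $q\geq \alpha(1+\sigma)$ holds with equality), yielding $\expect{e^{-\eta j Y}} \leq e^{-\eta j q\lambda/(1+\sigma)}$. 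For $j=0$ both sides equal $1$, so this bound holds for every $j\in[0..\lambda]$. Summing over $j$ gives
\[
  \expect{e^{-\eta XY}} \;\leq\; \expect{e^{-\kappa' X}}, \qquad \text{where } \kappa' := \frac{\eta q\lambda}{1+\sigma}.
\]

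Second, I would apply Lemma~\ref{lemma:mgf-bound-kappa} again, this time to $X\sim\bin(\lambda,p)$ with $\kappa'$ in place of $\kappa$, with $\delta:=\sigma$ and $\alpha' := z(1+\sigma)/q$. The hypothesis $p\geq \alpha'(1+\sigma)$ is exactly $pq\geq (1+\sigma)^2z$, which holds by assumption, and the range condition is met because $\kappa' \leq \eta\lambda/(1+\sigma) \leq \sigma/(1+\sigma)^2 < \sigma$. Hence $\expect{e^{-\kappa' X}} \leq e^{-\kappa'\alpha'\lambda}$, and substituting the values of $\kappa'$ and $\alpha'$ gives $\kappa'\alpha'\lambda = \frac{\eta q\lambda}{1+\sigma}\cdot\frac{z(1+\sigma)}{q}\cdot\lambda = \eta z\lambda^2$, which proves the claim.

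I do not expect a genuine obstacle here: the argument is just two invocations of Lemma~\ref{lemma:mgf-bound-kappa}, and the only thing requiring care is the bookkeeping of the two $(1+\sigma)$ factors (each invocation trades one factor of $(1+\sigma)$ for the validity of the bound) together with the observation that the constraint $\eta\leq \sigma/((1+\sigma)\lambda)$ is precisely what guarantees $\kappa=\eta X\leq\sigma$ in the first step, and thus also $\kappa'\leq\sigma$ in the second. One should additionally note explicitly that the $j=0$ term is handled trivially, since both the claimed bound and the true value are $1$ there.
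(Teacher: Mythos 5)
Your proof is correct and follows essentially the same route as the paper: two applications of Lemma~\ref{lemma:mgf-bound-kappa}, each consuming one factor of $(1+\sigma)$, with the constraint $\eta\leq\sigma/((1+\sigma)\lambda)$ guaranteeing the range condition $\kappa\leq\sigma$. The only cosmetic differences are that you condition on $X$ and integrate out $Y$ first (the paper does the mirror image, conditioning on $Y$ via the tower property), and that you explicitly handle the degenerate $\kappa=0$ term, which the paper leaves implicit.
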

\begin{proof}
  The proof applies Lemma~\ref{lemma:mgf-bound-kappa} twice.
  
  First, we apply Lemma~\ref{lemma:mgf-bound-kappa} for the parameters
  $Z:=X$, $\alpha:=(z/q)(1+\sigma)$ and $\kappa:=\eta Y$. The
  assumptions of the lemma then imply
  $p\geq \frac{z(1+\sigma)^2}{q} = \alpha(1+\sigma)$
  and
  $\kappa \leq \frac{\sigma Y}{(1+\sigma)\lambda} \leq \sigma$,
  i.e., the conditions of Lemma~\ref{lemma:mgf-bound-kappa} are
  satisfied. This then gives
  \begin{align}
    \expect{e^{-\eta XY}\mid Y}
    & =    \expect{e^{-\kappa X}\mid Y}
     \leq e^{-\kappa\alpha\lambda}
      = \exp\left(-\frac{\eta z}{q}(1+\sigma)\lambda Y\right).\label{eq:xy-mgf-1}
  \end{align}

  Secondly, we apply Lemma~\ref{lemma:mgf-bound-kappa} for the
  parameters $Z:=Y$, $\alpha:=q/(1+\sigma)$ and
  $\kappa:=\frac{z\eta}{q}(1+\sigma)\lambda$.  We have
  $q = \alpha(1+\sigma)$, and by the assumption on $\eta$ and the fact
  that $1\geq q\geq z>0$, it follows that
  \begin{align*}
    \kappa \leq \frac{\sigma}{(1+\sigma)\lambda}\frac{z}{q}(1+\sigma)\lambda \leq \sigma.
  \end{align*}
  The conditions of Lemma~\ref{lemma:mgf-bound-kappa} are
  satisfied, giving 
  \begin{align}
    \expect{\exp\left(-\frac{\eta z}{q}(1+\sigma)\lambda Y\right)}
    & = \expect{e^{-\kappa Y}} \leq e^{-\kappa\alpha\lambda}\\
    & = \exp\left(-\frac{z\eta}{q}(1+\sigma)\lambda\frac{q}{1+\sigma}\lambda\right)
      = e^{-\eta z\lambda^2}.\label{eq:xy-mgf-2}
  \end{align}
  By (\ref{eq:xy-mgf-1}), (\ref{eq:xy-mgf-2}), and the tower property
  of the expectation,
  it follows that
  \begin{align*}
    \expect{e^{-\eta XY}}=\expect{\expect{e^{-\eta XY}}\mid Y} < e^{-\eta z\lambda^2}.
  \end{align*}
\end{proof}

\begin{lemma}\label{lemma:a1b2-growth1}
  For any $\delta_1\in(0,1)$, if $1/3<p_0<1-\delta_1$,
then for 
  $\delta_1':=\min\{\delta_1/2-8q_0,1/10-12q_0\}$, it holds
  \begin{align*}
    \varphi:=\frac{\Psel{R_0}}{\Punif{R_0}}
    \frac{\Psel{S_1}}{\Punif{S_1}}
    > 1+\delta_1'.
  \end{align*}  
\end{lemma}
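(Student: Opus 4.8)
The plan is to lower-bound the predator ratio $\Psel{R_0}/\Punif{R_0}$ and the prey ratio $\Psel{S_1}/\Punif{S_1}$ separately and then multiply. Both are computed by conditioning on which of $R_0$ and $R_1\cup R_2$ contains each of the two independently sampled predators $x_1,x_2$, and which of $S_0,S_1,S_2$ contains each of the two independently sampled prey $y_1,y_2$ (all four random variables are mutually independent). In every such configuration, Lemma~\ref{lemma:dom-charact} determines exactly which pair is selected as a function of the $1$-bit counts, after which the probabilities are estimated using the elementary symmetry facts that two i.i.d.\ samples satisfy $\|x_1\|\le\|x_2\|$, resp.\ $\|y_1\|\le\|y_2\|$, with probability at least $1/2$ (the mechanism behind Lemma~\ref{lemma:half-prob}). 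The two structural consequences of Lemma~\ref{lemma:dom-charact} I would use are: a pair whose predator lies outside $R_0$ can dominate a pair whose predator lies in $R_0$ only if the former pair's prey lies in $S_0$; and when both predators lie in $R_0$ the winner is governed by the comparison of $\|y_1\|$ with $\|y_2\|$ together with $\|x_1\|$ with $\|x_2\|$.

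I would first do the bookkeeping with $q_0=0$. For the predator ratio: the selected predator is in $R_0$ for certain when both $x_1,x_2\in R_0$ (mass $p_0^2$), and in the two mixed configurations the in-$R_0$ pair is selected with probability $1$ in one ordering and with probability $\ge 1/2$ in the other (the asymmetry caused by ties being broken in favour of pair~$1$); summing yields $\Psel{R_0}/\Punif{R_0}\ge\tfrac{3}{2}-\tfrac{1}{2}p_0$. For the prey ratio, write $q:=\Punif{S_1}$: the selected prey is in $S_1$ for certain when $y_1,y_2\in S_1$, and in the mixed $S_1/S_2$ configurations the winner depends on whether $x_1\in R_0$ and on $\|x_1\|$ versus $\|x_2\|$; a short symmetry estimate gives $\Psel{S_1}/\Punif{S_1}\ge q+(1-q)\,p_0(3-\tfrac{3}{2}p_0)$. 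Multiplying and minimising over $q\in[0,1]$ leaves $\varphi_0:=(\tfrac{3}{2}-\tfrac{1}{2}p_0)\min\{1,\,p_0(3-\tfrac{3}{2}p_0)\}$ as a lower bound. Now split on $p_0$: if $p_0\ge 1-1/\sqrt{3}$ then $p_0(3-\tfrac{3}{2}p_0)\ge 1$ and $\varphi_0\ge\tfrac{3}{2}-\tfrac{1}{2}p_0>1+\delta_1/2$ using $p_0<1-\delta_1$; if $1/3<p_0<1-1/\sqrt{3}$ then $\varphi_0\ge p_0(\tfrac{3}{2}-\tfrac{1}{2}p_0)(3-\tfrac{3}{2}p_0)=\tfrac{3}{4}p_0(3-p_0)(2-p_0)$, which is increasing on that interval and hence at least its value $10/9$ at $p_0=1/3$; in particular $10/9>11/10$.

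It remains to account for $q_0>0$. Prey in $S_0$ perturb the computation in two ways: they shift the probability masses of the $S_2$-configurations by $O(q_0)$, and an $S_0$-prey can defeat an $R_0$-predator's pair, pulling the selected predator out of $R_0$ or the selected prey out of $S_1$. Bounding these effects crudely gives $\Psel{R_0}/\Punif{R_0}\ge\tfrac{3}{2}-\tfrac{1}{2}p_0-2q_0$ and $\Psel{S_1}/\Punif{S_1}\ge q+(1-q)p_0(3-\tfrac{3}{2}p_0)-\tfrac{3}{2}q_0$; since both lower-bound expressions stay below $3/2$, expanding the product costs at most $5q_0$, so $\varphi>\varphi_0-5q_0$. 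Feeding the two branches through: in the branch $p_0\ge 1-1/\sqrt{3}$, $\varphi>1+\delta_1/2-5q_0\ge 1+(\delta_1/2-8q_0)\ge 1+\delta_1'$; in the branch $1/3<p_0<1-1/\sqrt{3}$, $\varphi>10/9-5q_0>11/10-5q_0\ge 1+(1/10-12q_0)\ge 1+\delta_1'$, using $\delta_1'\le\delta_1/2-8q_0$ and $\delta_1'\le 1/10-12q_0$. This proves $\varphi>1+\delta_1'$.

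The main obstacle is the $q_0$-bookkeeping in the last step: one has to identify precisely which dominance outcomes flip when a prey sits in $S_0$, and then verify that the slack available at $q_0=0$ --- the gap $10/9-11/10$ in one branch and the gap between $\tfrac{3}{2}-\tfrac{1}{2}p_0$ and $1+\delta_1/2$ in the other --- is wide enough to absorb the crude $O(q_0)$ losses with the stated constants $8$ and $12$. A minor additional nuisance is the boundary behaviour when $\beta n$ or $\alpha n$ is an integer (so $\|x\|=\beta n$ or $\|y\|=\alpha n$ can occur); such ties only ever help the inequalities and can be folded into the definitions of the regions.
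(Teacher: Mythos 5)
Your proposal follows essentially the same route as the paper: the two ratio lower bounds you assert at $q_0=0$, namely $\Psel{R_0}/\Punif{R_0}\geq \tfrac{3}{2}-\tfrac{1}{2}p_0$ and $\Psel{S_1}/\Punif{S_1}\geq q+(1-q)\tfrac{3}{2}p_0(2-p_0)$, are exactly the specialisations of the paper's auxiliary Lemmas~\ref{lemma:a1-growth} and~\ref{lemma:b2-growth}, and your subsequent linear-in-$q$ minimisation, the case split on $p_0$ at $\tfrac{3}{2}p_0(2-p_0)\gtrless 1$, the value $\tfrac{3}{4}\cdot\tfrac{40}{27}=\tfrac{10}{9}>\tfrac{11}{10}$, and the absorption of $O(q_0)$ losses into the constants $8$ and $12$ all match the paper's argument. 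The ``$q_0$-bookkeeping'' you flag as the main obstacle is precisely what those two lemmas carry out (with coefficients $-\tfrac{3}{2}q_0$ and $-4q_0$ respectively, which your cruder constants comfortably cover), so the plan is sound and correct.
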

\begin{proof}
  By Lemma~\ref{lemma:a1-growth} and Lemma~\ref{lemma:b2-growth}
  \begin{align*}
    \varphi
    & > \left(\frac{3}{2}(2-p_0)p_0(1-q)+q-4q_0\right)
\times\frac{1}{2}\left((1-q_0)(3+q_0)-p_0(1-q_0(2+q_0))\right)\\
    & > \frac{1}{4}\left(3(2-p_0)p_0(1-q)+2q-8q_0\right)
\times \left(3-q_0(2+q_0)-p_0+p_0q_0(2+q_0\right)\\
    & > \frac{1}{4}\left(3(2-p_0)p_0+q(2-3(2-p_0)p_0)-8q_0\right)
\times\left(3-p_0-4q_0\right)
  \end{align*}
  
  Considering the variable $q$ independently, we distinguish between
  two cases.

  Case 1: $2<3(2-p_0)p_0$. In this case, the expression is
  minimised for $q=1$, giving
  \begin{align*}
    \varphi
    & > \frac{1}{4}\left(2-8q_0\right)\left(3-p_0-4q_0\right)\\
    & > \frac{1}{4}\left(2-8q_0\right)\left(2+\delta_1-4q_0\right)\\
    & > \frac{1}{4}\left(2(2+\delta_1)-8q_0-(2+\delta_1)8q_0\right)\\
    & > 1+\delta_1/2-8q_0.
  \end{align*}

  Case 2: $2\geq3(2-p_0)p_0$. In this case, the expression is
  minimised for $q=0$, giving
  \begin{align*}
    \varphi
    & > \frac{1}{4}\left(3(2-p_0)p_0-8q_0\right)\left(3-p_0-4q_0\right)\\
    & >
      \frac{3}{4}(2-p_0)p_0(3-p_0)-\frac{q_0}{4}\left(4\cdot 3(2-p_0)p_0+8(3-p_0)\right)\\
    & > \frac{3}{4}(2-p_0)p_0(3-p_0)-12q_0
  \end{align*}
  Note that the function $f(x):=(2-x)(3-x)x$ has derivative $f'(x)<0$
  for $(5-\sqrt{7})/2<x<1$ and $f'(x)>0$ if $1/3<x<(5-\sqrt{7})/2$.
  Hence, to determine the minimum of the expression, it suffices to
  evaluate $f$ at the extremal values $x=1/3$ and $x=1$, where
  $f(1/3)=40/27$ and $f(1)=2$. Hence, in case 2, we lower
  bound $\varphi$ by 
  $\varphi  > \frac{3}{4}\cdot \frac{40}{27}-12q_0 = \frac{10}{9}-12q_0.
  $\end{proof}

\begin{lemma}\label{lemma:a1b2-growth2}
  For any $\rho\in(0,1)$, if $p_0q<1-\rho$,
  $p_0\geq 1-\rho/10$ and $q_0<\rho/90$ then 
  \begin{align*}
    \frac{\Psel{R_0}}{\Punif{R_0}}
    \frac{\Psel{S_1}}{\Punif{S_1}}
    > 1+\frac{\rho}{300}(40-\rho(17-\rho)).
  \end{align*}
\end{lemma}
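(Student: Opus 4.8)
The plan is to mirror the proof of Lemma~\ref{lemma:a1b2-growth1} while exploiting the extra leverage of the hypothesis $p_0\ge 1-\rho/10$, which pins $p_0$ close to $1$. As in that proof, I would first invoke Lemma~\ref{lemma:a1-growth} and Lemma~\ref{lemma:b2-growth} to bound
\begin{align*}
  \varphi > \left(\tfrac{3}{2}(2-p_0)p_0(1-q)+q-4q_0\right)\cdot \tfrac{1}{2}\left((1-q_0)(3+q_0)-p_0\left(1-q_0(2+q_0)\right)\right),
\end{align*}
a product of a lower bound $F_1:=\tfrac{3}{2}(2-p_0)p_0(1-q)+q-4q_0$ on $\Psel{R_0}/\Punif{R_0}$ and a lower bound $F_2$ (the second factor) on $\Psel{S_1}/\Punif{S_1}$. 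Unlike in Lemma~\ref{lemma:a1b2-growth1}, no case distinction on $q$ is needed here: since $p_0\ge 1-\rho/10>9/10$, we have $(2-p_0)p_0>0.99>2/3$, so the coefficient $1-\tfrac{3}{2}(2-p_0)p_0$ of $q$ in $F_1$ is negative and $F_1$ is strictly decreasing in $q$ throughout its range.

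For the second factor, writing $u:=1-p_0\in[0,\rho/10]$, a short computation gives $(1-q_0)(3+q_0)-p_0(1-q_0(2+q_0))=2+u\bigl(1-q_0(2+q_0)\bigr)$; since $q_0<\rho/90<1/90$ makes $q_0(2+q_0)<1$ and $u\ge 0$, this yields $F_2\ge 1$, so it suffices to lower bound $F_1$. For $F_1$, I would use the hypothesis $p_0q<1-\rho$ to bound $q<(1-\rho)/p_0$ (which is $<1$ since $p_0>1-\rho$) and substitute this value — the worst case, as $F_1$ decreases in $q$. The factor $(2-p_0)p_0$ multiplying $1-q=1-(1-\rho)/p_0=(\rho-u)/p_0$ produces $(2-p_0)p_0\cdot(\rho-u)/p_0=(1+u)(\rho-u)$, so the $p_0$'s cancel and
\begin{align*}
  F_1 > \tfrac{3}{2}(1+u)(\rho-u)+\tfrac{1-\rho}{1-u}-4q_0 \ge \tfrac{3}{2}(1+u)(\rho-u)+(1-\rho)-4q_0.
\end{align*}

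Finally I would minimise the right-hand side over $u\in[0,\rho/10]$ and $q_0\in[0,\rho/90)$. The expression $\tfrac{3}{2}(1+u)(\rho-u)+(1-\rho)$ has $u$-derivative $\tfrac{3}{2}(\rho-1-2u)<0$, hence is minimised at $u=\rho/10$, where it expands to $1+\tfrac{7\rho}{20}+\tfrac{27\rho^2}{200}$; combining this with $4q_0<4\rho/90=2\rho/45$ and $\tfrac{7}{20}-\tfrac{2}{45}=\tfrac{11}{36}$ gives $F_1>1+\tfrac{11\rho}{36}+\tfrac{27\rho^2}{200}$, which exceeds the target $1+\tfrac{\rho}{300}\bigl(40-\rho(17-\rho)\bigr)$ for every $\rho\in(0,1)$ (since $\tfrac{11}{36}>\tfrac{2}{15}$ and the remaining $\rho^2$ and $\rho^3$ discrepancies are dominated). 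Multiplying by $F_2\ge 1$ concludes. The main obstacle is arithmetic rather than conceptual: the careful bookkeeping of the $\rho^2$ and $\rho^3$ correction terms needed to confirm the claimed constant, together with the routine verification that $F_1$ is decreasing in $q$ across the entire hypothesised range of $p_0$ so that $q=(1-\rho)/p_0$ is genuinely the worst case.
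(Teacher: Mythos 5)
Your proposal is correct, but it takes a genuinely different route from the paper. The paper does not reuse Lemmas~\ref{lemma:a1-growth} and~\ref{lemma:b2-growth} here; instead it conditions on the high-probability event $x_1\in R_0\wedge x_2\in R_0$ and computes directly (via Lemma~\ref{lemma:half-prob}) that $\Psel{S_1\mid x_1\in R_0\wedge x_2\in R_0}\geq \tfrac{q}{2}(3-q-3q_0)$, whence $\Psel{S_1}>\tfrac{p_0^2 q}{2}(3-q-3q_0)>p_0q(1+\rho/3)$ using the three hypotheses; combined with $\Psel{R_0}\geq p_0^2$ and $p_0\geq 1-\rho/10$ this yields exactly $(1-\rho/10)^2(1+\rho/3)=1+\tfrac{\rho}{300}(40-\rho(17-\rho))$, which is where the odd-looking constant comes from. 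Your route instead recycles the product bound from the proof of Lemma~\ref{lemma:a1b2-growth1}, observes that the $R_0$-factor equals $1+\tfrac{1-p_0}{2}\bigl(1-q_0(2+q_0)\bigr)\geq 1$, and minimises the $S_1$-factor over the constraint region; the monotonicity claims and the arithmetic check out (the coefficient of $q$ is $1-\tfrac32(2-p_0)p_0<0$ since $(2-p_0)p_0\geq 99/100$ for $p_0\geq 9/10$, and your final bound $1+\tfrac{11\rho}{36}+\tfrac{27\rho^2}{200}$ does dominate the target, since $\tfrac{11}{36}-\tfrac{2}{15}=\tfrac{31}{180}>\tfrac{1}{300}$ absorbs the cubic term and the quadratic terms have favourable signs). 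What each approach buys: yours is more economical in that it leans on already-proved appendix lemmas and actually establishes a strictly stronger bound; the paper's bespoke conditioning argument is what produces the precise constant stated in the lemma and is self-contained in its use of $p_0$ being near $1$. One small point of care in your version, which you implicitly handle: multiplying the two lower bounds is only legitimate because both bounds are shown to be positive (indeed $\geq 1$), so it is worth stating that explicitly.
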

\begin{proof}
  Note first that the assumptions imply
  \begin{align}
    3p_0q_0 < \rho/30.\label{eq:3p0q0-ineq}
  \end{align}
  When $p_0$ is sufficiently large, it suffices to only consider the
  cases where both $x_1$ and $x_2$ are selected in $R_0$.  More
  precisely, conditional on the event $x_1\in R_0\wedge x_2\in R_0$,
  the probability of selecting an element in $S_1$ is
  \begin{align*}
    \Psel{S_1\mid x_1\in R_0\wedge x_2\in R_0}
    & \geq \prob{y_1\in S_1\wedge y_2\in S_1}\\
    &\quad + \prob{y_1\in S_1\wedge y_2\in S_2}/2\\
    &\quad + \prob{y_1\in S_2\wedge y_2\in S_1}\\
    & =  q^2 + q(1-q-q_0)/2 + (1-q-q_0)q\\
    & = \frac{q}{2}(3-q-3q_0).
  \end{align*}
  Hence, the unconditional probability of selecting a pair in $S_1$ is
  \begin{align*}
    \Psel{S_1} & > \frac{p_0^2q}{2}\left(3-q-3q_0\right)\\
               & >
                 \frac{p_0q}{2}\left(3(1-\rho/10)-(1-\rho)-3p_0q_0\right)\\
    \intertext{using (\ref{eq:3p0q0-ineq})}
               & > \frac{p_0q}{2}\left(2+\rho-\rho(3/10)-\rho/30\right)\\
               & = p_0q\left(1+\rho/3\right).
  \end{align*}
  Using that $\Punif{S_1}=q,$ and $\Psel{R_0}\geq p_0^2$,
  we get
  \begin{align*}
    \frac{\Psel{R_0}}{\Punif{R_0}}
    \frac{\Psel{S_1}}{\Punif{S_1}}
    & \geq \frac{p_0^2}{p_0}\frac{\Psel{S_1}}{\Punif{S_1}}\\
    & > (1-\rho/10)^2\left(1+\rho/3\right)\\
    & = 1+\frac{\rho}{300}(40-\rho(17-\rho)).
  \end{align*}     
\end{proof}

\begin{lemma}\label{lemma:a1-growth}
  \begin{align*}
    \varphi := \frac{\Psel{R_0}}{\Punif{R_0}} \geq \frac{1}{2}\left((3+q_0)(1-q_0)-p_0(1-q_0(2+q_0))\right)
  \end{align*}
\end{lemma}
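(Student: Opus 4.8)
The plan is to unfold the pairwise-dominance selection of Algorithm~\ref{alg:pdcoea} explicitly. Selection draws $(x_1,y_1),(x_2,y_2)\sim\unif(P\times Q)$ independently; since $\unif(P\times Q)=\unif(P)\times\unif(Q)$, the four events $\{x_1\in R_0\}$, $\{x_2\in R_0\}$, $\{y_1\in S_0\}$, $\{y_2\in S_0\}$ are mutually independent, with probabilities $p_0,p_0,q_0,q_0$. Conditioning on how many of $x_1,x_2$ lie in $R_0$: if both do (probability $p_0^2$) the selected predator is in $R_0$ with certainty; if neither does (probability $(1-p_0)^2$) it is not; in each of the two remaining cases (probability $p_0(1-p_0)$ each) the selected predator lies in $R_0$ exactly when the pair carrying the $R_0$-predator wins the dominance comparison. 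Hence $\Psel{R_0}=p_0^2+p_0(1-p_0)(w_B+w_C)$, where $w_B$ is the probability that pair~$1$ wins given $\|x_1\|<\beta n\leq\|x_2\|$, and $w_C$ the probability that pair~$2$ wins given $\|x_2\|<\beta n\leq\|x_1\|$.

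I would then evaluate $w_B$ and $w_C$ from the characterisation of $\succeq_g$ in Lemma~\ref{lemma:dom-charact}, using that in each of these regimes the comparison depends only on $y_1,y_2$ — the conditioning on the $x_i$ merely fixes the relevant signs — and that the $y_i$ are independent of that conditioning. For $w_B$: here $\|x_1\|-\beta n<0$ and $\|x_1\|<\|x_2\|$, so pair~$1$ dominates iff $\|y_2\|\leq\|y_1\|$ and $\|y_1\|\leq\alpha n$; restricting to $y_1,y_2\in S_1\cup S_2$ (probability $(1-q_0)^2$, whereupon $\|y_1\|<\alpha n$ automatically) and using that two i.i.d.\ values satisfy $\prob{\|y_1\|<\|y_2\|}\leq 1/2$, hence $\prob{\|y_2\|\leq\|y_1\|}\geq 1/2$ (the symmetry argument behind Lemma~\ref{lemma:half-prob}), gives $w_B\geq(1-q_0)^2/2$. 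For $w_C$: assuming $\beta n$ is not an integer, so that $\|x_1\|\geq\beta n$ forces $\|x_1\|>\beta n$, we have $\|x_1\|-\beta n>0$ and $\|x_1\|>\|x_2\|$, so pair~$1$ dominates iff $\|y_2\|\geq\|y_1\|$ and $\|y_1\|\geq\alpha n$. Therefore pair~$2$ wins whenever $y_1\in S_1\cup S_2$, and also whenever $y_1\in S_0$ and $y_2\in S_1\cup S_2$ (then $\|y_2\|<\alpha n\leq\|y_1\|$, so the first dominance condition fails). These two events are disjoint with probabilities $1-q_0$ and $q_0(1-q_0)$, so $w_C\geq(1-q_0)+q_0(1-q_0)=1-q_0^2$.

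Finally I would assemble the estimates: $w_B+w_C\geq(1-q_0)^2/2+(1-q_0^2)=(3-2q_0-q_0^2)/2$, hence $\Psel{R_0}\geq p_0^2+p_0(1-p_0)(3-2q_0-q_0^2)/2$; dividing by $\Punif{R_0}=p_0$ and regrouping, using $3-2q_0-q_0^2=(3+q_0)(1-q_0)$ and $1-2q_0-q_0^2=1-q_0(2+q_0)$, yields $\varphi\geq\tfrac12\bigl((3+q_0)(1-q_0)-p_0(1-q_0(2+q_0))\bigr)$. The step I expect to be the crux is the bound on $w_C$: the obvious estimate $w_C\geq 1-q_0$ produces only the leading term $(1-q_0)(3-q_0)=3-4q_0+q_0^2$, which for $q_0>0$ is strictly below the required $3-2q_0-q_0^2$ — the asserted inequality is in fact tight — so one must additionally exploit the disjoint sub-event $\{y_1\in S_0,\ y_2\in S_1\cup S_2\}$ to recover the extra $q_0(1-q_0)$. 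The other point requiring care is reading off the correct orientation of the two inequalities in Lemma~\ref{lemma:dom-charact} in each mixed case, including the behaviour at the thresholds $\beta n$ and $\alpha n$ (hence the non-integrality remark needed for $w_C$; in the regime $q_0=0$ in which the lemma is ultimately applied, $w_C=1$ trivially and this subtlety disappears).
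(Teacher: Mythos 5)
Your proof is correct and follows essentially the same route as the paper's: the same three-way case split on which of $x_1,x_2$ lies in $R_0$, the same $(1-q_0)^2/2$ bound via the symmetry argument of Lemma~\ref{lemma:half-prob} for the mixed case where the $R_0$-predator sits in pair~1, and the same $1-q_0^2$ bound for the other mixed case (the paper writes it as $1-\prob{y_1\in S_0\wedge y_2\in S_0}$, which is exactly your disjoint-union computation in complementary form). Your explicit remark about the threshold $\|x_1\|=\beta n$ covers a boundary case the paper silently assumes away; otherwise the two arguments coincide.
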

\begin{proof}
  Using Lemma~\ref{lemma:half-prob}, we get
  \begin{align*}    
    \Psel{R_0} 
    & = \prob{x_1\in R_0\wedge x_2\in R_0} + \\
    &\quad + \prob{x_1\in R_0 \wedge x_2\not\in R_0}\\
    &\quad\quad\times\prob{(x_1,y_1)\succeq (x_2,y_2)\mid x_1\in R_0 \wedge x_2\not\in R_0}\\
    &\quad + \prob{x_1\not\in R_0 \wedge x_2\in R_0}\\
    &\quad\quad\times(1-\prob{(x_1,y_1)\succeq (x_2,y_2)\mid x_1\not\in R_0 \wedge x_2\in R_0})\\
    & \geq \prob{x_1\in R_0\wedge x_2\in R_0} + \\
    &\quad + \prob{(x_1,y_1)\in R_0\times S_1\cup S_2\wedge (x_1,y_1)\in R_1\cup R_2\times S_1\cup S_2}/2\\
    &\quad + \prob{x_1\not\in R_0 \wedge x_2\in R_0}(1-\prob{y_1\in S_0\wedge y_2\in S_0})\\
    &\quad \geq p_0^2+p_0(1-p_0)(1-q_0)^2/2+p_0(1-p_0)(1-q_0^2)
  \end{align*}
  Recalling that $\Punif{R_0}=p_0$, we get
  \begin{align*}
    \varphi & \geq p_0+(1-p_0)(1-q_0)^2/2+(1-p_0)(1-q_0^2)\\
            & = \frac{1}{2}\left((3+q_0)(1-q_0)-p_0(1-q_0(2+q_0))\right)
  \end{align*}
\end{proof}

\begin{lemma}\label{lemma:b2-growth}
  \begin{align*}
    \varphi := \frac{\Psel{S_1}}{\Punif{S_1}} >  \frac{3}{2}(2-p_0)p_0(1-q)+q-4q_0.
  \end{align*}
\end{lemma}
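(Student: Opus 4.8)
The plan is to evaluate $\Psel{S_1}$ by conditioning on which of the three regions $S_0,S_1,S_2$ contains each of the two prey sampled in lines~\ref{label:alg:pdcoea:selection1}--\ref{label:alg:pdcoea:selection2}. Recall that the new pair equals $(x_1,y_1)$ if $(x_1,y_1)\succeq_g(x_2,y_2)$ and equals $(x_2,y_2)$ otherwise, where $(x_1,y_1),(x_2,y_2)\sim\unif(P\times Q)$ are independent; in particular $x_1,y_1,x_2,y_2$ are mutually independent. Writing $q_2:=\Punif{S_2}=1-q-q_0$, the selected prey lies in $S_1$ precisely when both prey lie in $S_1$, or when exactly one prey lies in $S_1$ and the pair carrying it is selected; splitting the latter event by whether the other prey lies in $S_2$ or $S_0$, this gives
\begin{align*}
  \Psel{S_1}
  &= q^2
   + qq_2\bigl(\prob{(x_1,y_1)\succeq_g(x_2,y_2)\mid y_1\in S_1,\,y_2\in S_2}
              + \prob{(x_1,y_1)\not\succeq_g(x_2,y_2)\mid y_1\in S_2,\,y_2\in S_1}\bigr)\\
  &\quad
   + qq_0\bigl(\prob{(x_1,y_1)\succeq_g(x_2,y_2)\mid y_1\in S_1,\,y_2\in S_0}
              + \prob{(x_1,y_1)\not\succeq_g(x_2,y_2)\mid y_1\in S_0,\,y_2\in S_1}\bigr).
\end{align*}
The last bracket is non-negative and I would simply discard it; the cost of doing so is what the $-4q_0$ in the statement absorbs.

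The crux is to lower bound the first bracket by $\tfrac32 p_0(2-p_0)$. Conditioned on $y_1\in S_1,\ y_2\in S_2$ one has $\|y_2\|<\ell\le\|y_1\|<\alpha n$, so rewriting the two inequalities of Lemma~\ref{lemma:dom-charact} as $(\|y_2\|-\|y_1\|)(\|x_1\|-\beta n)\ge0$ and $(\|x_1\|-\|x_2\|)(\|y_1\|-\alpha n)\ge0$ and using $\|y_2\|-\|y_1\|<0$, $\|y_1\|-\alpha n<0$, one sees that $(x_1,y_1)\succeq_g(x_2,y_2)$ holds if and only if $\|x_1\|\le\beta n$ and $\|x_1\|\le\|x_2\|$ -- an event depending only on $x_1,x_2$. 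Since $x_1\in R_0$ together with $\|x_1\|>\|x_2\|$ forces $x_2\in R_0$, symmetry of $\|x_1\|$ versus $\|x_2\|$ then yields $\prob{(x_1,y_1)\succeq_g(x_2,y_2)\mid y_1\in S_1,y_2\in S_2}\ge p_0-\tfrac12 p_0^2$. Conditioned instead on $y_1\in S_2,\ y_2\in S_1$ one has $\|y_1\|<\|y_2\|<\alpha n$, and the analogous computation (now with $\|y_2\|-\|y_1\|>0$) gives $(x_1,y_1)\succeq_g(x_2,y_2)$ iff $\|x_1\|\ge\beta n$ and $\|x_1\|\le\|x_2\|$, so $(x_2,y_2)$ is selected whenever $x_1\in R_0$ or $\|x_1\|>\|x_2\|$; separating the disjoint event $\{x_1\notin R_0,\,x_2\in R_0\}$, which forces $\|x_1\|>\|x_2\|$, gives $\prob{(x_1,y_1)\not\succeq_g(x_2,y_2)\mid y_1\in S_2,y_2\in S_1}\ge p_0+(1-p_0)p_0=p_0(2-p_0)$. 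Summing, the bracket is at least $(p_0-\tfrac12 p_0^2)+p_0(2-p_0)=\tfrac32 p_0(2-p_0)$.

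Combining, $\Psel{S_1}\ge q^2+\tfrac32 p_0(2-p_0)\,q(1-q-q_0)$, and since $p_0(2-p_0)\le1$ and $q_0\ge0$ this is at least $q\bigl(q+\tfrac32(2-p_0)p_0(1-q)-4q_0\bigr)$; dividing by $\Punif{S_1}=q$ gives the claimed bound on $\varphi$, the strictness coming from the slack in the estimates (the discarded $S_0$-terms, and the fact that $\prob{\|x_1\|>\|x_2\|\mid x_1,x_2\in R_0}<\tfrac12$ whenever $P$ contains a predator in $R_0$). I expect the main obstacle to be the middle paragraph: one must read off from Lemma~\ref{lemma:dom-charact} precisely which of the two pairs wins in each sub-case so as to get the asymmetric contributions $\tfrac12 p_0(2-p_0)$ and $p_0(2-p_0)$ -- the symmetric half-probability estimate of Lemma~\ref{lemma:half-prob}, applied to both cross-terms, would only give $p_0^2/2$ apiece, which is too weak. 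Ties and the boundary cases $\|x_1\|=\beta n$ and $\|x_1\|=\|x_2\|$ need a little care, but only ever help.
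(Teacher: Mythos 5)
Your proof is correct, and while it starts from the same decomposition as the paper --- splitting $\Psel{S_1}$ according to which of $S_0,S_1,S_2$ contains each of $y_1,y_2$, and absorbing the $S_0$ cross-terms into the $-4q_0$ slack --- it handles the crucial $S_1$-versus-$S_2$ cross-terms by a genuinely different and cleaner route. The paper further enumerates sub-cases over the predator regions $R_0,R_1,R_2$, applying the $1/2$-bounds of Lemma~\ref{lemma:half-prob} to the symmetric sub-cases and probability $1$ to the winning ones, and then simplifies a long expression; you instead note that once the signs of $\|y_2\|-\|y_1\|$ and $\|y_1\|-\alpha n$ are fixed by the conditioning, Lemma~\ref{lemma:dom-charact} reduces the dominance event to an event in $(x_1,x_2)$ alone, namely $\{\|x_1\|\le\beta n\}\cap\{\|x_1\|\le\|x_2\|\}$ (resp.\ its analogue), whose probability follows from one symmetry argument. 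This reproduces exactly the two contributions $\tfrac12 p_0(2-p_0)$ and $p_0(2-p_0)$ that the paper's computation yields (the paper keeps an additional non-negative term $p(1-q-q_0)(1-p-p_0)$ which is discarded in its last line anyway), and your remark that a blind application of Lemma~\ref{lemma:half-prob} to both cross-terms would only give $p_0^2/2$ apiece correctly identifies why the asymmetric accounting is needed to reach the factor $\tfrac32(2-p_0)p_0$. The one caveat is strictness: when $q_0=0$ your chain only gives $\ge$, and in degenerate populations (e.g.\ $q_0=0$ and $q=1$, or $p_0=0$ with all predators of equal Hamming weight) equality can genuinely occur --- but the paper's own proof has exactly the same defect, so this is an imprecision in the lemma's ``$>$'' rather than a gap in your argument.
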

\begin{proof}
  Using Lemma~\ref{lemma:half-prob}, we get
  \begin{align*}
    \Psel{S_1} 
    & = \prob{y_1\in S_1\wedge y_2\in S_1} + \\
    &\quad + \prob{y_1\in S_1 \wedge y_2\not\in S_1}\\
    &\quad\quad\times\prob{(x_1,y_1)\succeq (x_2,y_2)\mid y_1\in S_1 \wedge y_2\not\in S_1}\\
    &\quad + \prob{y_1\not\in S_1 \wedge y_2\in S_1}\\
    &\quad\quad\times(1-\prob{(x_1,y_1)\succeq (x_2,y_2)\mid y_1\not\in S_1 \wedge y_2\in S_1})\\
    & \geq \prob{y_1\in S_1\wedge y_2\in S_1} + \\
    & \quad + \prob{(x_1,y_1)\in R_0\times S_1 \wedge (x_2,y_2)\in R_0\times S_2}/2\\
    & \quad + \prob{(x_1,y_1)\in R_0\times S_1 \wedge (x_2,y_2)\in R_1\cup R_2\times S_2}\\
    & \quad + \prob{(x_1,y_1)\in R_1\times S_1 \wedge (x_2,y_2)\in R_1\times S_0}/2\\
    & \quad + \prob{(x_1,y_1)\in R_1\times S_1 \wedge (x_2,y_2)\in R_2\times S_0}\\
    & \quad + \prob{(x_1,y_1)\in R_2\times S_1 \wedge (x_2,y_2)\in R_2\times S_0}/2\\
    & \quad + \prob{y_2\in S_1}\\
    & \quad\quad \times(1-\prob{y_1\in S_1}\\
    & \quad\quad\quad-\prob{(x_1,y_1)\in R_0\times S_0 \wedge x_2\in R_0}\\
    & \quad\quad\quad-\prob{(x_1,y_1)\in R_1\times S_2 \wedge x_2\in R_1\cup R_2}\\
    & \quad\quad\quad-\prob{(x_1,y_1)\in R_2\times S_2 \wedge x_2\in R_2})\\
    & \geq q^2+qp_0(1-q-q_0)(p_0/2+1-p_0)+\\
    & \quad + qpq_0(p/2+1-p-p_0)+q(1-p-p_0)^2q_0/2\\
    & \quad + q(1-q-p_0^2q_0\\
    & \quad\quad -(1-q-q_0)(p(1-p_0)+(1-p-p_0)^2)
  \end{align*}
  Recalling that $\Punif{S_1}=q$ and noting that $(4-p_0)p_0<4$, it follows that
  \begin{align*}
    \varphi & > \frac{3}{2}(2-p_0)p_0(1-q)+q\\
            & \quad + q_0\left(\frac{3}{2}-(4-p_0)p_0\right)+p(1-q-q_0)(1-p-p_0)\\
            & > \frac{3}{2}(2-p_0)p_0(1-q)+q-4q_0.
  \end{align*}
\end{proof}

\begin{lemma}\label{lemma:a1a2-growth}
  If there exists $\rho>0$ such that
  \begin{description}
  \item[1)] $q_0\leq \sqrt{2(1-\rho)}-1$
  \end{description}
  then
  \begin{align*}
    \varphi:=\frac{\Psel{R_0\cup R_1}}{\Punif{R_0\cup R_1}}>1+\rho(1-p-p_0).
  \end{align*}
\end{lemma}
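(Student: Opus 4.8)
The plan is to follow the template of the proof of Lemma~\ref{lemma:a1-growth}, now treating the ``light'' predators $R_0\cup R_1=\{x:\|x\|<n-k\}$ in the role that $R_0$ played there and the ``heavy'' predators $R_2=\{x:\|x\|\ge n-k\}$ in the role of $R_1\cup R_2$. Write $a:=\Punif{R_0\cup R_1}=p_0+p$, so $\Punif{R_2}=1-a$. In a selection step the two sampled pairs $(x_1,y_1),(x_2,y_2)\sim\unif(P\times Q)$ have $x_1,x_2,y_1,y_2$ mutually independent, and the selected predator lies in $R_0\cup R_1$ with probability $1$ if both $x_i$ do, with probability $0$ if neither does, and otherwise according to the domination test; conditioning on which of $x_1,x_2$ lies in $R_0\cup R_1$ (each mixed event having probability $a(1-a)$) gives
\begin{align*}
\Psel{R_0\cup R_1}
&= a^2 + a(1-a)\,\prob{(x_1,y_1)\succeq(x_2,y_2)\mid x_1\in R_0\cup R_1,\ x_2\in R_2}\\
&\quad{}+ a(1-a)\Bigl(1-\prob{(x_1,y_1)\succeq(x_2,y_2)\mid x_1\in R_2,\ x_2\in R_0\cup R_1}\Bigr).
\end{align*}

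First I would lower-bound the two conditional domination probabilities. On the event $\{x_1\in R_0\cup R_1,\ x_2\in R_2\}$ we have $\|x_1\|<n-k\le\|x_2\|$, i.e.\ $x_1\le x_2$; intersecting with the independent event $\{y_1,y_2\in S_1\cup S_2\}$ (probability $(1-q_0)^2$) places us under the hypotheses of statement~4 of Lemma~\ref{lemma:half-prob}, so $\prob{(x_1,y_1)\succeq(x_2,y_2)\mid x_1\in R_0\cup R_1,\ x_2\in R_2}\ge\tfrac12(1-q_0)^2$. For the other term, on $\{x_1\in R_2,\ x_2\in R_0\cup R_1\}$ we have $\|x_1\|>\|x_2\|$ and (when $R_1\neq\emptyset$) $\|x_1\|>\beta n$, so by Lemma~\ref{lemma:dom-charact} the relation $(x_1,y_1)\succeq(x_2,y_2)$ forces $\|y_1\|\ge\alpha n$ (from the second inequality there, as $\|x_1\|>\|x_2\|$) and then $\|y_2\|\ge\|y_1\|\ge\alpha n$ (from the first, as $\|x_1\|>\beta n$); hence it implies $y_1,y_2\in S_0$, an event of probability $q_0^2$ independent of the $x_i$, so this conditional probability is at most $q_0^2$. (If $R_1=\emptyset$, i.e.\ $k=(1-\beta)n$, then $R_0\cup R_1=R_0$, $p=0$, and the statement is exactly Lemma~\ref{lemma:a1-growth} together with the algebra below and $p_0\le1$.)

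Substituting these bounds into the display and dividing by $a=\Punif{R_0\cup R_1}$,
\begin{align*}
\varphi\;\ge\;a+(1-a)\left(\tfrac12(1-q_0)^2+1-q_0^2\right)
=a+(1-a)\frac{(1-q_0)(3+q_0)}{2}
=1+(1-a)\cdot\frac{1-2q_0-q_0^2}{2}.
\end{align*}
It then remains to note that hypothesis~(1), $q_0\le\sqrt{2(1-\rho)}-1$, is equivalent to $(1+q_0)^2\le2(1-\rho)$, i.e.\ $1-2q_0-q_0^2\ge2\rho$; since $1-a=1-p-p_0\ge0$ this yields $\varphi\ge1+\rho(1-p-p_0)$, as required. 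The only delicate step is the second mixed case --- reading off from Lemma~\ref{lemma:dom-charact} that a dominating heavy predator pushes \emph{both} prey into $S_0$ --- together with the harmless boundary situation $\|x_1\|=\beta n$, which only occurs when $R_1$ is empty and is dispatched via Lemma~\ref{lemma:a1-growth}; the rest is the elementary algebra above. (The strict inequality in the statement holds whenever $\Punif{R_2}>0$ and the bound on $q_0$ is strict, and in any case only $\varphi\ge1$ is used downstream.)
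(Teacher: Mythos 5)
Your proof is correct and follows essentially the same route as the paper's: the same three-way case split on which of $x_1,x_2$ lies in $R_0\cup R_1$, the same use of Lemma~\ref{lemma:half-prob} (statement 4) giving the $(1-q_0)^2/2$ term, the same $q_0^2$ bound for the unfavourable mixed case, and the same concluding algebra $\varphi\geq 1+(1-p-p_0)(1-2q_0-q_0^2)/2\geq 1+\rho(1-p-p_0)$. If anything you are more explicit than the paper --- you derive the ``both prey in $S_0$'' implication directly from Lemma~\ref{lemma:dom-charact} and handle the boundary case $R_1=\emptyset$ --- and your intermediate algebra is cleaner than the paper's (whose penultimate lines contain minor typographical slips but reach the same final bound).
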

\begin{proof}
  Using Lemma~\ref{lemma:half-prob}, we get
  \begin{align*}    
    \Psel{R_0\cup R_1} 
    & = \prob{x_1\in R_0\cup R_1\wedge x_2\in R_0\cup R_1} + \\
    &\quad + \prob{x_1\in R_0\cup R_1 \wedge x_2\not\in R_0\cup R_1}\\
    &\quad\quad\times\prob{(x_1,y_1)\succeq (x_2,y_2)\mid x_1\in R_0\cup R_1 \wedge x_2\not\in R_0\cup R_1}\\
    &\quad + \prob{x_1\not\in R_0\cup R_1 \wedge x_2\in R_0\cup R_1}\\
    &\quad\quad\times(1-\prob{(x_1,y_1)\succeq (x_2,y_2)\mid x_1\not\in R_0\cup R_1 \wedge x_2\in R_0\cup R_1})\\  
    & \geq \prob{x_1\in R_0\cup R_1\wedge x_2\in R_0\cup R_1} + \\
    & \quad + \prob{(x_1,y_1)\in R_0\cup R_1\times S_0\cup S_1 \wedge
      (x_2,y_2)\in R_2\times S_0\cup S_1}/2\\
    & \quad + \prob{x_2\in R_0\cup R_1}\\
    & \quad\quad\times(1-\prob{y_2\in S_0\wedge(x_1,y_1)\in R_2\times
      S_0}-\prob{x_1\in R_0\cup R_1})\\
    & \geq (p_0+p)^2+(p_0+p)(1-q_0)^2(1-p-p_0)/2+\\
    & \quad + (p_0+p)(1-(p_0+p)-q_0^2(1-p-p_0))
  \end{align*}
  Recalling that $\Punif{R_0\cup R_1}=p_0+p$,  and the assumption of
  the lemma, it follows that
  \begin{align*}
    \varphi & \geq 1+(1-p-p_0)((1-q_0)^2/2-q_0)\\
            & = 1+(1-p-p_0)(1/2-q_0(1-q_0/2))\\
            & \geq 1+(1-p-p_0)(1/2-(1-2\rho)/2)\\
            & =    1 + \rho(1-p-p_0).
  \end{align*}
\end{proof}

\begin{lemma}\label{lemma:b3-growth}
  If there exist $\rho>0$ such that
  \begin{description}
  \item[1)] $p_0 q = 0$.
  \item[2)] $p_0<\sqrt{2(1-\rho)}-1$
  \end{description}
  then 
  \begin{align*}
    \varphi:=\frac{\Psel{S_2}}{\Punif{S_2}} \geq 1+\rho(q_0+q).
  \end{align*}
\end{lemma}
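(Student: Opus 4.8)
\textit{Proof plan.} The plan is to transcribe the proof of Lemma~\ref{lemma:a1a2-growth} to the prey side of \bilinear, using the symmetry between the \X- and \Y-coordinates, with the extra hypothesis $p_0q=0$ invoked exactly once to handle the single asymmetric case. I would begin by recording that $S_0,S_1,S_2$ partition \Y, so $\Punif{S_2}=1-q_0-q$, and that the two coordinates of a uniform draw from $P\times Q$ are independent, hence the four samples $x_1,y_1,x_2,y_2$ drawn in one selection step of Algorithm~\ref{alg:pdcoea} are mutually independent. Then I would decompose $\Psel{S_2}$ into three contributions according to which of $y_1,y_2$ falls in $S_2$: both do, contributing $(1-q_0-q)^2$; the event $y_1\in S_2$, $y_2\notin S_2$ together with $(x_1,y_1)\succeq_g(x_2,y_2)$ (so the first, desirable pair survives); and the event $y_1\notin S_2$, $y_2\in S_2$ together with $(x_1,y_1)\not\succeq_g(x_2,y_2)$ (so the second, desirable pair survives).

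For the middle contribution I would shrink the event further to $x_1,x_2\in R_1\cup R_2$. Then $\|x_1\|,\|x_2\|\ge\beta n$ while $\|y_1\|<\|y_2\|$ and $\|y_1\|<\alpha n$, so Lemma~\ref{lemma:dom-charact} makes $(x_1,y_1)\succeq_g(x_2,y_2)$ equivalent to $\|x_1\|\le\|x_2\|$, which has probability at least $1/2$ since $x_1,x_2$ are i.i.d.\ conditioned on being in $R_1\cup R_2$ (as in Lemma~\ref{lemma:half-prob}); by independence this contribution is at least $\tfrac12(1-p_0)^2(1-q_0-q)(q_0+q)$. For the third contribution I would use Lemma~\ref{lemma:dom-charact} to show that in the configuration $y_1\notin S_2$, $y_2\in S_2$ the relation $(x_1,y_1)\succeq_g(x_2,y_2)$ forces $\|x_1\|\le\beta n$; the sub-case $y_1\in S_1$ is then excluded by $p_0q=0$, while in the sub-case $y_1\in S_0$ the second clause of the dominance condition additionally forces $\|x_2\|\le\|x_1\|\le\beta n$, so the dominance probability conditioned on this configuration is at most $p_0^2$ and the third contribution is at least $(1-q_0-q)(q_0+q)(1-p_0^2)$.

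Adding the three bounds, pulling out $\Punif{S_2}=1-q_0-q$ and cancelling the linear $(q_0+q)$ terms gives
\[
\frac{\Psel{S_2}}{\Punif{S_2}}\;\ge\; 1+(q_0+q)\!\left(\tfrac12(1-p_0)^2-p_0^2\right),
\]
and since $\tfrac12(1-p_0)^2-p_0^2=\tfrac12(1-2p_0-p_0^2)$, the hypothesis $p_0<\sqrt{2(1-\rho)}-1$ (equivalently $2p_0+p_0^2<1-2\rho$, obtained by squaring) yields $\tfrac12(1-p_0)^2-p_0^2\ge\rho$, hence $\varphi\ge1+\rho(q_0+q)$.

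The step I expect to be the main obstacle is the bound $p_0^2$ in the third contribution: the easier bound $p_0$ is not good enough, because $\tfrac12(1-p_0)^2-p_0\ge\rho$ does not follow from the hypothesis, so one genuinely needs $p_0q=0$ to force \emph{both} $x_1$ and $x_2$ into $R_0$. This argument also has to be carried out carefully at the boundary norm $\|x\|=\beta n$, where the strict versus non-strict inequalities defining $R_0$ and $R_1(k)$ matter. Everything else is a routine mirror image of the \X-side argument of Lemma~\ref{lemma:a1a2-growth} with $P,Q$ and the regions $R_i,S_i$ interchanged.
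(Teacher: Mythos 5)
Your proposal is correct and follows essentially the same route as the paper's proof: the same three-way decomposition of $\Psel{S_2}$ according to which of $y_1,y_2$ lies in $S_2$, the same use of Lemma~\ref{lemma:dom-charact} to reduce dominance to comparisons of $\|x_1\|,\|x_2\|$, and the same final squaring of the hypothesis on $p_0$. Your bound of $p_0^2$ on the conditional dominance probability in the third case is marginally cruder than the paper's term-by-term subtraction (which keeps $p_0^2q_0$ rather than $p_0^2(q_0+q)$), but it suffices, and the boundary issue at $\|x\|=\beta n$ that you flag is present in the paper's own argument as well.
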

\begin{proof}
  Using Lemma~\ref{lemma:half-prob}, we get
  \begin{align*}
    \Psel{S_2}
    & = \prob{y_1\in S_2\wedge y_2\in S_2} + \\
    &\quad + \prob{y_1\in S_2 \wedge y_2\not\in S_2}\\
    &\quad\quad\times\prob{(x_1,y_1)\succeq (x_2,y_2)\mid y_1\in S_2 \wedge y_2\not\in S_2}\\
    &\quad + \prob{y_1\not\in S_2 \wedge y_2\in S_2}\\
    &\quad\quad\times(1-\prob{(x_1,y_1)\succeq (x_2,y_2)\mid y_1\not\in S_2 \wedge y_2\in S_2})\\
    & \geq \prob{y_1\in S_2\wedge y_2\in S_2} + \\
               &\quad + \prob{(x_1,y_1)\in R_1\cup R_2\times S_2 \wedge
                 (x_2,y_2)\in R_1\cup R_2\times S_0\cup S_1}/2\\
    &\quad + \prob{y_2\in S_2}\\
    &\quad\quad\times (1-\prob{(x_1,y_1)\in R_0\times S_1}\\
    &\quad\quad\quad-\prob{(x_1,y_1)\in R_0\times S_0\wedge y_2\in S_2}-\prob{y_1\in S_2})\\
    & = (1-q-q_0)^2 \\
    &\quad + (1-q-q_0)(1-p_0)^2(q_0+q)/2 \\
    &\quad + (1-q-q_0)(1-p_0q-p_0^2q_0-(1-q-q_0))
  \end{align*}
  From $\Punif{S_2}=1-q-q_0$ and the assumptions of the lemma, 
  \begin{align*}
    \varphi & \geq 1 + (1-p_0)^2(q_0+q)/2 -p_0q-p_0^2q_0\\
            & =    1 + (q_0+q)/2 - p_0q_0(1+p_0/2)\\
            & \geq 1 + (q_0+q)/2 - q_0(1/2-\rho)\\
            & \geq 1 + (q_0+q)\rho.
  \end{align*}
\end{proof}

\begin{lemma}[\cite{dang_runtime_2016}]\label{lemma:q-lower-bound}
  For $n\in\mathbb{N}$ and $x\geq 0$, we have $1-(1-x)^n\geq
  1-e^{-xn}\geq \frac{xn}{1+xn}$
\end{lemma}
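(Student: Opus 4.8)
The plan is to obtain both inequalities from a single elementary fact: $1+t\le e^{t}$ for every real $t$ (equivalently, substituting $t\mapsto -t$, the bound $1-t\le e^{-t}$). Everything else is bookkeeping, so this ``proof'' is essentially just a careful application of that fact twice.

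For the left inequality $1-(1-x)^n\ge 1-e^{-xn}$, it is enough to establish $(1-x)^n\le e^{-xn}$, since subtracting both sides from $1$ reverses the order. I would restrict to $x\in[0,1]$ — the only range in which the statement is used in the paper, where $x$ is always a probability, e.g.\ in the proof of Lemma~\ref{lemma:population-upgrade} — so that $1-x\ge 0$; then $0\le 1-x\le e^{-x}$ by the elementary bound, and since $t\mapsto t^{n}$ is non-decreasing on $[0,\infty)$, raising both non-negative sides to the $n$-th power gives $(1-x)^n\le(e^{-x})^n=e^{-xn}$, as required.

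For the right inequality, I would set $u:=xn\ge 0$ and note that $1+u>0$, so multiplying through by $1+u$ and rearranging shows that $1-e^{-u}\ge\frac{u}{1+u}$ is equivalent to $\frac{1}{1+u}\ge e^{-u}$, i.e.\ to $1+u\le e^{u}$ — again the elementary fact, now with $t=u\ge 0$. Because $1+u>0$ throughout, no sign reversal occurs in this chain of equivalences, and the bound follows for every $x\ge 0$.

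I do not expect any genuine obstacle here: the lemma is a two-line consequence of $e^{t}\ge 1+t$. The one point worth stating carefully is the domain of the first inequality — for $x>1$ it can fail (for even $n$, $(1-x)^n=(x-1)^n$ can be large, making $1-(1-x)^n$ very negative while $1-e^{-xn}$ stays positive) — so the left inequality should be read for $x\in[0,1]$, which is precisely the regime in which the paper invokes it, whereas the right inequality needs no restriction beyond $x\ge 0$.
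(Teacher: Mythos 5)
Your proof is correct. The paper does not actually prove this lemma --- it is imported from \cite{dang_runtime_2016} and stated without proof in the appendix --- so there is nothing to compare against; your two applications of $1+t\le e^{t}$ are exactly the standard argument. Your remark about the domain is a genuine and worthwhile observation: as literally stated for all $x\ge 0$, the left inequality $1-(1-x)^n\ge 1-e^{-xn}$ can fail (e.g.\ $x=2$, $n=2$ gives $0\ge 1-e^{-4}$, which is false), so the lemma implicitly assumes $x\in[0,1]$; this is harmless here since the paper only invokes it with $x$ a probability (in the proof of Lemma~\ref{lemma:population-upgrade}), but it is right to flag it.
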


\begin{theorem}[Additive drift theorem \cite{levelbasedanalysis2018}]
\label{thm:pol-drift}
\label{thm:pol-drift-lower}
Let $(Z_t)_{t\in\mathbb{N}}$ be a discrete-time stochastic process in
$[0,\infty)$ adapted to any filtration $(\filt{t})_{t\in\mathbb{N}}$.
Define $T_a := \min\{t\in\mathbb{N} \mid Z_t \leq a\}$ for any $a\geq 0$.
For some $\varepsilon>0$ and constant $0<b<\infty$, define the conditions
\begin{enumerate}
  \item[1.1)] $\expect{Z_{t+1} - Z_{t} + \varepsilon \;; t<T_{a} \mid
      \filt{t}} \leq 0$ for all $t\in\mathbb{N}$,
  \item[1.2)] $\expect{Z_{t+1} - Z_{t} + \varepsilon \;; t<T_{a} \mid
      \filt{t}} \geq 0$ for all $t\in\mathbb{N}$,
  \item[2)] $Z_t<b$ for all $t\in\mathbb{N}$, and
  \item[3)] $\expect{T_a } < \infty$.
\end{enumerate}
If 1.1), 2), and 3) hold, then $\expect{T_a \mid \filt{0}} \leq Z_0 /
\varepsilon$. \\
If 1.2), 2), and 3) hold, then $\expect{T_a \mid \filt{0}} \geq (Z_0 - a)
/ \varepsilon$.
\end{theorem}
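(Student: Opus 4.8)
The plan is to prove this classical additive drift statement by the standard stopped-process argument. For each $t\in\mathbb{N}$ I would introduce the auxiliary process $\tilde{Z}_t := Z_{t\wedge T_a} + \varepsilon\,(t\wedge T_a)$, where $t\wedge T_a := \min\{t,T_a\}$. Since $(Z_t)$ is adapted, $T_a$ is a stopping time --- indeed $\{T_a\leq t\} = \bigcup_{s\leq t}\{Z_s\leq a\}\in\mathscr{F}_t$ --- so $(\tilde{Z}_t)$ is adapted, and condition~2 makes it bounded (hence integrable) for each fixed $t$: it equals $Z_t + \varepsilon t < b+\varepsilon t$ on $\{T_a>t\}$ and $Z_{T_a}+\varepsilon T_a \leq a+\varepsilon t$ on $\{T_a\leq t\}$, using $Z_{T_a}\leq a$ there.

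First I would check the (super/sub)martingale property. On the $\mathscr{F}_t$-measurable event $\{t\geq T_a\}$ the process is frozen, $\tilde{Z}_{t+1}=\tilde{Z}_t$; on $\{t<T_a\}$ one has $(t+1)\wedge T_a = t+1$, so $\tilde{Z}_{t+1}-\tilde{Z}_t = Z_{t+1}-Z_t+\varepsilon$. Multiplying the drift hypothesis by $\mathds{1}_{\{t<T_a\}}\in\mathscr{F}_t$ and adding the frozen part, condition~1.1 gives $\mathbb{E}[\tilde{Z}_{t+1}\mid\mathscr{F}_t]\leq\tilde{Z}_t$ and condition~1.2 gives $\mathbb{E}[\tilde{Z}_{t+1}\mid\mathscr{F}_t]\geq\tilde{Z}_t$. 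Iterating with the tower rule then yields $\mathbb{E}[\tilde{Z}_t\mid\mathscr{F}_0]\leq\tilde{Z}_0=Z_0$ in the first regime and $\mathbb{E}[\tilde{Z}_t\mid\mathscr{F}_0]\geq Z_0$ in the second.

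For the upper bound (from 1.1): since $Z\geq 0$, $\tilde{Z}_t\geq\varepsilon\,(t\wedge T_a)$, hence $\varepsilon\,\mathbb{E}[t\wedge T_a\mid\mathscr{F}_0]\leq Z_0$; letting $t\to\infty$ and invoking monotone convergence gives $\mathbb{E}[T_a\mid\mathscr{F}_0]\leq Z_0/\varepsilon$. For the lower bound (from 1.2): I would bound $Z_{t\wedge T_a}\leq a + b\,\mathds{1}_{\{T_a>t\}}$, using $Z_{T_a}\leq a$ on $\{T_a\leq t\}$ and $Z_t<b$ (condition~2) on $\{T_a>t\}$, so that $Z_0\leq\mathbb{E}[\tilde{Z}_t\mid\mathscr{F}_0]\leq a + b\,\Pr(T_a>t\mid\mathscr{F}_0) + \varepsilon\,\mathbb{E}[T_a\mid\mathscr{F}_0]$. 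Condition~3 makes $T_a$ finite almost surely, so $\Pr(T_a>t\mid\mathscr{F}_0)\to 0$; letting $t\to\infty$ and rearranging gives $\mathbb{E}[T_a\mid\mathscr{F}_0]\geq (Z_0-a)/\varepsilon$.

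The substance lies entirely in the two limit passages; everything else is routine bookkeeping (chiefly the measurability $\{t<T_a\}\in\mathscr{F}_t$ that lets the drift hypothesis be localised to the pre-stopping regime). I expect the lower-bound limit to be the only delicate point, since that is precisely where hypotheses~2 and~3 enter --- the former to control $Z_t$ on the event that stopping has not yet occurred, the latter to annihilate the residual term $b\,\Pr(T_a>t)$; the upper bound, by contrast, needs neither and follows from monotone convergence alone.
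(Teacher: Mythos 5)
Your proof is correct: the stopped process $\tilde{Z}_t = Z_{t\wedge T_a} + \varepsilon(t\wedge T_a)$ is the standard optional-stopping argument for additive drift, and both limit passages (monotone convergence for the upper bound; $\Pr(T_a>t\mid\filt{0})\to 0$ from condition~3 together with the bound $Z_{t\wedge T_a}\le a+b\,\indf{T_a>t}$ from condition~2 for the lower bound) are handled properly. Note that the paper does not prove this theorem at all -- it is imported verbatim from \cite{levelbasedanalysis2018} -- so there is nothing to compare against here; your argument is the canonical one and is sound.
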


\begin{lemma}[Corollary 1.8.3 in \cite{DoerrTheoryTools}]\label{lemma:stoch-dom-implies-expectation}
If $X\succeq Y$, then $\expect{X}\geq \expect{Y}$.
\end{lemma}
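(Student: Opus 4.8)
The statement is the standard fact that stochastic dominance implies dominance of expectations, quoted here as Corollary~1.8.3 of \cite{DoerrTheoryTools}; my plan is therefore to recall its usual short proof, namely a \emph{coupling} argument. The goal is to realise copies $\widetilde X\sim X$ and $\widetilde Y\sim Y$ on a single probability space with $\widetilde X\ge\widetilde Y$ pointwise; monotonicity of the expectation then immediately gives $\expect{X}=\expect{\widetilde X}\ge\expect{\widetilde Y}=\expect{Y}$. Recall that, in the notion of dominance used in this paper, $X\succeq Y$ means $F_X(z)\le F_Y(z)$ for every $z\in\mathbb{R}$, writing $F_W(z):=\prob{W\le z}$.

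Concretely, I would take $U$ uniformly distributed on $(0,1)$, set $\widetilde X:=F_X^{-1}(U)$ and $\widetilde Y:=F_Y^{-1}(U)$ with $F_W^{-1}(u):=\inf\{z\in\mathbb{R}:F_W(z)\ge u\}$ the generalised inverse, and then invoke two standard facts. The first is the quantile transform: $F_W^{-1}(U)$ has the same distribution as $W$ (this uses only that $F_W$ is non-decreasing and right-continuous), so $\widetilde X\sim X$ and $\widetilde Y\sim Y$. The second is a monotonicity observation: since $F_X(z)\le F_Y(z)$ for all $z$, we have $\{z:F_X(z)\ge u\}\subseteq\{z:F_Y(z)\ge u\}$ for every $u$, and the infimum of the smaller set is the larger one, so $F_X^{-1}(u)\ge F_Y^{-1}(u)$ for all $u\in(0,1)$. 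Hence $\widetilde X\ge\widetilde Y$ almost surely, which is all that is needed.

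An alternative that avoids generalised inverses is the layer-cake identity: for an integrable real-valued $W$, $\expect{W}=\int_0^\infty\prob{W>z}\,dz-\int_{-\infty}^0\prob{W\le z}\,dz$. From $\prob{X>z}\ge\prob{Y>z}$ and $\prob{X\le z}\le\prob{Y\le z}$, which hold for every $z$ by hypothesis, integrating termwise yields $\expect{X}\ge\expect{Y}$ at once.

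There is essentially no real obstacle here beyond routine measure-theoretic bookkeeping: the coupling route needs the verification that $F_W^{-1}(U)\sim W$, and the layer-cake route needs $X$ and $Y$ integrable (or the convention that an expectation may equal $+\infty$). In every application of this lemma in the paper the random variables involved are bounded --- they are products, sums and differences of $\{0,1\}$-valued indicators over a population of size $\lambda$ --- so integrability holds automatically and no extra hypothesis is required.
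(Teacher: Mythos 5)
The paper does not prove this lemma at all: it is quoted verbatim as Corollary~1.8.3 of the cited tools chapter, so there is no in-paper argument to compare against. Your proof is correct and standard --- both the quantile-coupling route and the layer-cake route are valid under the paper's definition of $\succeq$ (namely $\prob{X\le z}\le\prob{Y\le z}$ for all $z$), and your closing remark that integrability is automatic here because every application involves bounded random variables (products and sums of indicators over populations of size $\lambda$) is exactly the right observation to make the citation self-contained.
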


\begin{lemma}\label{lemma:sqrt-bound}
  For all $\delta\in(0,1)$ and $\delta_1\in[0,\delta)$
  \begin{align*}
    \frac{3\delta-4\delta_1}{11}<1-\sqrt{\frac{1+\delta_1}{1+\delta}}<\frac{4\delta-3\delta_1}{8}.
  \end{align*}
\end{lemma}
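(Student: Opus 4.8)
The plan is to eliminate the square root by a single substitution and then reduce both inequalities to elementary polynomial facts. Put $w := \dfrac{\delta-\delta_1}{1+\delta}$; since $0\le\delta_1<\delta<1$ we have $w\in(0,1)$, and the middle quantity becomes $1-\sqrt{1-w}$. Using the identity $1-\sqrt{1-w}=\dfrac{w}{1+\sqrt{1-w}}$, the whole statement is just a two-sided estimate of $1+\sqrt{1-w}$ combined with converting the resulting rational bounds back into $\delta,\delta_1$ and clearing denominators.

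For the upper bound, I would use $\sqrt{1-w}\ge 1-w$ (valid on $[0,1]$), so $1+\sqrt{1-w}\ge 2-w$ and hence $1-\sqrt{1-w}\le \dfrac{w}{2-w}=\dfrac{\delta-\delta_1}{2+\delta+\delta_1}$. It then suffices to verify $\dfrac{\delta-\delta_1}{2+\delta+\delta_1}<\dfrac{4\delta-3\delta_1}{8}$, which after cross-multiplication is equivalent to $4\delta^2+\delta_1(2+\delta-3\delta_1)>0$; this holds because $\delta>0$ and $2+\delta-3\delta_1>2+\delta-3\delta=2-2\delta>0$.

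For the lower bound, if $3\delta-4\delta_1\le 0$ the claim is immediate since $1-\sqrt{1-w}>0$. Otherwise $\delta_1<\tfrac{3}{4}\delta$, and here the crude bound $\sqrt{1-w}\le 1$ is not enough (it fails for $\delta$ near $1$), so I would instead use $\sqrt{1-w}\le 1-\tfrac{w}{2}$ (from $(1-\tfrac{w}{2})^2\ge 1-w$). This gives $1-\sqrt{1-w}\ge \dfrac{w}{2-w/2}=\dfrac{2w}{4-w}=\dfrac{2(\delta-\delta_1)}{4+3\delta+\delta_1}$, and clearing denominators against $\dfrac{3\delta-4\delta_1}{11}$ reduces the claim to $g(\delta,\delta_1):=10\delta-6\delta_1-9\delta^2+9\delta\delta_1+4\delta_1^2>0$. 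Writing $g=\delta(10-9\delta)+\delta_1(9\delta+4\delta_1-6)$, I would split on the sign of $9\delta+4\delta_1-6$: if it is nonnegative then $g\ge\delta(10-9\delta)\ge\delta>0$ (using $\delta<1$); if it is negative then $\delta<\tfrac23$, and bounding $\delta_1<\tfrac34\delta$ gives $g>\delta(10-9\delta)-\tfrac34\delta(6-9\delta)=\tfrac14\delta(22-9\delta)>0$.

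The main obstacle is precisely this lower-bound polynomial inequality: it forces the sharper Taylor-type estimate $\sqrt{1-w}\le 1-w/2$ and the subsequent case distinction on $9\delta+4\delta_1-6$, whereas the upper bound and all the conversions between $w$ and $(\delta,\delta_1)$ are routine. The numbers $11$, $8$, $3$, $4$ in the statement are evidently chosen with exactly enough slack to make both polynomial inequalities strict on the open region $0\le\delta_1<\delta<1$.
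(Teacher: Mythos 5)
Your proof is correct, and it takes a genuinely different route from the paper's. The paper bounds the numerator and denominator separately via truncated Maclaurin expansions, using $1+\tfrac{3x}{8}<\sqrt{1+x}<1+\tfrac{x}{2}$ for $x\in(0,1)$, so that $1-\sqrt{(1+\delta_1)/(1+\delta)}$ is sandwiched between $\tfrac{3\delta-4\delta_1}{8+3\delta}$ and $\tfrac{\delta-3\delta_1/4}{2+\delta}$, after which it only remains to compare denominators with $11$ and $8$. You instead collapse the ratio into a single radical via $w=\tfrac{\delta-\delta_1}{1+\delta}$, rationalise with $1-\sqrt{1-w}=\tfrac{w}{1+\sqrt{1-w}}$, and reduce everything to two polynomial inequalities in $(\delta,\delta_1)$, which you verify by elementary case splits; all your algebra checks out (the reduction of the upper bound to $4\delta^2+\delta_1(2+\delta-3\delta_1)>0$ and of the lower bound to $g=\delta(10-9\delta)+\delta_1(9\delta+4\delta_1-6)>0$, including the split on the sign of $9\delta+4\delta_1-6$ with the bound $\delta_1<\tfrac34\delta$, are all correct). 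Your version is longer on the lower bound but has one genuine advantage: you explicitly dispose of the case $3\delta-4\delta_1\le 0$, whereas the paper's first step $\tfrac{3\delta-4\delta_1}{11}<\tfrac{3\delta-4\delta_1}{8+3\delta}$ tacitly assumes the numerator is positive and would reverse otherwise (the lemma still holds there since the middle quantity is positive, but your treatment is cleaner on this point). The paper's approach buys shorter algebra and makes transparent where the constants $3,4,8,11$ come from (the coefficients $\tfrac38$ and $\tfrac12$ in the series bounds); yours buys self-containedness, needing nothing beyond $\sqrt{t}\ge t$ and $(1-w/2)^2\ge 1-w$ on $[0,1]$.
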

\begin{proof}
By taking the first two terms of the Maclaurin series of
$\sqrt{1+x}$ for any $x\in(0,1)$, we first obtain
\begin{align}
  \sqrt{1+x} & > 1+\frac{x}{2}-\frac{x^2}{8}
> 1+\frac{3x}{8},\label{eq:sqrt-bound1}
\end{align}
where the last inequality uses the assumption $x\in(0,1)$. Similarly,
taking only the first term in the Maclaurin series, we obtain 
\begin{align}
  \sqrt{1+x} < 1 + \frac{x}{2}\label{eq:sqrt-bound2}
\end{align}
Hence, using $\delta\in(0,1)$ we obtain
\begin{align*}
  \frac{3\delta-4\delta_1}{11}
  & < \frac{3\delta-4\delta_1}{8+3\delta}
    = \frac{8+3\delta-8-4\delta_1}{8+3\delta}
    = 1-\frac{1+\frac{\delta_1}{2}}{1+\frac{3\delta}{8}}
    < 1-\frac{\sqrt{1+\delta_1}}{\sqrt{1+\delta}}.
\end{align*}
where the last inequality applies (\ref{eq:sqrt-bound1}) for
$x=\delta$ and (\ref{eq:sqrt-bound2}) for $x=\delta_1$.

Analogously, the lower bound (\ref{eq:sqrt-bound2}) for $x=\delta$, the
upper bound (\ref{eq:sqrt-bound1}) for $x=\delta_1$, and the assumption
$\delta\in(0,1)$ give
\begin{align*}
  1-\frac{\sqrt{1+\delta_1}}{\sqrt{1+\delta}}
  & < 1-\frac{1+\frac{3\delta_1}{8}}{1+\frac{\delta}{2}}
    = \frac{2+\delta}{2+\delta}-\frac{2+\frac{3\delta_1}{4}}{2+\delta}
    = \frac{\delta-\frac{3\delta_1}{4}}{2+\delta}
    < \frac{4\delta-3\delta_1}{8}.
\end{align*}
\end{proof}
\newpage
\section{An Explicit Bound on the Population Size}

The requirement ``for sufficiently large $\lambda$'' in the statement
of Theorem \ref{thm:level-based} is intended to mean
$\lambda \geq C$, where $C$ is some unspecified positive constant
independent of $m$. While this statement suffices for typical
applications, it is useful to know $C$. 
The following theorem is a variant of Theorem \ref{thm:level-based} which gives an explicit
expression for $C$ (condition (G3b)) in terms of the other constants 
in the statement. Apart from the new condition (G3b) and a rewording of
(G3a), the theorem is identical to Theorem \ref{thm:level-based}.

\begin{theorem}\label{thm:level-based-precise-lambda}
Let $c''>1$ be any constant.
Given subsets $A_j\subseteq \mathcal{X}$, $B_j\subseteq\mathcal{Y}$ for
$j\in[m]$ where $A_1:=\mathcal{X}$ and $B_1:=\mathcal{Y}$, define
    $T := \min\{t\lambda \mid (P_t\times Q_t)\cap (A_{m}\times B_m)\neq \emptyset\}$, where for all
    $t\in\mathbb{N}$, $P_t\in\mathcal{X}^\lambda$ and
    $Q_t\in\mathcal{Y}^\lambda$ are the 
    populations of Algorithm~\ref{algo:coea-process} in generation $t$.
    If there
    exist $z_1,\dots,z_{m-1},\delta \in(0,1]$,
    and $\gamma_0 \in (0,1)$
    such that
    for any populations $P\in\mathcal{X}^\lambda$ and
    $Q\in\mathcal{Y}^\lambda$ with so-called ``current level'' $j:=\max\{i\in[m]\mid
    \vert(P\times Q)\cap (A_i\times B_i)\vert\geq \gamma_0\lambda^2\}$    
\begin{description}[noitemsep,leftmargin=3em]
  \item[(G1)]
  if $j\in[m-1]$ and $(x,y)\sim \mathcal{D}(P, Q)$ then
\[
    \displaystyle \prob{x\in A_{j+1}}\prob{y\in B_{j+1}} \geq z_j,
\]
\item[(G2a)]
  for all $\gamma\in(0,\gamma_0)$,
  if $j\in[m-2]$ and
      $\vert(P\times Q) \cap (A_{j+1}\times B_{j+1})\vert  \geq \gamma\lambda^2$, then
for $(x,y)\sim \mathcal{D}(P, Q)$,
      \[
        \prob{x\in A_{j+1}}\prob{y\in B_{j+1}} \geq (1+\delta)\gamma,\]
  \item[(G2b)]
    if $j\in[m-1]$ and $(x,y)\sim \mathcal{D}(P, Q)$, then
      \[
        \prob{x\in A_{j}}\prob{y\in B_{j}} \geq (1+\delta)\gamma_0,\]
  \item[(G3a)] there exists a constant $\upsilon>0$ such that the population size $\lambda\in\mathbb{N}$ satisfies
   for $z_*:=\min_{i\in[m-1]} z_i$ 
\[
\lambda \geq 2\left(\frac{1}{\gamma_0\delta^2}\right)^{1+\upsilon}\ln\left(\frac{m}{z_*}\right),
\]
\item[(G3b)] there exists a constant $\zeta\in(0,1)$ such that the population size satisfies
\[
  \lambda \geq \max\left\{\;
    \frac{30\sqrt{c''}}{\delta^2(1-\frac{1}{\sqrt{c''}})},\;
e^{(1+v)/(v\zeta)},\;
\left(\frac{(1+v)18}{\zeta v}\right)^{(1+v)/(v(1-\zeta))}\;
  \right\},
\]  
\end{description}
  then \begin{align}
\expect{T} \leq \frac{c''\lambda}{\delta}\left(m\lambda^2+ 16\sum_{i=1}^{m-1}\frac{1}{z_i}\right).
\end{align}
\end{theorem}

The proof of this theorem applies the following lemmas which provide
upper bounds on the logarithm.
\begin{lemma}\label{lemma:log-le-poly}
For all $v> 0, \zeta\in(0,1)$ and $x\geq e^{(1+v)/(\zeta v)},$  $$\ln(x)\leq \frac{(1+v)x^{\zeta v/(1+v)}}{e\zeta v}.$$
\end{lemma}
\begin{proof}
Define the function $f(x):=x^{\zeta v/(1+v)}/\ln(x)$ and note that its
derivative 
\begin{align*}
 f'(x)= \frac{\zeta v\ln (x)-1-v}{x^{1-\frac{\zeta v}{1+v}} (1+v) \ln^2(x)}
\end{align*}
is non-negative for all $x\geq e^{(1+v)/(\zeta v)}$. Thus, $f$ is
monotonically increasing for $x\geq
e^{(1+v)/(\zeta v)}$, and
\begin{align*}
  \frac{e\zeta v}{1+v} = f(e^{(1+v)/(\zeta v)}) \le f(x) = \frac{x^{\zeta v/(1+v)}}{\ln(x)}
\end{align*}
\end{proof}
\begin{lemma}\label{lemma:lambda-div-log-lambda}
  Let $\beta>0,\zeta \in(0,1)$ and $v>0$. Then for all 
  $\lambda\ge \max\left(e^{(1+v)/(\zeta v)},\left(\frac{(1+v)\beta}{v\zeta e}\right)^{(1+v)/(v(1-\zeta ))}\right)$, 
\begin{align*}
  \frac{\lambda}{\beta\ln(\lambda)} \ge \lambda^{1/(1+v)}.
\end{align*}
\end{lemma}
\begin{proof}
  By the assumption $\lambda\geq e^{(1+v)/(\zeta v)}$, we can apply
  Lemma \ref{lemma:log-le-poly} to obtain
\begin{align*}  
  \frac{\lambda}{\beta\ln(\lambda)}
  & \ge \frac{e\zeta v\lambda^{1-\zeta v/(1+v)}}{(1+v)\beta}
     = 
      \frac{v\zeta e\lambda^{\frac{v(1-\zeta )}{1+v}}}{(1+v)\beta}\cdot
      \lambda^{1/(1+v)}
      \ge \lambda^{1/(1+v)}.
\end{align*}
\end{proof}

\begin{proof}[Proof of Theorem \ref{thm:level-based-precise-lambda}]
We apply
Theorem~\ref{thm:pol-drift} (the additive drift theorem)
with respect to the parameter $a=0$ and the
process
$
  Z_t := g\left(X_{t}^{(Y_t+1)},Y_t\right),
$
where $g$ is a level-function, and
$(Y_t)_{t\in\mathbb{N}}$ and $(X^{(j)}_t)_{t\in\mathbb{N}}$ for
$j\in[m]$ are stochastic processes, which will be defined later.
$(\mathscr{F}_t)_{t\in\mathbb{N}}$ is the filtration induced by
the populations $(P_t)_{t\in\mathbb{N}}$ and $(Q_t)_{t\in\mathbb{N}}$.

We will assume \WLOG that condition (G2a) is also satisfied for
$j=m-1$, for the following reason.
Given Algorithm~\ref{algo:coea-process} with a certain mapping~$\D$, consider
Algorithm~1 with a modified mapping~$\D'(P,Q)$:
If $(P\times Q)\cap (A_{m}\times B_m)=\emptyset$, then $\D'(P,Q)=\D(P,Q)$; otherwise $\D'(P,Q)$
assigns probability mass $1$ to some pair $(x,y)$ of~$P\times Q$ that is in
$A_{m}$, \eg, to the first one among such elements.
Note that $\D'$ meets conditions~(G1), (G2a), and (G2b). Moreover, (G2a) 
hold for $j=m-1$.
For the sequence of populations $P'_0,P'_1,\dots$ and $Q'_0,Q'_1,\dots$ of Algorithm~\ref{algo:coea-process}
with mapping~$\D'$, we can put ${T' := \min\{\lambda t \mid
(P'_t\times Q'_t)\cap (A_{m}\times B_m) \neq \emptyset\}}$. Executions of the original algorithm and
the modified one before generation $T'/\lambda$ are identical. On
generation~$T'/\lambda$ both algorithms place elements of~$A_{m}$
into the populations for the first time. Thus, $T'$  and $T$ are
equal in every realisation and their expectations are equal.

  For any level $j\in[m]$ and time $t\geq 0$, let the random variable
$
    X_t^{(j)} := \vert (P_t\times Q_t) \cap (A_j\times B_j) \vert
$
  denote the number of pairs in level $A_{j}\times B_j$ at time $t$.
  As mentioned above, the current level $Y_t$ of the algorithm at time $t$
  is defined as
  \begin{align*}
    Y_t & := \max \left\{ j\in[m] \;\mid\; X_t^{(j)} \geq  \gamma_0\lambda^2 \right\}.
    \end{align*}
  Note that  $(X^{(j)}_t)_{t\in\mathbb{N}}$ and
  $(Y_t)_{t\in\mathbb{N}}$ are adapted to the filtration
  $(\mathscr{F}_t)_{t\in\mathbb{N}}$ because they are defined in terms
  of the populations $(P_t)_{t\in\mathbb{N}}$ and $(Q_t)_{t\in\mathbb{N}}$.

  When $Y_t <m$, there exists a unique $\gamma\in[0,\gamma_0)$ such that
  \begin{align}
    X_t^{(Y_t+1)} & = \vert(P_t\times Q_t)\cap (A_{Y_t+1}\times
                    B_{Y_t+1})\vert =  \gamma \lambda^2, \text{ and}\label{lambda:eq:config-3}\\
    X_t^{(Y_t)}   & = \vert(P_t\times Q_t)\cap (A_{Y_t}\times B_{Y_t})\vert \geq \gamma_0\lambda^2. \label{lambda:eq:config-2}
  \end{align}

  Finally, we define the process $(Z_t)_{t\in\mathbb{N}}$ as
  $Z_t:=0$ if $Y_t=m$, and otherwise, if $Y_t<m$, we let
  $$
    Z_t:=g\left(X_{t}^{(Y_t+1)},Y_t\right),
  $$
  where for all $k\in[\lambda^2]$, and for all $j\in[m-1]$,
  $g(k,j):=g_1(k,j)+g_2(k,j)$ and
  \begin{align*}
    g_1(k,j) &:= \frac{\eta}{1+\eta}\cdot ((m-j)\lambda^2-k)\\
    g_2(k,j) &:= \varphi\cdot\left(\frac{e^{-\eta k}}{q_{j} } + \sum^{m-1}_{i=j+1} \frac{1}{q_i}\right),
  \end{align*}
  where $\eta\in(3\delta/(11\lambda),\delta/(2\lambda))$
  and $\varphi\in(0,1)$ are parameters which will be specified later, and 
  for $j\in[m-1]$, 
  $q_j :=  \lambda z_j/(4+\lambda z_j).
  $

  Both functions have partial derivatives $\frac{\partial g_i}{\partial k}<0$ and
  $\frac{\partial g_i}{\partial j}<0$, hence they satisfy properties 1
  and 2 of Definition~\ref{def:property}. They also satisfy property 3 because
  for all $j\in[m-1]$
  \begin{align*}
    g_1(\lambda^2,j) & = \frac{\eta}{1+\eta}((m-j)\lambda^2-\lambda^2) = g_1(0,j+1)\\
    g_2(\lambda^2,j) & > \sum^{m-1}_{i=j+1} \frac{\varphi}{q_i} = g_2(0,j+1).
  \end{align*}
  Therefore $g_1$ and $g_2$ are level functions, and thus also their
  linear combination $g$ is a level function.
  
  Due to properties 1 and 2 of level functions (see Definition~\ref{def:property}),
  it holds for all $k\in [0..\lambda^2]$ and $j\in [m-1]$
  \begin{align}
    0\leq g(k,j)\leq g(0,1) & = \frac{\eta(m-1)\lambda^2}{1+\eta} +\varphi\cdot\left(\frac{1}{q_{1} } + \sum^{m-1}_{i=2} \frac{1}{q_i}\right)\\
                             & <  \frac{\eta m\lambda^2}{1+\eta} +\sum_{i=1}^{m-1} \frac{\varphi}{q_i}\\
                            & < \frac{\eta m\lambda^2}{1+\eta}+
                               \varphi\sum_{i=1}^{m-1}\frac{4+\lambda z_i}{\lambda z_i}    \label{lambda:eq:distance-bound-0}\\
\intertext{using $\eta>0$}
                            & < m\left(\eta\lambda^2+ \frac{4\varphi}{\lambda z_*}+\varphi\right)\\
                             \intertext{using $\varphi,z_*\in(0,1)$ and  $\lambda>11\eta/(3\delta)$}
                             & < \frac{m}{z_*}\left(2\eta\lambda^2+ \frac{44\eta}{3\delta}\right)
                               \intertext{condition (G3b) implies
                               $\lambda>44/3$ and  $\lambda^2>44/(3\delta)$
                               }
                             & < \frac{3\eta\lambda^2m}{z_*}.
    \label{lambda:eq:distance-bound-1}
  \end{align}
  Hence, we have $0\leq
  Z_t<g(0,1)<\infty$ for all $t\in\mathbb{N}$ which implies that
  condition 2 of the drift theorem is satisfied.

  The drift of the process at time $t$ is $\expectt{\Delta_{t+1}}$, where
  \begin{align*}
    \Delta_{t+1}  & := g\left(X_t^{(Y_t+1)},Y_t\right)-g\left(X_{t+1}^{(Y_{t+1}+1)},Y_{t+1}\right).
  \end{align*}
We bound the drift by the law of total probability as
  \begin{align}
  \expectt{\Delta_{t+1}}
         &  =  (1-\probt{Y_{t+1}<Y_t})\expectt{\Delta_{t+1} \mid Y_{t+1}\geq Y_t} \nonumber \\
         &\quad\; + \probt{Y_{t+1}<Y_t}\expectt{\Delta_{t+1} \mid Y_{t+1}< Y_t}.\label{lambda:eq:law-tot-prob}
  \end{align}
  The event $Y_{t+1}< Y_t$ holds if and only if
  $X_{t+1}^{(Y_t)}<\gamma_0\lambda^2$, which by
      Lemma~\ref{lemma:marg-prob-to-product-prob} statement~3 for
      $\gamma:=\gamma_0+1/\lambda$ and a parameter
      $\delta_1\in(0,\delta)$ to be chosen later, and
      conditions~(G2b) and (G3a), is upper bounded by
  \begin{align}
    \probt{Y_{t+1}<Y_t}
    &= \probt{X_{t+1}^{(Y_t)}<\gamma_0\lambda^2}  \\
    &= \probt{X_{t+1}^{(Y_t)}<\lambda(\gamma\lambda-1)}  \\
    & < \exp\left(-\delta_1\gamma\lambda\left(1-\sqrt{\frac{1+\delta_1}{1+\delta}}\right)\right)\\
    \intertext{by Lemma~\ref{lemma:sqrt-bound} and $\gamma<\gamma_0$}
    & < \exp\left(-\delta_1\gamma_0\lambda\left(\frac{3\delta-4\delta_1}{11}\right)\right)\\
    \intertext{to minimise the expression, we choose $\delta_1:=(3/8)\delta$}
    & = \exp\left(-\frac{9}{16}\delta^2\gamma_0\lambda\right).
\label{lambda:eq:prob-fall}
  \end{align}
Given the low probability of the event $Y_{t+1}<Y_t$, it suffices
  to use the pessimistic bound~(\ref{lambda:eq:distance-bound-1}) 
  \begin{align}
    \expectt{\Delta_{t+1} \mid Y_{t+1}<Y_t}
     & \geq -g(0,1) 
       \label{lambda:eq:drift-fall}\end{align}

  If $Y_{t+1}\geq Y_t$, we can apply Lemma~\ref{lemma:increase}
  \begin{align*}
    &\expectt{\Delta_{t+1} \mid Y_{t+1}\geq Y_t} \geq
      \expectt{g\left(X^{(Y_t+1)}_{t},Y_t\right) - g\left(X^{(Y_t+1)}_{t+1},Y_{t}\right)
               \mid Y_{t+1}\geq Y_t}.
  \end{align*}

  If $X_{t}^{(Y_t+1)}=0$, then $X_{t}^{(Y_t+1)}\leq X_{t+1}^{(Y_t+1)}$ and
  \begin{align*}
    \expectt{g_1\left(X_t^{(Y_t+1)},Y_t\right) - g_1\left(X_{t+1}^{(Y_t+1)},Y_t\right)
             \mid Y_{t+1}\geq Y_t}\geq 0,
  \end{align*}
  because the function $g_1$ satisfies property~2 in Definition~\ref{def:property}.
  Furthermore, we have the lower bound
  \begin{multline*}
    \expectt{g_2\left(X_t^{(Y_t+1)},Y_t\right)-g_2\left(X_{t+1}^{(Y_t+1)},Y_t\right)
             \mid Y_{t+1}\geq Y_t} \\
      >    \probt{X_{t+1}^{(Y_t+1)}\geq 1}
           \left(g_2\left(0,Y_t\right)-g_2\left(1,Y_t\right)\right)
      \geq \frac{\eta\varphi}{1+\eta}.
  \end{multline*}
  where the last inequality follows because 
  \begin{align*}
  \probt{X_{t+1}^{(Y_t+1)}\geq 1}
    &= \probt{(P_{t+1}\times Q_{t+1})\cap (A_{Y_t+1}\times B_{Y_t+1})\neq \emptyset}\\
    & \geq q_{Y_t},
  \end{align*}
  due to condition (G1) and Lemma~\ref{lemma:population-upgrade},
  and
  $$ g_2\left(0,Y_t\right)-g_2\left(1,Y_t\right)
  = (\varphi/q_{Y_t})(1-e^{-\eta})
  \geq \frac{\varphi\eta}{(1+\eta)q_{Y_t}}
  $$

      In the other case, where $X_t^{(Y_t+1)}=\gamma\lambda^2\geq 1$,      
  Lemma~\ref{lemma:marg-prob-to-product-prob} and condition (G2a)
  imply for $\varphi:=\delta(1-\delta')$ for an arbitrary constant $\delta'\in(0,1)$,
  \begin{multline}
    \expectt{g_1\left(X_t^{(Y_t+1)},Y_t\right)-g_1\left(X_{t+1}^{(Y_t+1)},Y_t\right)
              \mid Y_{t+1} \geq Y_{t}} \\
      = \frac{\eta}{1+\eta}\expectt{X_{t+1}^{(Y_t+1)}\mid Y_{t+1} \geq Y_{t}}-\frac{\eta}{1+\eta} X_t^{(Y_t+1)}\\
      \geq \frac{\eta}{1+\eta}(\lambda(\lambda-1)(1+\delta)\gamma-\gamma\lambda^2) > \frac{\eta}{1+\eta}\delta(1-\delta')=\frac{\eta\varphi}{1+\eta},\label{lambda:eq:def-phi}
  \end{multline}
  where the last inequality is obtained by choosing the minimal value $\gamma=1/\lambda^2$.
  For the function $g_2$, we get
  \begin{multline*}
    \expectt{g_2\left(X_t^{(Y_t+1)},Y_t\right)-g_2\left(X_{t+1}^{(Y_t+1)},Y_t\right)
             \mid Y_{t+1} \geq Y_{t}}
      = \\
    \frac{\varphi}{q_{Y_t}}
    \left( e^{-\eta X_t^{(Y_t+1)}} - \expectt{e^{-\eta X_{t+1}^{(Y_t+1)}}} \right)>0,
  \end{multline*}
  where the last inequality is due to statement 2 of Lemma~\ref{lemma:marg-prob-to-product-prob}
      for the parameter
      \begin{align*}
        \eta := \frac{1}{\lambda}\left(1-\frac{1}{\sqrt{1+\delta}}\right).
      \end{align*}
      By Lemma \ref{lemma:sqrt-bound} for $\delta_1=0$, this parameter satisfies
      \begin{align}
         \frac{3\delta}{11\lambda}<\eta < \frac{\delta}{2\lambda} < \frac{1}{\lambda}.
        \label{lambda:eq:eta-bound}
      \end{align}
  
  Taking into account all cases, we have
  \begin{align}
    \expectt{\Delta_{t+1} \mid Y_{t+1}\geq Y_t }
      \geq \frac{\eta\varphi}{1+\eta}.\label{lambda:eq:drift-forward}
  \end{align}

  We now have bounds for all the quantities in~\eqref{eq:law-tot-prob}
  with \eqref{eq:prob-fall}, \eqref{eq:drift-fall}, and \eqref{eq:drift-forward}.
  Before bounding the overall drift $\expectt{\Delta_{t+1}}$, we
  remark that the requirement on the
  population size imposed by conditions (G3a) and (G3b) and Lemma
\ref{lemma:lambda-div-log-lambda} for $\beta:=48<18e$ imply that for any constant
$\upsilon>0$ and $C:=3$, 
  \begin{align*}
    \frac{\lambda}{16C\ln\lambda} > \lambda^{\frac{1}{1+\upsilon}}> \frac{1}{\delta^2\gamma_0},
  \end{align*}
  which implies that 
\begin{align}
  C\ln\lambda < \frac{\lambda\delta^2\gamma_0}{16}. \label{lambda:eq:loglambda-bound}
\end{align}
  The overall drift is now bounded by
  \begin{align}
  \expectt{\Delta_{t+1}}
    &=       (1 - \probt{Y_{t+1}<Y_t})\expectt{\Delta_{t+1} \mid Y_{t+1}\geq Y_t} \\
    &\quad\;    + \probt{Y_{t+1}<Y_t}\expectt{\Delta_{t+1} \mid Y_{t+1}< Y_t} \\
    & \geq \frac{\eta\varphi}{1+\eta} - \exp\left(-\frac{9}{16}\delta^2\gamma_0\lambda\right)\left(\frac{3m\eta\lambda^2}{z_*}+\frac{\eta\varphi}{1+\eta}\right)\\
    & =
      \frac{\eta\varphi}{1+\eta} -
      \exp\left(-\frac{9}{16}\delta^2\gamma_0\lambda+C\ln\lambda\right)
      \left(\frac{3m\eta\lambda^{2-C}}{z_*} +\frac{\eta\varphi}{(1+\eta)\lambda^C}\right)\\
    \intertext{by (\ref{lambda:eq:loglambda-bound})}
    & >
      \frac{\eta\varphi}{1+\eta} -
      \exp\left(-\frac{1}{2}\delta^2\gamma_0\lambda\right)
      \left(\frac{3m\eta\lambda^{2-C}}{z_*} +\frac{\eta\varphi}{(1+\eta)\lambda^C}\right)\\
    \intertext{by condition (G3a)}
    & >
      \frac{\eta\varphi}{1+\eta} -(\frac{z_*}{m})      
      \left(\frac{3m\eta\lambda^{2-C}}{z_*} +\frac{\eta\varphi}{(1+\eta)\lambda^C}\right)\\
    \intertext{recalling that $C=3$}
    & =
      \frac{\eta\varphi}{1+\eta} - \frac{3\eta}{\lambda} -\frac{\eta\varphi}{(1+\eta)\lambda^3m}\\
    \intertext{by (\ref{lambda:eq:eta-bound}), 
    $\varphi=\delta(1-\delta')<1,$ $m\geq 1$, and $\eta>0$ }
    & >
      \frac{\eta\varphi}{1+\eta} - \frac{4}{\lambda^2}\\
    \intertext{using a claim that $4/\lambda^2 < 
    \rho\eta\varphi/(1+\eta)$ for some constant $\rho\in(0,1),$ which
    will be proved in (\ref{lambda:eq:claim1}) below}
    & > \frac{\eta\varphi(1-\rho) }{1+\eta}.\label{lambda:eq:overall-drift}
  \end{align}

  We now verify condition 3 of Theorem~\ref{thm:pol-drift}, \ie,
  that $T$ has
  finite expectation. Let $p_*:=\min\{(1+\delta)(1/\lambda^2), z_*\}>0$, and
  note by conditions (G1) and (G2a) that the current level increases by at
  least one with probability
  $\probt{Y_{t+1}>Y_t}\geq (p_*)^{\gamma_0\lambda}$.
  Due to the definition of the modified process $D'$, if $Y_t=m$, then $Y_{t+1}=m$.
  Hence, the probability of reaching $Y_t=m$ is lower bounded by the
  probability of the event that the current level increases in all of
  at most $m$ consecutive
  generations, \ie, $\probt{Y_{t+m} =m} \geq (p_*)^{\gamma_0\lambda m}>0$.
  It follows that $\expect{T}<\infty$.

  By Theorem~\ref{thm:pol-drift}, the upper bound on $g(0,1)$ in \eqref{eq:distance-bound-0} 
  and the lower bound on the drift in Eq. (\ref{lambda:eq:overall-drift}) and the definition of $T$,
  \begin{align*}
    \expect{T}
    &\leq \frac{\lambda(1+\eta)g(0,1)}{\eta\varphi(1-\rho)}\\
    & < \frac{\lambda(1+\eta)}{\eta\varphi(1-\rho)}\left(\frac{\eta m\lambda^2}{1+\eta}+\varphi\sum_{i=1}^{m-1}\frac{4+\lambda z_i}{\lambda z_i}\right)\\
    & < \frac{\lambda}{(1-\rho)}\left(\frac{m\lambda^2}{\varphi}+\frac{1+\eta}{\eta}\sum_{i=1}^{m-1}\left(\frac{4}{\lambda z_i}+1\right)\right)\\
    \intertext{using Eq. (\ref{lambda:eq:eta-bound}) and $\varphi:=\delta(1-\delta')$}
    & < \frac{\lambda}{(1-\rho)}\left(\frac{m\lambda^2}{\delta(1-\delta')}+\left(\frac{11\lambda}{3\delta}+1\right)\sum_{i=1}^{m-1}\left(\frac{4}{\lambda z_i}+1\right)\right)\\    
    \intertext{noting that $1<1/\delta\leq\lambda/(3\delta)$ since 
    $\lambda> 3$ by (G3b)}
    & < \frac{\lambda}{(1-\rho)}\left(\frac{m\lambda^2}{\delta(1-\delta')}+\left(\frac{4\lambda}{\delta}\right)\sum_{i=1}^{m-1}\left(\frac{4}{\lambda z_i}+1\right)\right)\\    
    & = \frac{\lambda}{(1-\rho)}\left(\frac{m\lambda^2}{\delta(1-\delta')}+\frac{4\lambda(m-1)}{\delta}+\left(\frac{16}{\delta}\right)\sum_{i=1}^{m-1}\frac{1}{z_i}\right)\\        
    & < \frac{\lambda}{(1-\rho)}\left(\frac{m\lambda^2}{\delta(1-\delta')}\left(1+\frac{4}{\lambda}\right)+\left(\frac{16}{\delta}\right)\sum_{i=1}^{m-1}\frac{1}{z_i}\right)\\        
    \intertext{using the the claim 
    $\lambda >4(1-\delta')/\delta'$ which will be shown in (\ref{lambda:eq:claim2})}
    & < \frac{\lambda}{(1-\rho)\delta}\left(\frac{m\lambda^2}{(1-\delta')^2}+16\sum_{i=1}^{m-1}\frac{1}{z_i}\right)\\    
    \intertext{choosing $\rho:=1-1/\sqrt{c''}$ and $\delta':=\rho/2$
    so that $c''=(1-\rho)^{-2}>(1-\rho)^{-1}(1-\delta')^{-2}$ give}
    & < \frac{c''\lambda}{\delta}\left(m\lambda^2+16\sum_{i=1}^{m-1}\frac{1}{z_i}\right).
  \end{align*}

It remains to prove two claims about inequalities involving the
population size $\lambda$.
First, using the upper and lower bounds in
(\ref{lambda:eq:eta-bound}), we have
   \begin{align}
     \frac{\rho\eta\varphi}{1+\eta}
     & >  \frac{\rho\eta\varphi}{2}
       > \frac{3\rho\delta\varphi}{22\lambda}
       = \frac{3\rho\delta^2(1-\delta')}{22\lambda}
       > \frac{3\rho\delta^2(1-\rho)}{22\lambda}
       = \frac{3\delta^2(1-1/\sqrt{c''})}{22\lambda\sqrt{c''}}
       >
       \frac{4}{\lambda^2},\label{lambda:eq:claim1}
   \end{align}
   where the last inequality follows from (G3b).
Secondly, 
   (G3b) and the assumptions $\delta\in(0,1]$ and $c''>1$, imply the
lower bound
   \begin{align}
     \lambda
     \geq \frac{30\sqrt{c''}}{\delta^2(1-\frac{1}{\sqrt{c''}})}
     > \frac{8}{1-\frac{1}{\sqrt{c''}}}
     = \frac{8}{\rho}
     = \frac{4}{\delta'}
     > \frac{4(1-\delta')}{\delta'}.\label{lambda:eq:claim2}
   \end{align}
\end{proof}

\end{document}